\newcommand{\R}{\mathbb{R}}
\newcommand{\E}{\mathbf{E}}
\newcommand{\Var}{{\mathsf{var}}} %{\rm Var} %
\newcommand{\I}{\mathbf{1}}
\DeclareMathOperator*{\argmax}{arg\,max}
\def\cM{{\mathcal M}}
\def\sA{{\mathsf A}}
\def\sM{{\mathsf M}}
\def\sU{{\mathsf U}}
\def\sV{{\mathsf V}}
\def\sW{{\mathsf W}}
\def\sX{{\mathsf X}}
\def\sY{{\mathsf Y}}
\def\sZ{{\mathsf Z}}
\def\rd{{\rm d}}
\def\PP{{\mathbb P}}
\def\deq{:=}
\def\wh#1{{\widehat{#1}}}
\def\eps{\varepsilon}
\newtheorem{definition}{Definition}%[section]
\newtheorem{theorem}{Theorem}%[section]
\newtheorem{lemma}{Lemma}%[section]
\newtheorem{proposition}{Proposition}%[section]
\newtheorem{corollary}{Corollary}%[section]
\newcommand\blfootnote[1]{%
	\begingroup
	\renewcommand\thefootnote{}\footnote{#1}%
	\addtocounter{footnote}{-1}%
	\endgroup
}
\title{Minimum Excess Risk in Bayesian Learning}
\begin{document}
	
	\author{Aolin Xu \and Maxim Raginsky}
	\date{}
	\maketitle
	
	\begin{abstract}
		We analyze the best achievable performance of Bayesian learning under generative models by defining and upper-bounding the minimum excess risk (MER): the gap between the minimum expected loss attainable by learning from data and the minimum expected loss that could be achieved if the model realization were known. 
		The definition of MER provides a principled way to define different notions of uncertainties in Bayesian learning, including the aleatoric uncertainty and the minimum epistemic uncertainty.
		Two methods for deriving upper bounds for the MER are presented. The first method, generally suitable for Bayesian learning with a parametric generative model, upper-bounds the MER by the conditional mutual information between the model parameters and the quantity being predicted given the observed data. It allows us to quantify the rate at which the MER decays to zero as more data becomes available. Under realizable models, this method also relates the MER to the richness of the generative function class, notably the VC dimension in binary classification. The second method, particularly suitable for Bayesian learning with a parametric predictive model, relates the MER to the minimum estimation error of the model parameters from data. It explicitly shows how the uncertainty in model parameter estimation translates to the MER and to the final prediction uncertainty.
		We also extend the definition and analysis of MER to the setting with multiple model families and the setting with nonparametric models.
		Along the discussions we draw some comparisons between the MER in Bayesian learning and the excess risk in frequentist learning.
	\end{abstract}
	
	%	\begin{keywords}%
	%		List of keywords%
	%	\end{keywords}
	
	\blfootnote{xuaolin@gmail.com, maxim@illinois.edu}
	
	\setcounter{tocdepth}{4} 
	\tableofcontents 
	
	\section{Introduction}
	
	Bayesian learning under generative models has been gaining considerable attention in recent years as an alternative to the frequentist learning.
	In the Bayesian setting, the observed data $Z^n = ((X_1,Y_1),\ldots,(X_n,Y_n))$ is modeled as conditionally i.i.d.\ samples generated from a probabilistic model given the model realization $P_{Z|W}$, while the model is treated either as a random element of some parametric model family drawn according to a prior distribution of the model parameters $W$, or as a nonparametric random process~\cite{BishopBook06}. The task of Bayesian learning is to predict a quantity of interest $Y$ based on the observed data $(X,Z^n)$ and the structure of the model, while the quality of prediction can be assessed by the expected loss with respect to some loss function $\E[\ell(Y,\psi(X,Z^n))]$.
	Although Bayesian learning often relies on posterior sampling or approximation techniques \cite{MCMC_handbook,viBlei17}
	%such as Markov chain Monte Carlo methods \cite{MacKay_IT_book} and variational inference \cite{viBlei17},
	and hence has much higher computational complexity than its frequentist counterpart, %learning based on point estimation, 
	the Bayesian viewpoint has many attractive features, e.g., reducing overfitting \cite{Neal_BNN}, quantifying uncertainty in making predictions \cite{Blundell15_w_uncert, Gal2016Dropout,GP_book}, enabling model compression \cite{Bcompres17}, etc. 
	%Despite it often relies on approximated sampling techniques, such as Markov chain Monte Carlo methods \cite{MacKay_IT_book} and variational inference \cite{viBlei17}, whose computational complexity may be much higher than learning  based on point estimation,
	In contrast with the growing attention to the algorithmic side of Bayesian learning, its performance analysis
	is relatively scarce compared to the volume of literature on the theoretical analysis of frequentist learning.
	In this paper, we set aside the computational issues in Bayesian learning and focus on analyzing its best achievable performance under the generative model with respect to general loss functions.
	
	\subsection{Overview of the presentation}
	In Section~\ref{sec:formulation_single}, we define the \textit{minimum excess risk} (MER) in Bayesian learning as the gap between the Bayes risk $R_\ell(Y|X,Z^n)$, defined as the minimum expected loss attainable by learning from the data, and its fundamental limit $R_\ell(Y|X,W)$, defined as the minimum expected loss that would be achieved if the model parameters were known. 
	The MER is an algorithm-independent quantity that captures the uncertainty arising from the lack of knowledge of the underlying model parameters, commonly known as \textit{epistemic uncertainty}. Its value and rate of convergence to zero reflect the difficulty of the learning problem. 
	The decomposition of the Bayes risk into its fundamental limit and MER also provides rigorous definitions of the aleatoric uncertainty and the minimum epistemic uncertainty in Bayesian learning, the quantification of which has become an important research topic in recent years \cite{NIPS2017_7141,Depeweg2018Decomp,Hllermeier2019Aleatoric}.
	To the authors' knowledge, the general definition of MER is new, and has not been systematically studied before.
	
	We then present two methodologies for deriving upper bounds on the MER. We first show in Section~\ref{sec:ub_mi} that, under a generic parametric generative model and for a wide range of loss functions, the MER can be upper-bounded in terms of the conditional mutual information between the model parameters and the quantity being predicted given the observed data, $I(W;Y|X,Z^n)$. This leads to asymptotic upper bounds on the MER that scale as $O({d}/{n})$ or $O({\sqrt{{d}/{n}}})$ depending on the loss function, where $d$ is the dimension of the parameter space and $n$ is the data size.
	It also reveals an MER-information relationship in Bayesian learning, echoing the generalization-information relationship in frequentist learning \cite{XuRaginsky17}. Under realizable models, it is shown that for any bounded loss function, the MER for binary classification scales as $O(d/n)$, where $d$ is the VC dimension of the generative function class.
	Next, in Section~\ref{sec:MER_W_est_error}, we explore two alternative methods for bounding the MER. One relies on the smoothness of the decision rule in the model parameters, while the other relies on the smoothness of the minimum expected loss as a functional of the predictive model.
	%under the parametric predictive modeling framework, we upper-bound the MER in terms of the statistical distance between the posterior predictive distribution and the true predictive model. One way for doing this with respect to general loss functions is by exploiting the smoothness of the Bayes risk, or the generalized entropy, as a functional of the underlying model. 
	The resulting upper bounds single out the dependence of the MER on the minimum achievable estimation error of the model parameters from the data, e.g., on the minimum mean square error (MMSE) of the estimated model parameters $R_2(W|X,Z^n)$.
	It explicitly shows how the difficulty of model parameter estimation translates into the difficulty of prediction due to the model uncertainty.
	
	The analysis of the MER in the single model family setting can be extended to the setting with multiple model families.
	The definition of MER can also be extended to the setting with nonparametric generative models, such as Gaussian processes, and the analysis of MER based on conditional mutual information carries over to this setting. These extensions are briefly discussed in Section~\ref{sec:extensions}.
	We close by summarizing the results and making some comparisons between the MER in Bayesian learning and the excess risk in frequentist learning in Section~\ref{sec:discuss}.
	% The proofs of all results are given in the appendix.

	\subsection{Relation to existing works}
	\subsubsection{Accumulated excess risk for log loss}
	The closest connection between this work and prior literature is the MER for the logarithmic (log) loss defined in this paper and the \textit{accumulated} excess risks for the log loss defined in Bayesian universal source coding \cite{Davisson1973}, Bayesian sequential prediction \cite{Merhav98}, Bayesian density estimation \cite{haussler97}, and Bayesian supervised learning \cite{Baxter1997ABT}, all of which turn out to be the mutual information between the model parameters and the observed data, and are achieved by the posterior predictive distribution as a soft predictor.
	The only work where more general loss functions is considered is the study of sequential prediction in \cite{Merhav98}, where an upper bound on the accumulated excess risk for bounded loss functions is derived.
	Our definition of MER goes beyond the log loss to general loss functions, which can be unbounded, and the MER in general is achieved not necessarily by the posterior predictive distribution, but by some hard predictor according to the loss function.
	In Section~\ref{sec:ub_mi} we show that the MER for the log loss is nevertheless an important quantity, as it can be used to upper-bound the MER for many other loss functions.
	Most of the above works, with the exception of \cite{Baxter1997ABT}, considered only unsupervised learning, while our results hold for both supervised and unsupervised learning.
	In addition, the MER defined in this work is the \textit{instantaneous} excess risk, instead of the accumulated risk studied in above works, thus is amenable to more refined analyses.
	Another closely related work that considered both supervised learning and instantaneous risk is \cite{HausslerVC94}, where the Bayes risk of binary classification with the zero-one loss is derived by relating it to the accumulated log loss, and is further related to the VC dimension of the generative function class. As only realizable models are considered in \cite{HausslerVC94}, the Bayes risk there is equal to the MER. In Section~\ref{sec:MER_rlz}, we also study the MER under realizable models, but our results go beyond binary classification and the zero-one loss.

	\subsubsection{Convergence of posterior distribution}
	A classical frequentist analysis of Bayesian inference is the convergence of the \emph{posterior parameter distribution} to the true model parameters, assuming the data is sampled from some \emph{fixed} model with the true parameters \cite{LeCamYang2000,ghosal2000,ghosal2007}. This analysis has recently been extended to deep neural network models \cite{NIPS2018_7372}.
	The convergence of the \emph{posterior predictive distribution} has also been studied under the same assumption \cite{Barron98}.
	The main difference between these works and ours is the assumption on the data distribution. In our work, the underlying data distribution considered in the performance analysis stays the same as the generative model based on which the optimal predictor, or the learning algorithm, is derived. In other words, the model parameters are assumed to be randomly drawn from the prior, and the data samples are drawn from the model given the model parameters.
	In addition, rather than the convergence of the posterior of the model parameters, we are interested in the accuracy of the predicted quantity of interest.
	In Section~\ref{sec:MER_W_est_error} we reveal how this accuracy explicitly depends on the accuracy of the model parameter estimation, by studying the expected deviation of the \textit{posterior predictive distribution} from the random true model.

	\subsubsection{PAC-Bayes}
	Another loosely related line of work in statistical learning is the PAC-Bayes framework in the frequentist setting \cite{DM_PACB03, STW_pac_Bayes97, Zhang_it_est06}
	%\cite{Catoni_PAC-Bayes, Guedj_PACB19} 
	and its extension as the Bayes mixture model \cite{BayeMixture03}.
	%the analysis of which draws heavily on the convergence of the posterior distribution mentioned above \cite{Guedj_PACB19}. 
	The main difference between the Bayesian setting considered here and the PAC-Bayes framework is again the underlying data distribution. For the former, the data distribution is restricted to a parametric or nonparametric family of generative models with the data samples being \emph{conditionally} i.i.d.\ given the model realization, and there is virtually no restriction on candidate predictors. For the latter, the data samples are drawn \textit{unconditionally} i.i.d.\ according to a completely unknown distribution, and the hypothesized Bayes-like update takes place in a \textit{hypothesis space} consisting of admissible predictors only. The excess risk studied in this paper is thus not directly related to the generalization error or excess risk in the PAC-Bayes method.
	Nevertheless, the MER-information relationship in Theorem~\ref{thm:regression_genloss_MI} is an interesting analogue of the generalization-information relationship in the frequentist setting \cite[Theorem~1]{XuRaginsky17} that leads to an information-theoretic derivation of the PAC-Bayes algorithm.
	
	\subsection{A note on notation}
	Throughout the paper, random variables are denoted by uppercase letters and their realizations are in the corresponding lowercase letters. To keep the notation uncluttered, we may use $K_{U|v}$, $P_{U|v}$ and $\E[U|v]$ respectively to denote the probability transition kernel $K_{U|V=v}$, the conditional distribution $P_{U|V=v}$ and the conditional expectation $\E[U|V=v]$. %, and the Shannon entropy $H(U|V=v)$ of $P_{U|V=v}$.
	When the conditioning variables are written in uppercase letters, these quantities are random, and expectations can be taken with respect to the conditioning variables.
	Throughout the paper, $D(\cdot,\cdot)$ denotes a generic statistical distance, while the KL divergence is denoted by $D_{\rm KL}(\cdot \| \cdot)$.
	%All logarithms are natural log.
	All probability spaces considered in this paper are Borel spaces, and all functions are measurable functions.
	We use natural logarithms throughout the paper.

	\section{Model and definitions}\label{sec:formulation_single}
	\subsection{Bayesian learning under parametric generative model}
	The basic task in supervised learning is to construct an accurate predictor of $Y$, a quantify of interest, given an observation $X$, where the knowledge of the joint distribution of $X$ and $Y$ is vague but can be inferred from a historical dataset $((X_1, Y_1),\ldots,(X_n, Y_n))$.	
	In the \emph{model-based learning} framework, a.k.a.\ learning under a \emph{generative model}, the joint distribution of $X$ and $Y$ is assumed to be an element of a known model family.
	The model family can be either parametric or nonparametric. 
	We focus on the parametric case in this work, and defer a brief discussion on the nonparametric case to Section~\ref{sec:nonpara}.
	In the case of parametric modeling, the model family is a collection of parametrized distributions ${\mathcal M}=\{P_{X,Y| w}, w\in\sW\}$, where $w$ represents the vector of unknown model parameters belonging to some space $\sW$.
	Under the \emph{Bayesian formulation}, the vector of model parameters $W$ is itself treated as a random quantity with a prior distribution $P_W$, while the data samples are conditionally i.i.d.\ given $W$. %Let $\sZ \deq \sX \times \sY$.
	Formally, the model parameters $W$, the data samples $Z^n \deq (Z_1,\ldots,Z_n)$ with $Z_i \deq (X_i,Y_i)$, $i =1,\ldots,n$, and the pair $Z = (X,Y)$ consisting of the fresh observation $X$ and the quantity $Y$ to be predicted are assumed to be generated according to the joint distribution
	\begin{align}\label{eq:joint_dist}
		P_{W, Z^n, Z} = P_W  \Big(\prod\limits_{i=1}^n P_{Z_i|W}\Big)  P_{Z|W},
	\end{align}
	where $P_{Z_i|W} = P_{Z|W}$ for each $i$.
	As an example of the above model, the \emph{predictive modeling} framework, a.k.a.\ {probabilistic discriminative model} \cite{BishopBook06}, further assumes that $P_{Z|W}$ factors as $P_{Z|W} = P_{X|W} K_{Y|X,W}$, with some probability transition kernel $K_{Y|X,W}$ directly describing the true predictive distribution of the quantity of interest given the observation and model parameters. It is often further assumed that $X$ is independent of $W$ under the predictive modeling framework.
	Note that the above models and the following definitions encompass the unsupervised learning problem as well, where one just ignores the observations $(X^n,X)$ so that $Z_i=Y_i$ and $Z=Y$.
	
	Under the generative model \eqref{eq:joint_dist}, the \textit{Bayesian learning} problem can be phrased as a Bayes decision problem of predicting $Y$ based on $X$ and the labeled observations $Z^n$. 
	%While the underlying model parameter $W$ is not available for observation, it is assumed that the probability law \eqref{eq:joint_dist} is known.
	Given an action space $\sA$ and a loss function $\ell : \sY \times \sA \rightarrow \R$, 	
	a {\em decision rule} $\psi: \sX \times \sZ^n \rightarrow \sA$ that maps observations to an action is sought to make the expected loss $\E[\ell(Y,\psi(X,Z^n))]$ small.
	%Randomized decision rules are allowed, provided the expectation is also taken over any randomness used by the decision rule.	
	%Often we are interested in knowing the performance of the optimal decision rule for a Bayesian learning problem, in order to either evaluate a given decision rule by comparing its performance against the optimal one, or assess the difficulty of the learning problem.
	A decision rule that minimizes the expected loss among all decision rules is called a \emph{Bayes decision rule}.
	The corresponding minimum expected loss is defined as the \emph{Bayes risk} in Bayesian learning:
	\begin{definition}
		In Bayesian learning, the Bayes risk with respect to a loss function $\ell$ is defined as
		\begin{align}\label{eq:RB_def}
			R_\ell(Y|X,Z^n) \deq \inf_{\psi: \sX \times \sZ^n \rightarrow \sA} \E[\ell(Y, \psi(X,Z^n))] ,
		\end{align}
		where the infimum is taken over all decision rules such that the above expectation is defined.
	\end{definition}
	
	%\begin{remark}\label{rm:gen_ent}
	
	\subsection{A data processing inequality for Bayes risk}
	To better understand the definition of $R_\ell(Y|X,Z^n)$, we give a brief review of the general definition of the Bayes risk and prove a useful property of it.
	Given a random element $Y$ of $\sY$, the quantity
	\begin{align}\label{eq:def_Brisk}
		R_\ell(Y) \deq \inf_{a \in \sA} \E[\ell(Y,a)]
	\end{align}
	is known as the \textit{Bayes envelope} \cite{Merhav98} or the \textit{generalized entropy} \cite{gunwald2004} of $Y$. Given a random element $V$ of some space $\sV$ jointly distributed with $Y$, the general definition of the \emph{Bayes risk}
	\begin{align}\label{eq:def_cond_Brisk}
		R_\ell(Y|V)\deq \inf_{\psi: \sV \rightarrow \sA}\E[\ell(Y,\psi(V))]
	\end{align}
	is the minimum expected loss of predicting $Y$ given $V$.
	It can be expressed as
	the expectation of the \textit{conditional Bayes envelope}
	$
	R_\ell(Y|V=v) \deq \inf_{a \in \sA} \E[\ell(Y,a)|V=v]
	$
	with respect to $V$, as
	$
	R_\ell(Y|V) = \int_\sV P_V(\dif v) R_\ell(Y|V=v) .
	$
	The Bayes risk $R_\ell(Y|V)$ can thus be viewed as a \textit{generalized conditional entropy} of $Y$ given $V$ \cite{DeGroot62,FarniaTse16}. 
	The following lemma states that the Bayes risk satisfies a \emph{data processing inequality}.
	\begin{lemma}\label{lm:DPI_RB_gen}
		Suppose the random variables $U$, $V$ and $Y$ form a Markov chain $U-V-Y$; in other words, $Y$ and $U$ are conditionally independent given $V$. Then, for any loss function $\ell$, the Bayes risk of predicting $Y$ from $U$ is at least as large as the Bayes risk of predicting $Y$ from $V$, i.e.,
		\begin{align}
			R_{\ell}(Y|U) \ge R_{\ell}(Y|V) .
		\end{align}
	\end{lemma}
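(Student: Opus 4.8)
The plan is to show that no decision rule acting on $U$ can outperform the optimal prediction of $Y$ from $V$, by transporting any $U$-based rule through the transition kernel from $V$ to $U$ and comparing it to the conditional Bayes envelope of $Y$ given $V$. Fix an arbitrary admissible decision rule $\psi:\sU\to\sA$, i.e.\ one for which $\E[\ell(Y,\psi(U))]$ is defined. Since all spaces are Borel, regular conditional distributions $P_{U|V=v}$ and $P_{Y|V=v}$ exist, and the Markov property $U-V-Y$ states exactly that, conditioned on $V=v$, the pair $(U,Y)$ has the product law $P_{U|V=v}\otimes P_{Y|V=v}$. Hence, by Fubini's theorem (applied as usual to the positive and negative parts of $\ell$),
\begin{align}
\E[\ell(Y,\psi(U))\mid V=v]
= \int_\sU\!\Big(\int_\sY \ell(y,\psi(u))\,P_{Y|V=v}(\dif y)\Big)P_{U|V=v}(\dif u).
\end{align}

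For each fixed $u\in\sU$ the inner integral equals $\E[\ell(Y,\psi(u))\mid V=v]\ge \inf_{a\in\sA}\E[\ell(Y,a)\mid V=v]=R_\ell(Y|V=v)$, so the entire right-hand side is at least $R_\ell(Y|V=v)$. Integrating this bound over $v$ with respect to $P_V$ and invoking the identity $R_\ell(Y|V)=\int_\sV P_V(\dif v)\,R_\ell(Y|V=v)$ recalled above yields $\E[\ell(Y,\psi(U))]\ge R_\ell(Y|V)$. Since $\psi$ was an arbitrary admissible decision rule, taking the infimum over all $\psi:\sU\to\sA$ gives $R_\ell(Y|U)\ge R_\ell(Y|V)$, as claimed.

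A shorter, more probabilistic rendering of the same idea: given $\psi$, define a randomized decision rule based on $V$ that, on observing $V$, draws $U'\sim P_{U|V}$ and outputs $\psi(U')$. By the Markov property this rule has expected loss exactly $\E[\ell(Y,\psi(U))]$, while its expected loss is no smaller than $R_\ell(Y|V)$, because allowing randomized actions cannot lower the conditional Bayes envelope $\inf_{a\in\sA}\E[\ell(Y,a)\mid V=v]$ at any $v$.

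There is no genuine difficulty here beyond measure-theoretic hygiene: one needs the existence of regular conditional probabilities, the $P_V$-measurability of $v\mapsto R_\ell(Y|V=v)$ (so that the integral in the recalled identity is meaningful), and the applicability of Fubini's theorem. It is precisely the standing assumption that all probability spaces are Borel, together with the restriction to admissible $\psi$, that secures these points.
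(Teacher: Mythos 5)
Your proof is correct, and its main argument takes a genuinely different route from the paper's. You fix an arbitrary rule $\psi:\sU\to\sA$, disintegrate over $V=v$, use the conditional product structure $P_{U,Y|V=v}=P_{U|V=v}\otimes P_{Y|V=v}$ with Fubini, bound the inner integral pointwise by the conditional Bayes envelope $R_\ell(Y|V=v)$, integrate, and take the infimum over $\psi$ at the end. The paper instead starts from a Bayes-optimal rule $\psi$ for $U$ and simulates $U$ from $V$: it invokes the Borel-space representation lemma to write $U'=f(V,T)$ with $T$ uniform and independent, and then uses $R_\ell(Y|V)\le\inf_t\E[\ell(Y,\psi(f(V,t)))]\le\E[\ell(Y,\psi(f(V,T)))]=\E[\ell(Y,\psi(U))]$. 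Your route buys two things: you never need the existence of an exactly optimal rule for $U$ (the paper's ``let $\psi$ be a Bayes decision rule'' strictly requires an attained infimum or an $\eps$-optimal patch), and you avoid the explicit noise-outsourcing construction. What it costs is that your final step leans on the identity $R_\ell(Y|V)=\int_\sV P_V(\dif v)\,R_\ell(Y|V=v)$ in the nontrivial direction, namely that the integral of the pointwise envelope is at least $R_\ell(Y|V)$; this is where a measurable selection of near-optimal actions is hiding, and it is exactly the step the paper's argument sidesteps by working only with the definition of $R_\ell(Y|V)$ as an infimum over measurable rules. Since the paper states that identity in the text preceding the lemma, your reliance on it is legitimate, but it is worth being aware that the Borel-space assumption is doing different work in the two proofs: measurable selection for you, the representation $U'=f(V,T)$ for the paper. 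Your closing ``shorter rendering'' is essentially the paper's argument in sketch form.
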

	\begin{proof}
		Let $\psi$ be a Bayes decision rule for predicting $Y$ from $U$. Upon observing $V$, a random variable $U'$ can be sampled from $P_{U|V}$, conditionally independent of $(U,Y)$ given $V$. Then $\psi(U')$ serves as a randomized prediction of $Y$ from $V$.
			As all probability spaces under consideration are Borel spaces, the sampling of $U'$ conditional on $V$ can be realized by a function $f:\sV\times [0,1] \rightarrow \sU$ of $V$ and an independent random variable $T$ uniformly distributed on $[0,1]$, such that $P_{f(V,T)|V} = P_{U|V}$ \cite[Lemma~3.22]{Kallenberg}.
			We have
			\begin{align}
				R_\ell(Y|V) 
				&\le \inf_{t\in[0,1]} \E[\ell(Y, \psi(f(V,t)))] \label{eq:pf_Rexcess_nneg_6} \\
				&\le \E[\ell(Y, \psi(f(V,T)))] \label{eq:pf_Rexcess_nneg_0} \\
				&= \E[\ell(Y, \psi(U'))] \label{eq:pf_Rexcess_nneg_1} \\
				&= \E[\ell(Y, \psi(U))] \label{eq:pf_Rexcess_nneg_2} \\
				&= R_\ell(Y|U) \label{eq:pf_Rexcess_nneg_3}
			\end{align}
			where \eqref{eq:pf_Rexcess_nneg_6} is due to the definition of $R_\ell(Y|V)$ and the fact that $\psi(f(\cdot,t))$ is a map from $\sV$ to $\sY$ for each $t\in[0,1]$; \eqref{eq:pf_Rexcess_nneg_0} follows from the independence between $T$ and $(V,Y)$;
			\eqref{eq:pf_Rexcess_nneg_2} follows from the fact that $P_{U'|V} = P_{U|V,Y}$ due to the Markov chain $U-V-Y$, hence $P_{U',V,Y} = P_{U,V,Y}$; and \eqref{eq:pf_Rexcess_nneg_3} follows from the definition of $\psi$. 
	\end{proof}
	In view of the definition of $\mu$-entropy in \eqref{eq:def_mu_ent} and \eqref{eq:def_cond_mu_ent} in Section~\ref{sec:MI_log}, the classic data processing inequality for mutual information stating that $I(U;Y)\le I(V;Y)$ in a Markov chain $U-V-Y$~\cite{Cover_book} can be derived from Lemma~\ref{lm:DPI_RB_gen} applied to the log loss.
	More importantly, Lemma~\ref{lm:DPI_RB_gen} extends the \emph{value of information principle} in Bayes decision making~\cite{DeGroot62}, which states that $R_\ell(Y)\ge R_\ell(Y|V)$, as it can be viewed as a special case of Lemma~\ref{lm:DPI_RB_gen} when $U$ is independent of $(V,Y)$. Lemma~\ref{lm:DPI_RB_gen} also extends the \emph{principle of total evidence} ~\cite{Good_pte}, a.k.a.\ the \emph{value of knowledge theorem}~\cite{gen_learn_Huttegger}, which states that $R_\ell(Y|V_1)\ge R_\ell(Y|V_1,V_2)$ for arbitrary random variables $V_1$ and $V_2$ jointly distributed with $Y$. While the original argument in \cite{Good_pte} overlooked the randomness of $V_1$, this principle can be rigorously justified by Lemma~\ref{lm:DPI_RB_gen} as $V_1-(V_1,V_2)-Y$ always form a Markov chain.
	It is also apparent from Lemma~\ref{lm:DPI_RB_gen} or its proof that randomizing the decision rule does not help to decrease the expected loss in Bayes decision making, as $(T,V) - V - Y$ form a Markov chain for any independent random variable $T$ to be used in the randomized decision rule.
	
	\begin{comment}
	\textbf{Give these examples later} while the unconditional generalized entropy of $U$ can be defined as $R_\ell(U) = \inf_{a\in\sA} \E[\ell(U,a)]$ \cite{gunwald2004}. 
	For example, when the action space $\sA$ is the family of distributions $Q$ on $\sU$, the Bayes decision rule for the log loss $\ell(u, Q) = -\log Q(u)$ is the posterior distribution $P_{U|V}$, and $R_{\log}(U|V)$ is the conditional Shannon entropy $H(U|V)$ when $\sU$ is discrete or the conditional differential entropy $h(U|V)$ when $\sU$ is continuous \cite{Cover_book}; while $R_{\log}(U)$ is $H(U)$ or $h(U)$.
	As another example, when $\sU=\sA=\R^p$, the Bayes decision rule for the quadratic loss $\ell(u,a) = \sum_{j=1}^p(u_j-a_j)^2$ is the conditional expectation $\E[U|V]$ and $R_2(U|V)$ is the minimum mean square error (MMSE) of estimating $U$ from $V$; while $R_2(U)=\sum_{j=1}^p \Var[U_j]$.
	Additionally, when $\sU=\sA$ are finite, the Bayes decision rule for the 0-1 loss $\ell(u,a) = \I\{u\neq a\}$ is the maximum a-posteriori (MAP) rule and $R_{01}(U|V)$ is the minimum error probability of estimating $U$ from $V$; while $R_{01}(U)=1-\max_{u\in\sU}P_U(u)$.
	\end{comment}
	
	\subsection{Definition of minimum excess risk}
	An immediate consequence of Lemma~\ref{lm:DPI_RB_gen} in Bayesian learning is that the Bayes risk $R_\ell(Y|X,Z^n)$ decreases as the data size $n$ increases, as $(X,Z^n) - (X,Z^{n+1}) - Y$ form a Markov chain. A special case of this result for linear regression with quadratic loss appears in \cite{Qazaz1997}. While $R_\ell(Y|X,Z^n)$ decreases in $n$, it will not necessarily vanish as $n\rightarrow\infty$. We define the \emph{fundamental limit of the Bayes risk} as the minimum expected loss when the model parameters $W$ are known, which is attained by some ``omniscient'' decision rule $\Psi: \sX\times\sW \rightarrow \sA$ that can directly access the model parameters.
	\begin{definition}
		In Bayesian learning, the fundamental limit of the Bayes risk with respect to a loss function $\ell$ is defined as
		\begin{align}\label{eq:RBW_def}
			R_\ell(Y|X,W) = \inf_{\Psi: \sX\times\sW \rightarrow \sA} \E[\ell(Y,\Psi(X,W))] .
		\end{align}
	\end{definition}
	For any feasible decision rule $\psi: \sX \times \sZ^n \rightarrow \sA$, we can define its \emph{excess risk} as the gap between its expected loss $\E[\ell(Y, \psi(X,Z^n))]$ and $R_\ell(Y|X,W)$.
	In this work, our interest is in the gap between the Bayes risk $R_\ell(Y|X,Z^n)$ and its fundamental limit $R_\ell(Y|X,W)$, which is the minimum achievable excess risk among all feasible decision rules:
	\begin{definition}
		The {minimum excess risk} (MER) with respect to a loss function $\ell$ is defined as
		\begin{align}\label{eq:MER_def}
			{\rm MER}_{\ell} \deq R_\ell(Y|X,Z^n) - R_\ell(Y|X,W) .
		\end{align}
	\end{definition}
	%\noindent 
	The MER defined above is an algorithm-independent quantity. It quantifies the regret of the best decision rule that has access to data, but not to model parameters, relative to the best ``omniscient" decision rule. It thus reflects the difficulty of the learning problem, which comes from the lack of knownedge of $W$.
	This is better illustrated by decomposing the Bayes risk as
	\begin{align}
		R_\ell(Y|X,Z^n) =  R_\ell(Y|X,W) + {\rm MER}_\ell .
	\end{align}
	If we view the Bayes risk as a measure of the minimum prediction uncertainty, this decomposition allows us to give formal definitions of the ``aleatoric'' uncertainty and the minimum ``epistemic'' uncertainty \cite{NIPS2017_7141}.
	The first term, the fundamental limit of the Bayes risk, can be viewed as the aleatoric part of the minimum prediction uncertainty, which exists even when the model parameters are known.
	The second term, the MER, can be viewed as the epistemic part of the minimum prediction uncertainty, which is due to the lack of knowledge of $W$.
	In \cite{Depeweg2018Decomp}, a decomposition of uncertainty is proposed for the log loss and the quadratic loss, where the epistemic uncertainty is defined as $R_\ell(Y|X)-R_\ell(Y|X,W)$ when expressed by our notation; however, this definition does not take the observed data into consideration, thus does not reflect the intuitive expectation that the epistemic uncertainty should decrease as the data size increases \cite{Hllermeier2019Aleatoric}.
	On the contrary, defining the minimum epistemic uncertainty as the MER has the advantage that it becomes smaller as more data is observed, as asserted by the following result.
	\begin{theorem}\label{prop:DPI_RB}
		For any loss function, ${\rm MER}_\ell$ decreases in the data size $n$, and ${\rm MER}_\ell\ge 0$ for all $n$.
	\end{theorem}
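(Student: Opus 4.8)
The plan is to derive both assertions directly from the data processing inequality for the Bayes risk, Lemma~\ref{lm:DPI_RB_gen}, after observing that the fundamental limit $R_\ell(Y|X,W)$ in \eqref{eq:RBW_def} does not depend on $n$. Hence it suffices to show (i) $R_\ell(Y|X,Z^n) \ge R_\ell(Y|X,W)$ for every $n$, and (ii) $R_\ell(Y|X,Z^{n+1}) \le R_\ell(Y|X,Z^n)$ for every $n$. Claim (i) is exactly ${\rm MER}_\ell \ge 0$, while subtracting the common, $n$-free term $R_\ell(Y|X,W)$ from (ii) yields that ${\rm MER}_\ell$ is nonincreasing in $n$.

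For (i), I would first verify that $(X,Z^n) - (X,W) - Y$ is a Markov chain. From the factorization \eqref{eq:joint_dist}, conditionally on $W$ the block $Z^n$ and the pair $Z = (X,Y)$ are independent; consequently, conditionally on $(X,W)$, the variable $Y$ is independent of $Z^n$, and therefore also of $(X,Z^n)$ (the shared copy of $X$ on both sides being harmless). Applying Lemma~\ref{lm:DPI_RB_gen} with $U = (X,Z^n)$ and $V = (X,W)$ gives $R_\ell(Y|X,Z^n) = R_\ell(Y|U) \ge R_\ell(Y|V) = R_\ell(Y|X,W)$, where the last equality is just the definition \eqref{eq:RBW_def}, with decision rules ranging over maps $\sX\times\sW\to\sA$. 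This is precisely ${\rm MER}_\ell \ge 0$.

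For (ii), note that $(X,Z^n)$ is a deterministic (coordinate-projection) function of $(X,Z^{n+1})$, so $(X,Z^n) - (X,Z^{n+1}) - Y$ is automatically a Markov chain; Lemma~\ref{lm:DPI_RB_gen} with $U=(X,Z^n)$ and $V=(X,Z^{n+1})$ then gives $R_\ell(Y|X,Z^{n+1}) \le R_\ell(Y|X,Z^n)$, as already noted immediately below Lemma~\ref{lm:DPI_RB_gen}. Subtracting $R_\ell(Y|X,W)$ from both sides finishes the argument.

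Since the substantive work is done by Lemma~\ref{lm:DPI_RB_gen}, I do not expect a real obstacle; the only points needing care are bookkeeping. These are: checking the conditional-independence structure of the first Markov chain from \eqref{eq:joint_dist} (where $X$ appears on both sides), matching the decision-rule classes in the definitions of $R_\ell(Y|X,Z^n)$ and $R_\ell(Y|X,W)$ with the roles of $R_\ell(Y|U)$ and $R_\ell(Y|V)$ in the lemma, and handling the mild measure-theoretic caveats (infima restricted to rules for which the expectation is defined, and possibly non-attained infima) exactly as in the proof of Lemma~\ref{lm:DPI_RB_gen}, passing to near-optimal rules if necessary.
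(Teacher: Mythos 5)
Your proposal is correct and follows essentially the same route as the paper: nonnegativity from Lemma~\ref{lm:DPI_RB_gen} applied to the Markov chain $(X,Z^n)-(X,W)-Y$, and monotonicity from the same lemma applied to $(X,Z^n)-(X,Z^{n+1})-Y$ together with the fact that $R_\ell(Y|X,W)$ does not depend on $n$. The extra bookkeeping you flag (verifying the conditional independence from \eqref{eq:joint_dist} and matching decision-rule classes) is a fair elaboration of steps the paper leaves implicit.
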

	\begin{proof}
		The claim that ${\rm MER}_\ell$ decreases in $n$ is due to the previously justified fact that $R_\ell(Y|X,Z^n)$ decreases in $n$ as a consequence of Lemma~\ref{lm:DPI_RB_gen}. The claim that ${\rm MER}_\ell\ge 0$ is due to the Markov chain $(X,Z^n)-(X,W)-Y$ and Lemma~\ref{lm:DPI_RB_gen}.
	\end{proof}
	Intuitively, we expect that ${\rm MER}_\ell \downarrow 0$ as $n\rightarrow\infty$. However, except for the log loss, there are few results in the literature regarding this convergence in the general case, or regarding how the MER depends on the estimation error of the model parameters.
	In the following two sections, we use different methods to derive upper bounds on MER for general loss functions. We show that, in many cases, the MER can be upper-bounded either in terms of the conditional mutual information $I(W;Y|X,Z^n)$, or in terms of the minimum achievable estimation error of $W$ from $(X,Z^n)$.
	%, e.g., the MMSE of estimating $W$ from $Z^n$. 
	These results reflect how the MER depends on the joint distribution in \eqref{eq:joint_dist}, in particular on $P_{Z|W}$ and $P_W$, as well as on the loss function and the data size.
	%In Section~\ref{sec:multi_model}, we extend the analysis to the case with multiple models.

	\section{Upper bounds via conditional mutual information}\label{sec:ub_mi}
	The first method for upper-bounding the MER is to relate it to the conditional mutual information $I(W ; Y| X, Z^n)$, which can be further bounded by $\frac{1}{n}I(W;Y^n|X^n)$ or $\frac{1}{n}I(W;Z^n)$. In many cases, it can be shown that the mutual information $I(W;Z^n)$ is sublinear in $n$ \cite{Rissanen84,Clarke_Barron,Cla_Bar94}, which implies that the MER converges to zero as $n \to \infty$.

	\subsection{Logarithmic loss}\label{sec:MI_log}
	We first consider the setting where one makes ``soft'' predictions, such that the action space is the collection of all probability densities $q$ with respect to a common $\sigma$-finite positive measure $\mu$ on $\sY$. The log loss $\ell(y,q) \deq -\log q(y)$ penalizes those densities that assign small probabilities to the outcome $y$.
	%, i.e., measurable functions $q : \sY \to \R_+$, such that $\int_\sY q\dif \mu = 1$. Assume that both the marginal distribution $P_Y$ and all relevant posterior distributions $P_{Y|V=v}$ are absolutely continuous with respect to to $\mu$; thus, the corresponding densities are elements of $\sA$.
	Based on the definitions in \eqref{eq:def_Brisk} and \eqref{eq:def_cond_Brisk}, it can be shown that
	\begin{align}\label{eq:def_mu_ent}
		R_{\log}(Y) = H_\mu(Y) \deq - \int_\sY p_Y(y)\log p_Y(y)\mu(\dif y) 
	\end{align}
	and
	\begin{align}\label{eq:def_cond_mu_ent}
		R_{\log}(Y|V) = H_\mu(Y|V) \deq -\int_\sV P_V(\dif v)\int_\sY p_{Y|v}(y) \log p_{Y|v}(y)\mu(\dif y) ,
	\end{align}
	which can be viewed as the \emph{$\mu$-entropy} of $Y$ and the \emph{conditional $\mu$-entropy} of $Y$ given $V$, and the optimal actions are the unconditional density $p_Y$ and the conditional density $p_{Y|v}$ with respect to $\mu$, respectively.
	For instance, if $\sY$ is discrete and $\mu$ is the counting measure, then $R_{\log}(Y) = H(Y)$ and $R_{\log}(Y|V) = H(Y|V)$ are the Shannon and the conditional Shannon entropy; while if $\sY = \R^p$ and $\mu$ is the Lebesgue measure, then $R_{\log}(Y) = h(Y)$ and $R_{\log}(Y|V) = h(Y|V)$ are the differential and the conditional differential entropy. (See \cite{Cover_book} for further background on information theory.)
	With these definitions, the MER for the log loss is the difference between two $\mu$-entropy terms:
	\begin{align}
		{\rm MER}_{\log} &=  H_\mu(Y|X,Z^n) - H_\mu(Y|X,W) . \label{eq:class_log_H}
	\end{align}
	A key observation is that ${\rm MER}_{\log}$ can be expressed in terms of the conditional mutual information:
	\begin{lemma}\label{lm:MER_log_mi}
		For the log loss,
		\begin{align}
			{\rm MER}_{\log} = I(W ; Y| X, Z^n)  \label{eq:log_cond_mi} .
		\end{align}
	\end{lemma}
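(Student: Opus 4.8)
The plan is to start from the expression \eqref{eq:class_log_H} for ${\rm MER}_{\log}$ as a difference of two $\mu$-entropies and to recognize that difference as a conditional mutual information. The pivot is to write the conditional mutual information in density form: whenever the relevant regular conditional distributions admit densities with respect to $\mu$, one has
\begin{align*}
  I(W;Y|X,Z^n) = \E\!\left[\log\frac{p_{Y|X,W,Z^n}(Y)}{p_{Y|X,Z^n}(Y)}\right].
\end{align*}
The key input is the Markov chain $(X,Z^n) - (X,W) - Y$, which is exactly the one invoked in the proof of Theorem~\ref{prop:DPI_RB} and follows directly from the factorization \eqref{eq:joint_dist} (given $W$, $Z^n$ is independent of $(X,Y)$, hence $Y$ is conditionally independent of $Z^n$ given $(X,W)$). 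It gives $p_{Y|x,w,z^n} = p_{Y|x,w}$ for $P_{X,W,Z^n}$-almost every $(x,w,z^n)$; in particular $P_{Y|X,W,Z^n}$ has the $\mu$-density $p_{Y|X,W}$, so the numerator above may be replaced by $p_{Y|X,W}(Y)$.

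With that substitution I would split the logarithm:
\begin{align*}
  I(W;Y|X,Z^n) = \E\big[\log p_{Y|X,W}(Y)\big] - \E\big[\log p_{Y|X,Z^n}(Y)\big].
\end{align*}
Each term is an iterated expectation of an inner integral against $\mu$: using $\E_{X,W}\big[\E[\log p_{Y|X,W}(Y)\mid X,W]\big] = \E_{X,W}\!\int p_{Y|X,W}(y)\log p_{Y|X,W}(y)\,\mu(\dif y)$, and comparing with the definition \eqref{eq:def_cond_mu_ent} applied with conditioning variable $(X,W)$, the first term equals $-H_\mu(Y|X,W)$; likewise the second equals $-H_\mu(Y|X,Z^n)$. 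Hence $I(W;Y|X,Z^n) = H_\mu(Y|X,Z^n) - H_\mu(Y|X,W)$, which is precisely ${\rm MER}_{\log}$ by \eqref{eq:class_log_H}.

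The only genuinely delicate point is well-definedness rather than any substantive difficulty: one needs the regular conditional densities $p_{Y|x,w}$ and $p_{Y|x,z^n}$ to exist (guaranteed since the log-loss action space is by construction the set of $\mu$-densities, and all spaces are Borel) and the two $\mu$-entropies in \eqref{eq:class_log_H} to be finite, so that splitting the logarithm and applying Fubini's theorem are legitimate and one never confronts an $\infty-\infty$. This is the same implicit integrability caveat already present in the infimum defining the Bayes risk \eqref{eq:RB_def}; under it, the chain of equalities above is routine and there is no combinatorial or analytic obstacle.
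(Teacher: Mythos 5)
Your argument is correct and matches the paper's proof in substance: both identify $I(W;Y|X,Z^n)$ as $H_\mu(Y|X,Z^n) - H_\mu(Y|X,W,Z^n)$ and then use the conditional independence of $Y$ and $Z^n$ given $(X,W)$, encoded in \eqref{eq:joint_dist}, to replace $H_\mu(Y|X,W,Z^n)$ by $H_\mu(Y|X,W)$. You merely spell out the density-form computation and the integrability caveat that the paper leaves implicit.
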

	\begin{proof}
		The claim follows from the fact that $I(W ; Y| X, Z^n) = H_\mu(Y|X,Z^n) - H_\mu(Y|X,W,Z^n)$ and that $H_\mu(Y|X,W,Z^n) = H_\mu(Y|X,W)$. The second fact is due to the Markov chain $(X,W,Z^n)-(X,W)-Y$ encoded in \eqref{eq:joint_dist} and the definition of the conditional $\mu$-entropy.
	\end{proof}
	Equation~\eqref{eq:log_cond_mi} states that ${\rm MER}_{\log}$ is the average reduction of the uncertainty about $Y$ that comes from the knowledge of $W$, given that $(X,Z^n)$ is already known.
	With this representation, using the conditional independence structure in \eqref{eq:joint_dist} and the data processing inequality in Lemma~\ref{lm:DPI_RB_gen} applied to the $\mu$-entropy, we have:

	\begin{theorem}\label{th:MER_log_I(W;Y^n|X^n)}
		The MER with respect to the log loss can be upper-bounded as
		\begin{align}\label{eq:MER_log_I(W;Y^n|X^n)}
			I(W;Y|X,Z^n) \le \frac{1}{n} I(W;Y^n|X^n) .
		\end{align}
	\end{theorem}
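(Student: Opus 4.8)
The plan is to expand $I(W;Y^n|X^n)$ by the chain rule for mutual information and to show that each of the $n$ resulting conditional mutual informations is at least $I(W;Y|X,Z^n)={\rm MER}_{\log}$. Concretely, I would start from
\[
I(W;Y^n|X^n)=\sum_{k=1}^n I(W;Y_k|X^n,Y^{k-1}),
\]
fix $k$, and rewrite $I(W;Y_k|X^n,Y^{k-1})=H_\mu(Y_k|X^n,Y^{k-1})-H_\mu(Y_k|W,X^n,Y^{k-1})$. By the conditional independence structure in \eqref{eq:joint_dist} — given $W$, the sample $Z_k=(X_k,Y_k)$ is independent of all the other samples — the subtracted term collapses: $H_\mu(Y_k|W,X^n,Y^{k-1})=H_\mu(Y_k|W,X_k)$, and, crucially, the same identity holds if we also put any of the remaining labels $Y_j$, $j\le n$, $j\ne k$, into the conditioning. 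Hence, adding those labels only lowers the first term (conditioning does not increase $\mu$-entropy; this is Lemma~\ref{lm:DPI_RB_gen} applied to the log loss via the trivial Markov chain $(X^n,Y^{k-1})-(X^n,(Y_j)_{j\ne k})-Y_k$), so
\[
I(W;Y_k|X^n,Y^{k-1})\ \ge\ H_\mu\!\big(Y_k\,\big|\,X^n,(Y_j)_{j\le n,\,j\ne k}\big)-H_\mu(Y_k|W,X_k).
\]

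Next I would symmetrize. Since $Z_1,\dots,Z_n,Z$ are conditionally i.i.d.\ given $W$, the tuple $\big(W,X_k,Y_k,(Z_j)_{j\le n,\,j\ne k}\big)$ has the same law as $\big(W,X,Y,Z^{n-1}\big)$, so the right-hand side above equals $H_\mu(Y|X,Z^{n-1})-H_\mu(Y|X,W)=I(W;Y|X,Z^{n-1})$, where the last equality is the identity used in the proof of Lemma~\ref{lm:MER_log_mi}. Finally, $I(W;Y|X,Z^{n-1})\ge I(W;Y|X,Z^n)$ because ${\rm MER}_{\log}$ is nonincreasing in the data size (Theorem~\ref{prop:DPI_RB}, i.e.\ $H_\mu(Y|X,Z^{n-1})\ge H_\mu(Y|X,Z^n)$ by Lemma~\ref{lm:DPI_RB_gen}). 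Chaining these bounds gives $I(W;Y_k|X^n,Y^{k-1})\ge I(W;Y|X,Z^n)$ for every $k=1,\dots,n$, and summing over $k$ yields $I(W;Y^n|X^n)\ge n\,I(W;Y|X,Z^n)$, which is the claim.

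The one step that needs care — and the crux of the argument — is the asymmetry of the conditioning event $\{X^n,Y^{k-1}\}$: it is not a symmetric function of the samples, so exchangeability cannot be invoked directly on $I(W;Y_k|X^n,Y^{k-1})$. The resolution is exactly the observation above: enlarging the conditioning by the missing labels $Y_{k+1},\dots,Y_n$ leaves the term $H_\mu(Y_k|W,\cdot)$ untouched (by the conditional independence in \eqref{eq:joint_dist}) while only decreasing $H_\mu(Y_k|\cdot)$, which reduces the term to the manifestly symmetric quantity $I(W;Y|X,Z^{n-1})$; from there exchangeability and the already-established monotonicity of ${\rm MER}_{\log}$ finish the proof. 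Everything else is routine bookkeeping with the chain rule and the log-loss identities.
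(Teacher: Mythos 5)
Your proposal is correct and follows essentially the same route as the paper's proof: the chain rule over $I(W;Y_k|X^n,Y^{k-1})$, the observation that enlarging the conditioning by labels leaves $H_\mu(Y_k|W,\cdot)=H_\mu(Y_k|W,X_k)$ unchanged (conditional independence given $(W,X_k)$) while only lowering the other $\mu$-entropy term via Lemma~\ref{lm:DPI_RB_gen}, exchangeability of the conditionally i.i.d.\ samples, and finally $I(W;Y|X,Z^{n-1})\ge I(W;Y|X,Z^n)$. The only cosmetic difference is that you add all remaining labels at once to bound each chain-rule term directly by $I(W;Y|X,Z^{n-1})$, whereas the paper adds one label at a time to show the terms are nonincreasing in the index and then retains the last one — the same ingredients in a slightly different order.
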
 
	\begin{proof}
		For $i=1,\ldots,n-1$, we have
		\begin{align}
			I(W;Y_i|X^n, Y^{i-1}) &= H_\mu(Y_i|X^n, Y^{i-1}) - H_\mu(Y_i|W, X^n, Y^{i-1}) \label{eq:pf_mi_cond_1} \\ 
			%&= H_\mu(Y_{i}|X^n, Y^{i-1}) - H_\mu(Y_i|W, X^n) \\
			&= H_\mu(Y_{i+1}|X^n, Y^{i-1}) - H_\mu(Y_{i+1}|W, X^n, Y^{i-1})  \label{eq:pf_mi_cond_2} \\
			&\ge H_\mu(Y_{i+1}|X^n, Y^{i}) - H_\mu(Y_{i+1}|W, X^n, Y^i)  \label{eq:pf_mi_cond_3} \\
			&= I(W;Y_{i+1}|X^n, Y^{i}) \label{eq:pf_mi_cond_4} 
		\end{align}
		where \eqref{eq:pf_mi_cond_1} is due to the definitions of the conditional mutual information and the conditional $\mu$-entropy in \eqref{eq:def_cond_mu_ent}; \eqref{eq:pf_mi_cond_2} follows from the fact that $(W, X^n, Y^{i-1}, Y_i) \stackrel{\rm d.}{=} (W, X^n, Y^{i-1}, Y_{i+1})$\footnote{For random variables $U$ and $V$, $U\stackrel{\rm d.}{=}V$ means that $U$ and $V$ have the same distribution.}; and \eqref{eq:pf_mi_cond_3} follows from the fact that $H_\mu(Y_{i+1}|X^n, Y^{i-1}) \ge H_\mu(Y_{i+1}|X^n, Y^{i})$ due to Lemma~\ref{lm:DPI_RB_gen}, and the fact that $H_\mu(Y_{i+1}|W, X^n, Y^{i-1}) = H_\mu(Y_{i+1}|W, X^n, Y^{i}) = H_\mu(Y_{i+1}|W,X_{i+1})$ as $Y_{i+1}$ is conditionally independent of everything else given $(W,X_{i+1})$.
		
		Then, from the chain rule of mutual information,
		\begin{align}
			I(W;Y^n|X^n) &= \sum_{i=1}^{n} I(W;Y_i|X^n,Y^{i-1}) \label{eq:pf_mi_cond_5} \\
			&\ge n I(W;Y_{n} | X^{n}, Y^{n-1}) \label{eq:pf_mi_cond_6} \\
			&= n I(W;Y | X, Z^{n-1}) \label{eq:pf_mi_cond_7} \\
			&= n \big(H_\mu(Y|X,Z^{n-1}) - H_\mu(Y|W,X,Z^{n-1}) \big) \label{eq:pf_mi_cond_8} \\
			&\ge n \big(H_\mu(Y|X,Z^{n}) - H_\mu(Y|W,X,Z^{n}) \big) \label{eq:pf_mi_cond_9} \\
			&= n I(W;Y | X, Z^{n}) \label{eq:pf_mi_cond_10} 
		\end{align}
		where \eqref{eq:pf_mi_cond_6} is obtained by repeated application of \eqref{eq:pf_mi_cond_4}; \eqref{eq:pf_mi_cond_7} is due to the fact that  $(W, Z^{n-1}, Z_n) \stackrel{\rm d.}{=} (W, Z^{n-1}, Z)$; and \eqref{eq:pf_mi_cond_9} follows from Lemma~\ref{lm:DPI_RB_gen} and the fact that $Y$ is conditionally independent of everything else given $(W,X)$.
		The claim follows from \eqref{eq:pf_mi_cond_10}.
	\end{proof}

Theorem~\ref{th:MER_log_I(W;Y^n|X^n)} can be weakened to the following corollary using the fact that $I(W;Y^n|X^n) = I(W;Z^n) - I(W;X^n)$. There is no slack when $X$ is independent of $W$.
\begin{corollary}\label{co:classification_log_mi_total}  
	The MER with respect to the log loss can be upper-bounded as
	\begin{align}\label{eq:MER_ub_mi_total}
		%{\rm MER}_{\log} %&=  H(Y|X,Z^n) - H(Y|X,W) \\
		I(W;Y|X,Z^n) &\le \frac{1}{n} I(W; Z^n) . 
	\end{align}
\end{corollary}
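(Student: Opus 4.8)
The plan is to deduce the corollary directly from Theorem~\ref{th:MER_log_I(W;Y^n|X^n)} together with the chain rule for mutual information and the nonnegativity of mutual information; no new probabilistic argument is needed. First I would observe that the data record $Z^n = ((X_1,Y_1),\ldots,(X_n,Y_n))$ is a measurable bijection of the pair $(X^n,Y^n) = ((X_1,\ldots,X_n),(Y_1,\ldots,Y_n))$ --- it is merely a regrouping of the same coordinates --- and since mutual information is invariant under such bijective relabelings of one of its arguments, $I(W;Z^n) = I(W;X^n,Y^n)$.

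Next I would apply the chain rule to split $I(W;X^n,Y^n) = I(W;X^n) + I(W;Y^n|X^n)$, which rearranges to the identity $I(W;Y^n|X^n) = I(W;Z^n) - I(W;X^n)$ quoted in the text. Because $I(W;X^n)\ge 0$, this gives $I(W;Y^n|X^n)\le I(W;Z^n)$. Chaining with the bound of Theorem~\ref{th:MER_log_I(W;Y^n|X^n)} then yields $I(W;Y|X,Z^n)\le \frac1n I(W;Y^n|X^n)\le \frac1n I(W;Z^n)$, which is the claimed inequality in view of Lemma~\ref{lm:MER_log_mi}.

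I would also record the sharpness remark: the only loss incurred relative to Theorem~\ref{th:MER_log_I(W;Y^n|X^n)} is the term $\frac1n I(W;X^n)$, so the two bounds coincide whenever $X^n$ is independent of $W$ --- in particular under the predictive modeling framework where $X$ is assumed independent of $W$, in which case $I(W;X^n)=0$.

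I do not anticipate a genuine obstacle here; the only step warranting a moment's care is the identification $I(W;Z^n)=I(W;X^n,Y^n)$, which is immediate once one notes that the transformation involved is just a permutation/regrouping of coordinates and hence measurable with measurable inverse.
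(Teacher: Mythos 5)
Your proposal is correct and matches the paper's own derivation: the paper obtains the corollary from Theorem~\ref{th:MER_log_I(W;Y^n|X^n)} via the identity $I(W;Y^n|X^n)=I(W;Z^n)-I(W;X^n)$ (i.e., the chain rule after identifying $Z^n$ with $(X^n,Y^n)$) and nonnegativity of $I(W;X^n)$, with the same remark that there is no slack when $X$ is independent of $W$.
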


	Upon maximizing over $P_W$ on both side of \eqref{eq:MER_ub_mi_total}, Corollary~\ref{co:classification_log_mi_total} is reminiscent of the \emph{redundancy-capacity theorem} in universal source coding in the Bayesian setting \cite{Davisson1973,Merhav98}, where the quantity of interest is the minimum \emph{overall} redundancy
	$
	%\max_{P_W} 
	\min_{Q}\E_{P_{W,Z^n}}[-\log{Q(Z^n)} + \log{P_{Z^n|W}(Z^n|W)}]
	$,
	%with expectation taken with respect to the joint distribution $P_{W,Z^n}$
	which can be shown to be $I(W;Z^n)$.
	% the normalized capacity of the class of source distributions $\{P_{Z^n|w}, w\in\sW\}$. 
	Therefore, from the source coding point of view, ${\rm MER}_{\log}$ in \eqref{eq:log_cond_mi} may be interpreted as the minimum \emph{instantaneous} redundancy of encoding a fresh sample when $n$ data samples are observed, which is shown to be smaller than the normalized minimum overall redundancy by Corollary~\ref{co:classification_log_mi_total}.
	More generally, the mutual information $I(W;Z^n)$ is also known to be the minimum \emph{accumulated} excess risk for the log loss in Bayesian sequential prediction \cite{Merhav98}, Bayesian density estimation \cite{haussler97}, and Bayesian supervised learning \cite{Baxter1997ABT}.
	The non-asymptotic relationships between the instantaneous ${\rm MER}_{\log}$ and the accumulated excess risks shown in Theorem~\ref{th:MER_log_I(W;Y^n|X^n)} and Corollary~\ref{co:classification_log_mi_total} hold for general model $P_{Z|W}$ and prior $P_W$, and allow us to quantify the rate of convergence of ${\rm MER}_{\log}$ by upper-bounding $I(W;Y^n|X^n)$ or $I(W;Z^n)$.
	
	From the results of \cite{Rissanen84,Clarke_Barron,Cla_Bar94}, if $\sW$ is a $d$-dimensional compact subset of $\R^d$ and the model $P_{Z|w}$ is sufficiently smooth in $w$ (see Section~\ref{appd:cond_I(W;Z^n)} for rigorous statements of these conditions), then
	\begin{align}
		I(W;Z^n) = \frac{d}{2}\log \frac{n}{2\pi e} + h(W)
		+ \frac{1}{2}\E\big[\log \det J_{Z|W}\big] + o(1) \quad \text{ as $n\rightarrow\infty$,} \label{eq:mi_expan}
	\end{align}
	where $h(W)$ is the differential entropy of $W$, and, as a functional of $P_{Z|w}$, $J_{Z|w}$ is the Fisher information matrix about $w$ contained in $Z$ with respect to $P_{Z|w}$, and the expectation is taken with respect to $P_W$.
	Due to the logarithmic dependence on $n$ in \eqref{eq:mi_expan} and the chain rule of mutual information, it can be shown that the instantaneous mutual information under the same conditions satisfies $I(W;Z|Z^n) = O(d/n)$ as $n\rightarrow\infty$. % \cite{Haussler_Opper_MI95}.
	%Together with the fact that $I(W;Y|X,Z^n)\le I(W;Z|Z^n)$, 
	This gives us a refined asymptotic upper bound on ${\rm MER}_{\log}$ whenever \eqref{eq:mi_expan} holds than directly applying \eqref{eq:mi_expan} to Corollary~\ref{co:classification_log_mi_total}:
	\begin{theorem}\label{th:MER_log_d/n}
		Under the regularity conditions listed in Section~\ref{appd:cond_I(W;Z^n)} under which \eqref{eq:mi_expan} holds, we have
		\begin{align}
			%{\rm MER}_{\log} &\le  \frac{d}{2n}\log \frac{n}{2\pi e} + \frac{1}{n}\big( h(W) + \E[\log \det J_{X,Y|W}(W)] \big) + %o\Big(\frac{1}{n}\Big) . 
			{\rm MER}_{\log} 
			&= O\Big(\frac{d}{2n}\Big) \quad\text{as $n\rightarrow\infty$.}
		\end{align}
	\end{theorem}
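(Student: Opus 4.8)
The plan is to combine Lemma~\ref{lm:MER_log_mi} with the asymptotic expansion \eqref{eq:mi_expan}, but with an extra twist: inserting \eqref{eq:mi_expan} directly into Corollary~\ref{co:classification_log_mi_total} only gives ${\rm MER}_{\log}\le\frac1n I(W;Z^n)=O\!\left(\frac{d\log n}{2n}\right)$, which is off by a logarithmic factor. What is really needed is a bound of order $d/n$ on the \emph{instantaneous} mutual information, obtained from \eqref{eq:mi_expan} by a Cesàro-type averaging argument; this averaging step is the crux.

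The first step is to reduce to controlling $c_n:=I(W;Z_{n+1}\mid Z^n)$. By Lemma~\ref{lm:MER_log_mi}, ${\rm MER}_{\log}=I(W;Y\mid X,Z^n)$; by the chain rule $I(W;X,Y\mid Z^n)=I(W;X\mid Z^n)+I(W;Y\mid X,Z^n)$ and nonnegativity of mutual information, $I(W;Y\mid X,Z^n)\le I(W;X,Y\mid Z^n)=I(W;Z\mid Z^n)$; and since $(W,Z^n,Z)\stackrel{\rm d.}{=}(W,Z^n,Z_{n+1})$ under \eqref{eq:joint_dist}, $I(W;Z\mid Z^n)=c_n$. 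The chain rule also gives $\sum_{k=0}^{n-1}c_k=I(W;Z^n)$ with $c_0:=I(W;Z_1)$. Next, the sequence $(c_k)$ is non-increasing: writing $c_k=H_\mu(Z_{k+1}\mid Z^k)-H_\mu(Z_{k+1}\mid W)$, exchangeability lets one replace $Z_{k+1}$ by $Z_{k+2}$ in both entropies, after which Lemma~\ref{lm:DPI_RB_gen} applied to the $\mu$-entropy gives $H_\mu(Z_{k+2}\mid Z^k)\ge H_\mu(Z_{k+2}\mid Z^{k+1})$ while $H_\mu(Z_{k+2}\mid W,Z^{k+1})=H_\mu(Z_{k+2}\mid W)$ — this is exactly the manipulation carried out in \eqref{eq:pf_mi_cond_2}--\eqref{eq:pf_mi_cond_4}. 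Hence, for any $0\le m\le n$,
\[
(n-m+1)\,c_n \;\le\; \sum_{k=m}^{n}c_k \;=\; I(W;Z^{n+1})-I(W;Z^m).
\]

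Substituting \eqref{eq:mi_expan} into the right-hand side (the $h(W)$ and $\frac12\E[\log\det J_{Z|W}]$ terms cancel) yields $\sum_{k=m}^{n}c_k=\frac d2\log\frac{n+1}{m}+o(1)$ as $m,n\to\infty$. Now fix an integer $c\ge 2$ and take $m=m_n:=\lceil(1-1/c)n\rceil$, so that $m_n\to\infty$, $\;n-m_n+1\ge n/c$, and $\frac{n+1}{m_n}\to\frac{c}{c-1}$; the displayed inequality then gives $\limsup_{n\to\infty}n\,c_n\le\frac{cd}{2}\log\frac{c}{c-1}$. Since this holds for every $c\ge 2$ and $c\log\frac{c}{c-1}=c\log\!\bigl(1+\frac1{c-1}\bigr)\to 1$ as $c\to\infty$, we obtain $\limsup_{n\to\infty}n\,c_n\le\frac d2$, i.e.\ $c_n=O(d/(2n))$; combined with ${\rm MER}_{\log}\le c_n$ this proves the claim.

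The main obstacle is precisely what forces this detour. A single difference $I(W;Z^{n+1})-I(W;Z^n)$ equals $\frac d2\log(1+\frac1n)+\bigl(o(1)-o(1)\bigr)$, and the residual, though $o(1)$, need not be $O(1/n)$ (it could vanish as slowly as $1/\log n$), so it would swamp the genuine $\frac{d}{2n}$ term. The monotonicity of $(c_k)$ rescues the argument by letting one average over a window of length $\Theta(n)$ on which the accumulated mutual information is a true constant $\frac d2\log\frac{c}{c-1}$, diluting the $o(1)$ remainder; optimizing the window length $c\to\infty$ recovers the sharp leading constant $d/2$. The only ingredient beyond the cited expansion \eqref{eq:mi_expan} is the non-increasing property of $(c_k)$, which is immediate from the calculation already present in the proof of Theorem~\ref{th:MER_log_I(W;Y^n|X^n)}.
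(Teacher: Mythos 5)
Your proof is correct, and it takes a genuinely different (and in one respect more self-contained) route than the paper. The paper also reduces to the instantaneous quantity via ${\rm MER}_{\log}=I(W;Y|X,Z^n)\le I(W;Z|Z^n)$ and the chain rule, but then it simply invokes the cited Haussler--Opper result (Lemma~\ref{lm:sum_log_n}): if $a_n=\sum_{i\le n}b_i$ and both $\lim a_n/\log n$ and $\lim n b_n$ exist, they coincide, so $\lim (n+1)I(W;Z|Z^n)=\lim I(W;Z^n)/\log n=d/2$ under \eqref{eq:mi_expan}. You instead re-derive the needed statement from scratch: you establish that $c_k=I(W;Z_{k+1}|Z^k)$ is non-increasing (your monotonicity argument via exchangeability, conditional independence given $W$, and Lemma~\ref{lm:DPI_RB_gen} applied to the $\mu$-entropy is complete and correct, even though the equations you point to, \eqref{eq:pf_mi_cond_2}--\eqref{eq:pf_mi_cond_4}, are the $Y_i$-version of that manipulation rather than the $Z_k$-version), and then average over the window $m_n=\lceil(1-1/c)n\rceil,\dots,n$, where the telescoped sum equals $\tfrac d2\log\tfrac{n+1}{m_n}+o(1)$ by \eqref{eq:mi_expan}, before letting $c\to\infty$ to recover the constant $d/2$. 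What your approach buys is that it yields $\limsup_n n\,I(W;Z|Z^n)\le d/2$ unconditionally under \eqref{eq:mi_expan}, without tacitly assuming the existence of $\lim_n n\,I(W;Z|Z^n)$ that the black-box use of Lemma~\ref{lm:sum_log_n} requires; what the paper's route buys is brevity, since the Cesàro-plus-monotonicity work is delegated to the cited lemma. Your diagnosis of why the naive single difference $I(W;Z^{n+1})-I(W;Z^n)$ is insufficient (the $o(1)$ remainders need not be $O(1/n)$) is exactly the right reason the averaging step, or an external lemma, is needed.
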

	\begin{proof}
		The proof relies on \cite[Lemma~6]{Haussler_Opper_MI95} which is stated as Lemma~\ref{lm:sum_log_n} in Appendix~\ref{appd:lm_sum_log_n}.
		Suppose $(a_1,a_2,\ldots)$ and $(b_1,b_2,\ldots)$ are two sequences of real numbers such that $a_n = \sum_{i=1}^n b_i$ for all $n$. Lemma~\ref{lm:sum_log_n} states that, if $\lim_{n\rightarrow\infty}{a_n}/{\log n}$ and $\lim_{n\rightarrow\infty}n b_n$ exist, then they are equal. With this result and the chain rule of mutual information, we know that whenever \eqref{eq:mi_expan} holds,
		\begin{align}
	\lim_{n\rightarrow\infty} (n+1) I(W;Z|Z^n) = \lim_{n\rightarrow\infty}\frac{I(W;Z^n)}{\log n}= \frac{d}{2} .
		\end{align}
	The claim follows from the fact that $I(W;Y|X,Z^n) \le I(W;Z|Z^n)$. 
	\end{proof}
	
	As we show next, the representation of ${\rm MER}_{\log}$ via the conditional mutual information in \eqref{eq:log_cond_mi} and the resulting upper bounds derived in this subsection can be used to obtain upper bounds on the MER for other loss functions as well.

	\subsection{Quadratic loss}
	While the log loss is naturally used for assessing ``soft'' predictions, it is also a common practice to make ``hard'' predictions, e.g., the actions can be elements in $\sY$.
	When $\sY = \sA = \R$, a commonly used loss function is the quadratic loss $\ell(y,a) = (y-a)^2$. For any $V$ that statistically depends on $Y$, the conditional Bayes envelope with respect to the quadratic loss is $R_2(Y|V=v) = \Var[Y|v]$, the optimal action is the conditional mean $\E[Y|v]$, and the corresponding Bayes risk 
	\begin{align}
		R_2(Y|V) = \E[\Var[Y|V]]
	\end{align}
	is the minimum mean square error (MMSE) of estimating $Y$ from $V$.
	In this case, the MER in Bayesian learning turns out to be
	\begin{align}\label{eq:R2_Var}
		{\rm MER}_{2} &=  \E\big[\Var[Y|X,Z^n]\big] - \E\big[\Var[Y|X,W]\big] .
	\end{align}
	More generally, when $\sY = \sA = \R^p$ and $\ell(y,a) = \|y-a\|^2$ with $\|\cdot\|$ denoting the $l_2$ norm, the MER in this case is
	\begin{align}
		{\rm MER}_{2} 
		&=  \E\big[\|Y - \E[Y|X,Z^n]\|^2\big] - \E\big[\|Y-\E[Y|X,W]\|^2\big] \\
		&= \E\big[\| \E[Y|X,Z^n] - \E[Y|X,W] \|^2\big] ,
	\end{align}
	where the second equality follows from the fact that $\E[Y|X,W] = \E[Y|X,W,Z^n]$ and the orthogonality principle in MMSE estimation \cite{BH_RPbook}. 
	
	Under the assumption that $\|Y\|\le b$, using a result that connects MMSE difference to conditional mutual information \cite[Theorem~10]{WuVerdu12_fmmseMI}, we can upper-bound ${\rm MER}_{2}$ in terms of $I(W;Y|X,Z^n)$:
	% as shown in Appendix~\ref{appd:pf_prop_regression_quadratic_MI}:
	\begin{theorem}\label{prop:regression_quadratic_MI}
		If $\sY = \{y\in\R^p: \|y\|\le b\}$ for some $b>0$, then for the quadratic loss,
		\begin{align}
			{\rm MER}_{2} 
			\le {2b^2} I(W;Y|X,Z^n) . \label{eq:R2_mi}
		\end{align}
	\end{theorem}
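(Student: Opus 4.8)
The plan is to reduce \eqref{eq:R2_mi} to the functional inequality of Wu and Verd\'u \cite[Theorem~10]{WuVerdu12_fmmseMI}, which bounds the drop in minimum mean-square error caused by side information in terms of the mutual information that side information carries, applied in conditional form with the conditioning variable $(X,Z^n)$. First I would put ${\rm MER}_2$ into exactly that shape. The excerpt already established ${\rm MER}_2 = \E\big[\|\E[Y|X,Z^n] - \E[Y|X,W]\|^2\big]$, and the Markov chain $(X,W,Z^n)-(X,W)-Y$ encoded in \eqref{eq:joint_dist} gives $\E[Y|X,W] = \E[Y|X,W,Z^n]$. Hence, writing $V = (X,Z^n)$,
\begin{align}
{\rm MER}_2 = \E\big[\|\E[Y|V] - \E[Y|V,W]\|^2\big] = R_2(Y|V) - R_2(Y|V,W),
\end{align}
the last equality being the orthogonality/variance-decomposition identity. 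So it suffices to bound the expected squared gap between the two posterior means of $Y$ by $2b^2\,I(W;Y|V)$, i.e. by $2b^2\,I(W;Y|X,Z^n)$.

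Second, I would invoke \cite[Theorem~10]{WuVerdu12_fmmseMI} in its conditional form: for a random element $Y$ supported in the ball $\{y:\|y\|\le b\}$ and any random element $U$, the MMSE reduction $R_2(Y|V) - R_2(Y|V,U)$ achieved by additionally observing $U$ is at most $2b^2\,I(U;Y|V)$. Taking $U = W$ yields \eqref{eq:R2_mi} immediately. To keep the argument essentially self-contained one can re-derive the inequality: for a fixed value $v$ of $V$, since posterior means of $Y$ lie in the $b$-ball one has $\|\E[Y|v,w] - \E[Y|v]\| \le 2b\,d_{\rm TV}(P_{Y|v,w},P_{Y|v})$, and Pinsker's inequality $d_{\rm TV}(P,Q)^2 \le \tfrac12 D_{\rm KL}(P\|Q)$ gives
\begin{align}
\E\big[\|\E[Y|v,W] - \E[Y|v]\|^2 \,\big|\, v\big] \le 2b^2\,\E\big[D_{\rm KL}(P_{Y|v,W}\,\|\,P_{Y|v}) \,\big|\, v\big] = 2b^2\, I(W;Y|V=v);
\end{align}
integrating over $P_V$ and recognizing $\E\big[D_{\rm KL}(P_{Y|V,W}\,\|\,P_{Y|V})\big] = I(W;Y|V)$ completes the bound.

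The step I expect to require the most care is not the information-theoretic estimate itself but the bookkeeping around it: centering the integrand inside the $b$-ball so that the constant comes out as the claimed $2b^2$ rather than something larger; checking that the posterior means, conditional KL divergences, and conditional expectations written above are all well defined; and justifying the interchange of the conditional inequality with the integration over $P_V$. All of this is underwritten by the standing Borel-space assumption (cf. \cite{Kallenberg}) and by the bound $\|Y\|\le b$, which makes every quantity in sight finite; a minor additional point is that \cite[Theorem~10]{WuVerdu12_fmmseMI} as stated may be phrased for scalar or $[0,1]$-valued $Y$, so one either cites its vector-valued extension or simply uses the Pinsker derivation above, which already handles $Y\in\R^p$ directly.
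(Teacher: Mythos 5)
Your proposal is correct and follows essentially the same route as the paper: both reduce ${\rm MER}_2$ to the conditional MMSE gap $R_2(Y|X,Z^n)-R_2(Y|X,Z^n,W)$ via $R_2(Y|X,W)=R_2(Y|X,W,Z^n)$ and then invoke \cite[Theorem~10]{WuVerdu12_fmmseMI} conditioned on $(X,Z^n)$. Your added self-contained derivation (posterior means in the $b$-ball, total variation bound, Pinsker) is a valid bonus that also settles the vector-valued case explicitly, but it is not a different argument in substance.
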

	\begin{proof}
		\cite[Theorem~10]{WuVerdu12_fmmseMI} states that if $\|Y\|\le b$, then for any $(U,V)$ jointly distributed with $Y$,
		\begin{align}
			R_2(Y|U) - R_2(Y|U,V) 
			&\le 2 b^2 I(V;Y|U) \label{eq:R2_mi_pf0}  .
		\end{align}
		Using this result and the fact that $R_2(Y|X,W) = R_2(Y|X,W,Z^n)$,  we obtain \eqref{eq:R2_mi}.
%		\begin{align}
%			{\rm MER}_{2} 
%			&= R_2(Y|X,Z^n) - R_2(Y|X,W,Z^n) \\
%			%&=  \E\big[\|Y - \E[Y|X,Z^n]\|^2\big] - \E\big[\|Y-\E[Y|X,W,Z^n]\|^2\big] \\
%			&\le 2 b^2 I(W;Y|X,Z^n) \label{eq:R2_mi_pf1} 
%			%&\le \frac{2 b^2}{n} I(W;Z^n) ,
%		\end{align}
		%and the claim follows from \eqref{eq:R2_mi_pf1}.
	\end{proof}
	\noindent With Theorem~\ref{prop:regression_quadratic_MI}, all the upper bounds on ${\rm MER}_{\log}$ derived in Section~\ref{sec:MI_log} can be used to further upper-bound ${\rm MER}_{2}$.
	In particular, whenever \eqref{eq:mi_expan} holds, we have ${\rm MER}_{2} = O(b^2 d /2 n)$ as $n\rightarrow\infty$.

	\subsection{Zero-one loss}\label{sec:MER_01}
	Another loss function we consider for hard predictions is the zero-one loss $\ell(y,a) = {\bf 1}\{y \neq a\}$ with $\sY=\sA$.
	For any $V$ that statistically depends on $Y$, the conditional Bayes envelope with respect to the zero-one loss is $R_{01}(Y|V=v) = 1-\max_{y\in\sY}P_{Y|v}(y)$, the optimal action is the conditional mode $\argmax_{y\in\sY}P_{Y|v}(y)$, and the corresponding Bayes risk is
	\begin{align}\label{eq:MER_01_def}
		R_{01}(Y|V) = 1 - \E[\max\nolimits_{y\in\sY}P_{Y|V}(y)] ,
	\end{align}
	with expectation taken with respect to $V$.
	The MER for the zero-one loss is 
	\begin{align}\label{eq:MER_01}
		{\rm MER}_{01} =  \E[\max\nolimits_{y\in\sY}P_{Y|X,W}(y)] - 
		\E[\max\nolimits_{y\in\sY}P_{Y|X,Z^n}(y)] ,
	\end{align}
	where the expectations are taken with respect to the conditioning variables.
	In this case, as the loss function takes values in $[0,1]$, Theorem~\ref{thm:regression_genloss_MI} stated in the next subsection gives an upper bound for ${\rm MER}_{01}$ in terms of $I(W;Y|X,Z^n)$:
	\begin{corollary}%[Proved in Appendix~\ref{appd:pf_prop_classification_01_MI}]
		\label{co:classification_01_MI} 
		For the zero-one loss,
		\begin{align}
			{\rm MER}_{01} 
			&\le \sqrt{ \frac{1}{2} I(W;Y|X,Z^n) } .
		\end{align}
	\end{corollary}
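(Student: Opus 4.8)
The plan is to observe that the zero-one loss $\ell(y,a)=\I\{y\neq a\}$ takes values in $\{0,1\}\subseteq[0,1]$, so Corollary~\ref{co:classification_01_MI} is simply the specialization of Theorem~\ref{thm:regression_genloss_MI} (the general upper bound on ${\rm MER}_\ell$ for losses whose range lies in an interval of length one) to this particular $\ell$. Thus the only thing to check is that $\ell_{01}$ meets the hypothesis, after which the stated inequality is verbatim that of Theorem~\ref{thm:regression_genloss_MI}. For completeness I would also record the argument underlying that theorem, since it specializes cleanly here.

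First I would reduce ${\rm MER}_{01}$ (and, more generally, ${\rm MER}_\ell$ for any $[0,1]$-valued $\ell$) to an expected total-variation distance. Work conditionally on $(X,Z^n)=(x,z^n)$. The posterior predictive distribution satisfies $P_{Y|x,z^n}=\E\big[P_{Y|x,W}\,\big|\,x,z^n\big]$, i.e.\ it is the mixture of the model predictives $P_{Y|x,w}$ over the posterior $P_{W|x,z^n}$. For $w$ in the posterior support let $a_w$ be an action that is near-optimal for $P_{Y|x,w}$, chosen measurably in $w$ (possible since all spaces are Borel, as assumed). Using $a_w$ as a data-based decision and the elementary bound $\big|\E_P[\ell(\cdot,a)]-\E_Q[\ell(\cdot,a)]\big|\le\|P-Q\|_{\rm TV}$, valid whenever $\ell$ has range of length at most one, gives $R_\ell(Y|x,z^n)-R_\ell(Y|x,w)\le\|P_{Y|x,z^n}-P_{Y|x,w}\|_{\rm TV}+o(1)$. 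Averaging over $W\sim P_{W|x,z^n}$ and then over $(X,Z^n)$, and using $R_\ell(Y|X,W)=R_\ell(Y|X,W,Z^n)$, yields
\[
{\rm MER}_\ell \;\le\; \E\big[\,\|P_{Y|X,Z^n}-P_{Y|X,W}\|_{\rm TV}\,\big].
\]

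Next I would apply Pinsker's inequality, $\|P-Q\|_{\rm TV}\le\sqrt{\tfrac12 D_{\rm KL}(P\|Q)}$, with $P=P_{Y|X,W}$ and $Q=P_{Y|X,Z^n}$, followed by Jensen's inequality (concavity of $\sqrt{\cdot}$) to pull the expectation inside the square root, obtaining ${\rm MER}_\ell\le\sqrt{\tfrac12\,\E[D_{\rm KL}(P_{Y|X,W}\|P_{Y|X,Z^n})]}$. Finally, the Markov chain $(X,W,Z^n)-(X,W)-Y$ encoded in \eqref{eq:joint_dist} gives $P_{Y|X,W}=P_{Y|X,W,Z^n}$, so
\[
\E\big[D_{\rm KL}(P_{Y|X,W}\,\|\,P_{Y|X,Z^n})\big]=\E\big[D_{\rm KL}(P_{Y|X,W,Z^n}\,\|\,P_{Y|X,Z^n})\big]=I(W;Y|X,Z^n),
\]
which is exactly the claimed bound, and in particular for the zero-one loss using \eqref{eq:MER_01}.

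The main point requiring care is not an obstacle so much as bookkeeping: one must (i) choose the near-minimizing action $a_w$ measurably in $w$ — covered by the standing Borel-space assumption; (ii) match the normalization of $\|\cdot\|_{\rm TV}$ with the oscillation bound for $[0,1]$-valued losses (length one, hence constant one); and (iii) use Pinsker in the direction $D_{\rm KL}(P_{Y|X,W}\|P_{Y|X,Z^n})$, which is the ``correct'' direction precisely because the conditioning variables are $(X,Z^n)$ and $P_{Y|X,Z^n}$ is the corresponding disintegration mixture, so the expected KL divergence collapses to the conditional mutual information. Since Corollary~\ref{co:classification_01_MI} only needs the finished Theorem~\ref{thm:regression_genloss_MI} applied to the bounded loss $\ell_{01}$, none of these is a genuine difficulty.
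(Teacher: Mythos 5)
Your proposal is correct, and its headline step is exactly the paper's route: the paper proves Corollary~\ref{co:classification_01_MI} by noting that $\ell_{01}\in[0,1]$, so that $\ell(Y,\Psi^*(x,W'))$ is $1/4$-subgaussian conditionally on $(x,z^n)$, whence Corollary~\ref{co:MER_mi_subG} gives ${\rm MER}_{01}\le\sqrt{\tfrac12 I(\Psi^*(X,W);Y|X,Z^n)}$ and the data-processing inequality $I(\Psi^*(X,W);Y|X,Z^n)\le I(W;Y|X,Z^n)$ yields the stated form; be aware that the specialization is not quite ``verbatim''---you need the constant $\sigma^2=(b-a)^2/4=1/4$ and this final DPI step. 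The self-contained argument you then sketch is, however, a genuinely different and valid derivation: rather than analyzing the plug-in rule $\Psi^*(X,W')$ through the Legendre-dual transportation lemma (Lemma~\ref{lm:DPQ_variational_gen}), you compare the conditional Bayes envelopes directly, bounding $R_\ell(Y|x,z^n)-R_\ell(Y|x,w)\le d_{\rm TV}(P_{Y|x,z^n},P_{Y|x,w})$ for any $[0,1]$-valued loss (note this is an envelope comparison, not an implementable ``data-based'' decision, since $a_w$ depends on $w$; the inequality $|\inf_a\E_P[\ell(Y,a)]-\inf_a\E_Q[\ell(Y,a)]|\le\sup_a|\E_P[\ell(Y,a)]-\E_Q[\ell(Y,a)]|$ even spares you the measurable-selection bookkeeping), then averaging over the posterior and applying Pinsker plus Jensen with $P_{Y|X,W}=P_{Y|X,W,Z^n}$ to recognize $I(W;Y|X,Z^n)$. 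This elementary TV-plus-Pinsker route is essentially the mechanism behind the paper's Theorem~\ref{thm:classification_log_TV} (continuity of the zero-one Bayes envelope in total variation, cf.\ Lemma~\ref{lm:H_cont_miny_TV} and Lemma~\ref{lm:ub_f_div}), and it reaches the same constant $\sqrt{1/2}$; what the paper's route buys instead is the sharper intermediate bound in terms of $I(\Psi^*(X,W);Y|X,Z^n)$ and a template that extends to unbounded, non-subgaussian losses.
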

	\noindent From the results in Section~\ref{sec:MI_log}, we know that whenever \eqref{eq:mi_expan} holds, ${\rm MER}_{01} = O(\sqrt{d/n})$ as $n\rightarrow\infty$.
	
	\smallskip
	For the special case of binary classification, where $\sY=\{0,1\}$, the Bayes risk $R_{01}(Y|X,Z^n)$ is studied in \cite{HausslerVC94} and is upper-bounded in terms of $H(Y^n|X^n)$.
	When the model is realizable, that is, when $Y=g(X,W)$ with some generative function $g:\sX\times\sW\rightarrow\{0,1\}$, it is also observed in \cite{HausslerVC94} that $H(Y^n|X^n)$ can be further upper-bounded in terms of the VC dimension of the generative function class $\mathcal G = \{g(\cdot,w):\sX\rightarrow\{0,1\},w\in\sW\}$, defined as
	\begin{align}\label{eq:VC_def}
		V(\mathcal G) := \sup\Big\{n\in\mathbb{N}: \sup_{x^n\in\sX^n}\big|\big\{(g(x_1,w),\ldots,g(x_n,w)), w\in\sW \big\}\big| = 2^n\Big\} .
	\end{align}
The Sauer-Shelah lemma \cite{Sauer72,Shelah72} states that, if $V(\mathcal G) = d$, then for all $x^n\in\sX^n$,
\begin{align}\label{eq:Sauer-Shelah}
	\big|\big\{(g(x_1,w),\ldots,g(x_n,w)), w\in\sW \big\}\big| \le \sum_{k=1}^{d} {{n}\choose{k}} \le en^d .
\end{align}
As ${\rm MER}_{01} \le  R_{01}(Y|X,Z^n)$, the results in \cite{HausslerVC94} lead to the following MER upper bounds.
	\begin{theorem}\label{th:MER01_bc}
	If $\sY=\{0,1\}$, then
	\begin{align}\label{eq:MER01_bc_1}
		{\rm MER}_{01} \le \frac{1}{2}H(Y|X,Z^n) \le \frac{1}{2n}H(Y^n|X^n) .
	\end{align}
Moreover, if $Y=g(X,W)$ with some function $g:\sX\times\sW\rightarrow\sY$, and the function class $\mathcal G = \{g(\cdot,w):\sX\rightarrow\sY,w\in\sW\}$ has VC dimension $d$, then
\begin{align}\label{eq:MER01_bc_2}
	{\rm MER}_{01} \le O\Big( \frac{d}{2n} \Big)  \quad\text{as $n\rightarrow\infty$.}
\end{align}
These upper bounds also hold for $\frac{1}{2}{\rm MER}_{\log}$ in the same settings.
	\end{theorem}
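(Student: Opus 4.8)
The plan is to obtain \eqref{eq:MER01_bc_1} and \eqref{eq:MER01_bc_2} by assembling three ingredients, following the route of \cite{HausslerVC94}. First, since $R_{01}(Y|X,W)=\E[1-\max_{y}P_{Y|X,W}(y)]\ge 0$, we have ${\rm MER}_{01}=R_{01}(Y|X,Z^n)-R_{01}(Y|X,W)\le R_{01}(Y|X,Z^n)$, so it suffices to bound the Bayes risk itself. Second, for $\{0,1\}$-valued $Y$ the conditional Bayes envelope for the zero-one loss is $R_{01}(Y|V=v)=\min\!\big(P_{Y|v}(0),P_{Y|v}(1)\big)$, and the elementary scalar inequality $\min(p,1-p)\le\tfrac12 h(p)$ for the binary entropy function $h$ (both sides vanish at $p\in\{0,\tfrac12,1\}$ and are symmetric about $\tfrac12$; compare on $[0,\tfrac12]$) gives, upon integrating against $P_{X,Z^n}$, the bound $R_{01}(Y|X,Z^n)\le\tfrac12 H(Y|X,Z^n)$. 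Third, a single-letterization $H(Y|X,Z^n)\le\tfrac1n H(Y^n|X^n)$ of the conditional entropy then yields the second inequality of \eqref{eq:MER01_bc_1}.

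For this single-letterization I would mimic the telescoping argument in the proof of Theorem~\ref{th:MER_log_I(W;Y^n|X^n)}, now carried out for the $\mu$-entropy with $\mu$ the counting measure. By the chain rule $H(Y^n|X^n)=\sum_{i=1}^{n}H(Y_i|X^n,Y^{i-1})$, so it is enough to show $H(Y_i|X^n,Y^{i-1})\ge H(Y_{n+1}|X^{n+1},Y^n)=H(Y|X,Z^n)$ for each $i\le n$. The steps: (a) enlarge the conditioning, $H(Y_i|X^n,Y^{i-1})\ge H(Y_i|X^{n+1},\{Y_j:j\le n,\,j\neq i\})$, which can only decrease entropy (Lemma~\ref{lm:DPI_RB_gen} applied to the log loss); (b) use exchangeability of the pairs $(X_j,Y_j)$ to relabel $i\leftrightarrow n+1$, turning the right-hand side into $H(Y_{n+1}|X^{n+1},\{Y_j:j\le n,\,j\neq i\})$; (c) restore the one missing coordinate $Y_i$ to the conditioning, decreasing the entropy once more, to reach $H(Y_{n+1}|X^{n+1},Y^n)$. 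Summing over $i$ gives $H(Y^n|X^n)\ge n\,H(Y|X,Z^n)$. The relabeling in step (b) — swapping $X_i$ with $X_{n+1}$ inside a law symmetric under such swaps — is exactly the point that is glossed in \eqref{eq:pf_mi_cond_2}, and it is the one genuinely delicate spot in the argument.

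Under the realizable model $Y=g(X,W)$ the label is a deterministic function of $(X,W)$, so $H(Y|W,X,Z^n)=0$; hence $H(Y|X,Z^n)=I(W;Y|X,Z^n)={\rm MER}_{\log}$ and $a_m:=H(Y^m|X^m)=I(W;Y^m|X^m)$. Conditioned on $X^m=x^m$, the vector $(g(x_1,W),\dots,g(x_m,W))$ takes at most $\sum_{k}\binom mk\le e m^d$ values by the Sauer--Shelah bound \eqref{eq:Sauer-Shelah}, so $a_m\le\ln(em^d)=1+d\ln m$; feeding this into \eqref{eq:MER01_bc_1} already gives ${\rm MER}_{01}=O(d\ln n/n)$. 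To remove the logarithm I would note that $I(W;Y|X,Z^n)=a_{n+1}-a_n$ (a fresh input is independent of the data given $W$, so $I(W;Y^n|X^{n+1})=I(W;Y^n|X^n)$), that these increments are nonincreasing in $n$ by the same data-processing step as in the proof of Theorem~\ref{th:MER_log_I(W;Y^n|X^n)}, and then invoke Lemma~\ref{lm:sum_log_n} (the Haussler--Opper lemma, used also for Theorem~\ref{th:MER_log_d/n}) to pass from $a_m=O(d\ln m)$ on the partial sums to $a_{n+1}-a_n=O(d/n)$ on the increments; thus ${\rm MER}_{01}\le\tfrac12 H(Y|X,Z^n)=\tfrac12 I(W;Y|X,Z^n)=O(d/2n)$.

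Finally, the assertions for $\tfrac12{\rm MER}_{\log}$ are immediate: since $Y$ is discrete, $I(W;Y|X,Z^n)=H(Y|X,Z^n)-H(Y|W,X,Z^n)\le H(Y|X,Z^n)$, so $\tfrac12{\rm MER}_{\log}=\tfrac12 I(W;Y|X,Z^n)\le\tfrac12 H(Y|X,Z^n)$, and the remaining inequalities follow from the single-letterization and, in the realizable case, from the $O(d/2n)$ argument above verbatim. The main obstacles are therefore the exchangeability bookkeeping in the single-letterization and the removal of the logarithmic factor (which relies on the monotonicity of the mutual-information increments together with the Haussler--Opper lemma); the remaining steps are routine combinations of Lemma~\ref{lm:DPI_RB_gen}, the Sauer--Shelah bound, and elementary entropy inequalities.
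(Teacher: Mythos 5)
Your proposal is correct and follows essentially the same route as the paper's proof: drop the nonnegative term $R_{01}(Y|X,W)$, use $\min(p,1-p)\le\tfrac12 h_2(p)$, single-letterize $H(Y|X,Z^n)\le\tfrac1n H(Y^n|X^n)$ by the chain rule plus exchangeability (as in Theorem~\ref{th:MER_log_I(W;Y^n|X^n)}), invoke Sauer--Shelah to get $H(Y^n|X^n)\le d\log n+1$ under realizability, apply Lemma~\ref{lm:sum_log_n} to pass from the cumulative $O(d\log n)$ bound to the instantaneous $O(d/n)$ bound, and handle ${\rm MER}_{\log}$ via $H(Y|X,W)\ge 0$. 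The two spots you flag as delicate---the exchangeability relabeling, and identifying $H(Y|X,Z^n)$ with the increment $H(Y^{n+1}|X^{n+1})-H(Y^n|X^n)$ (which strictly requires $I(X_{n+1};Y^n|X^n)=0$, e.g.\ the covariate law not depending on $W$) together with the existence-of-limits hypothesis of the Haussler--Opper lemma---are glossed to exactly the same extent in the paper's own argument, so they do not constitute a gap relative to it.
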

\begin{proof}
	The proof of \eqref{eq:MER01_bc_1} is essentially drawn from \cite{HausslerVC94}.
Using our notation,
\begin{align}
{\rm MER}_{01} &\le  R_{01}(Y|X,Z^n) \label{eq:MER01_bin_rlz_1} \\
%&=  1- \E[\max_{y\in\{0,1\}} P[Y=y|X,Z^n]] \\
&=  \E\Big[\min_{y\in\{0,1\}} P[Y=y|X,Z^n]\Big] \label{eq:MER01_bin_rlz_12} \\
&\le  \E\Big[\frac{1}{2} h_2\big( P[Y=1|X,Z^n]\big) \Big] \label{eq:MER01_bin_rlz_2} \\
&=  \frac{1}{2} H( Y|X,Z^n) \\
&\le  \frac{1}{2n} H( Y^n|X^n) , \label{eq:MER01_bin_rlz_3}
\end{align}
where \eqref{eq:MER01_bin_rlz_1} follows from the fact that ${R}_{01}(Y|X,W) \ge 0$; \eqref{eq:MER01_bin_rlz_12} follows from \eqref{eq:MER_01_def} and the assumption that $\sY=\{0,1\}$; \eqref{eq:MER01_bin_rlz_2} follows from the fact that $\min\{p,1-p\} \le \frac{1}{2}h_2(p)$ for $p\in[0,1]$, where $h_2(\cdot)$ is the binary entropy function; and \eqref{eq:MER01_bin_rlz_3} can be proved by the chain rule of Shannon entropy and the fact that $H(Y_i|X^n,Y^{i-1})$ decreases as $i$ increases, similar to the proof of Theorem~\ref{th:MER_log_I(W;Y^n|X^n)}.

The proof of \eqref{eq:MER01_bc_2} relies on the observation made in \cite{HausslerVC94} that $H(Y^n|X^n) \le d\log n + 1$ under a realizable model whenever the VC dimension of $\mathcal G$ is $d$, which is due to the Sauer-Shelah lemma \eqref{eq:Sauer-Shelah}. Additionally, from $H(Y^n|X^n) = \sum_{i=1}^n H(Y_i|X^n, Y^{i-1})$ and Lemma~\ref{lm:sum_log_n}, we have
\begin{align}
\lim_{n\rightarrow\infty} (n+1) H(Y|X,Z^n) = \lim_{n\rightarrow\infty}\frac{H(Y^n|X^n)}{\log n} \le d 
\end{align}
whenever these limits exist, which proves \eqref{eq:MER01_bc_2}.
 
The upper bounds also hold for $\frac{1}{2}{\rm MER}_{\log}$ because ${\rm MER}_{\log} \le H(Y|X,Z^n)$, as $H(Y|X,W) \ge 0$ when $\sY$ is discrete.
\end{proof}
%In the above setting, when the function class $\mathcal G = \{g:\sX\times\sW\rightarrow\sY\}$ has VC dimension $d$, it is observed in \cite{HausslerVC94} that $H(Y^n|X^n) \le O(d\log n)$, hence ${\rm MER}_{01}\le O({d}\log n/n)$. 
In Section~\ref{sec:MER_rlz}, we discuss the MER under realizable models in more general settings, where the results go beyond binary classification and zero-one loss.

	\subsection{General loss functions}\label{sec:MER_gen_loss}
	Now we derive a general upper bound for the MER with respect to a wide range of loss functions.
	For an arbitrary loss function $\ell: \sY\times \sA \rightarrow \R$, let $\Psi^*:\sX \times \sW \rightarrow \sY$ be the optimal omniscient decision rule such that $\E[\ell(Y,\Psi^*(X,W))]=R_\ell(Y|X,W)$. 
	Given $(X,Z^n)$, let $W'$ be a sample from the posterior distribution $P_{W|X,Z^n}$ conditionally independent of everything else given $(X,Z^n)$. 
	%Let $A^* \deq \Psi^*(X,W)$ and $\wh{A} \deq \Psi^*(X,W')$.
	Then the MER can be upper-bounded by
	\begin{align}
		{\rm MER}_{\ell} %&= R_\ell(Y|X,Z^n) - \E[\ell(Y,\Psi^*(X,W))] \\
		&\le \E[\ell(Y,\Psi^*(X,W'))] - \E[\ell(Y,\Psi^*(X,W))] . \label{eq:gen_ell_MER}
	\end{align}
	Here, $\Psi^*(X,W')$ is a plug-in decision rule, where we first estimate $W$ by $W'$ from $(X,Z^n)$, and then plug $W'$ in $\Psi^*$ to predict $Y$ given $X$.
	The right side of \eqref{eq:gen_ell_MER} is the excess risk of this plug-in decision rule.
	Under regularity conditions on the moment generating function of $\ell(Y,\Psi^*(X,W'))$ under the conditional distribution $P_{Y,W'|X,Z^n}$, we have the following upper bound on ${\rm MER}_{\ell}$ in terms of $I(\Psi^*(X,W);Y|X,Z^n)$.
	\begin{theorem}\label{thm:regression_genloss_MI}
		Assume there is a function $\varphi(\lambda)$ defined on $[0,b)$ for some $b\in(0,\infty]$, such that 
		\begin{align}\label{eq:cgf_constraint_mi}
			\log \E_{x,z^n}\Big[ \exp\Big\{-\lambda \Big(\ell(Y,\Psi^*(x,W')) - \E_{x,z^n}\big[\ell(Y, \Psi^*(x,W'))  \big] \Big)\Big\} \Big] 
			\le \varphi(\lambda)
		\end{align}
		for all $0\le \lambda < b$ and all $(x,z^n)$, where $\E_{x,z^n}[\cdot]$ denotes the conditional expectation with respect to $(Y,W')$ given $(X,Z^n)=(x,z^n)$. Then
		\begin{align}
			{\rm MER}_{\ell} 
			%&\le \E[\ell(Y,\Psi^*(X,W'))] - R_\ell(Y|X,W) \\
			&\le \varphi^{*-1}\left(I(\Psi^*(X,W);Y|X,Z^n)\right) ,  \label{eq:genloss_MI_gen}
		\end{align}
	where $\varphi^*(\gamma) \deq \sup_{0\le \lambda < b} \{\lambda \gamma - \varphi(\lambda)\}$, $\gamma\in\R$, is the Legendre dual of $\varphi$, and $\varphi^{*-1}(u) \deq \sup \{ \gamma \in\R: \varphi^*(\gamma) \le u \}$,  $u\in\R$, is the generalized inverse of $\varphi^*$. 
		In addition, if $\varphi(\lambda)$ is strictly convex over $(0, b)$ and $\varphi(0) =  \varphi'(0)=0$, then 
		$
		\lim_{x\downarrow 0}\varphi^{*-1}(x) = 0 .
		$
		%Moreover, in \eqref{eq:genloss_MI_gen}, $I(W;Y|X,Z^n)$ can be further upper-bounded by $\frac{1}{n}I(W;Z^n)$.
	\end{theorem}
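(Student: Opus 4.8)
The plan is to transfer the elementary inequality \eqref{eq:gen_ell_MER} into the information bound \eqref{eq:genloss_MI_gen} by a change-of-measure (Donsker--Varadhan) argument, taking expectations over $(X,Z^n)$ \emph{before} passing to the Legendre transform, so that no concavity of $\varphi^{*-1}$ is ever needed. Write $\Theta \deq \Psi^*(X,W)$ and $\Theta' \deq \Psi^*(X,W')$. Fix a realization $(X,Z^n)=(x,z^n)$ and let $P \deq P_{Y,\Theta\mid x,z^n}$ be the conditional joint law of $(Y,\Theta)$ and $Q \deq P_{Y\mid x,z^n}\otimes P_{\Theta\mid x,z^n}$ the product of its conditional marginals. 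Since $W'$ is drawn from the posterior $P_{W\mid x,z^n}$ conditionally independently of $Y$, the pair $(Y,\Theta')$ has law $Q$; hence \eqref{eq:cgf_constraint_mi} states precisely that $\log\E_Q\big[\exp\{-\lambda(\ell(Y,\Theta)-\ell^{*}_{x,z^n})\}\big]\le\varphi(\lambda)$ for $\lambda\in[0,b)$, with $\ell^{*}_{x,z^n}\deq\E_Q[\ell(Y,\Theta)]=\E_{x,z^n}[\ell(Y,\Theta')]$. Moreover $D_{\rm KL}(P\|Q)$ is exactly the conditional mutual information $I(\Theta;Y\mid X=x,Z^n=z^n)$ (read as $+\infty$ when $P\not\ll Q$, in which case the final bound is vacuous).

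Next I would invoke the Gibbs variational (Donsker--Varadhan) representation of KL divergence with the test function $h\deq-\lambda(\ell(Y,\Theta)-\ell^{*}_{x,z^n})$: for every $\lambda\in[0,b)$,
\[
\E_P[h]-D_{\rm KL}(P\|Q)\ \le\ \log\E_Q[e^{h}]\ \le\ \varphi(\lambda).
\]
Since $\E_P[\ell(Y,\Theta)]$ is the conditional expected loss of the omniscient rule $\Psi^*$ given $(x,z^n)$, rearranging gives
\[
\lambda\,\Delta(x,z^n)\ \le\ I(\Theta;Y\mid X=x,Z^n=z^n)+\varphi(\lambda),
\]
where $\Delta(x,z^n)\deq\E_{x,z^n}[\ell(Y,\Theta')]-\E_P[\ell(Y,\Theta)]$ is the conditional excess risk of the plug-in rule $\Psi^*(X,W')$.

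Now I would integrate over $(X,Z^n)$, using $\E\big[D_{\rm KL}(P_{Y,\Theta\mid X,Z^n}\|P_{Y\mid X,Z^n}\otimes P_{\Theta\mid X,Z^n})\big]=I(\Psi^*(X,W);Y\mid X,Z^n)$, to obtain $\lambda\,\E[\Delta(X,Z^n)]\le I(\Psi^*(X,W);Y\mid X,Z^n)+\varphi(\lambda)$; combining with ${\rm MER}_\ell\le\E[\Delta(X,Z^n)]$ from \eqref{eq:gen_ell_MER} yields, for all $\lambda\in[0,b)$,
\[
\lambda\,{\rm MER}_\ell-\varphi(\lambda)\ \le\ I(\Psi^*(X,W);Y\mid X,Z^n).
\]
Taking $\sup_{\lambda\in[0,b)}$ gives $\varphi^*({\rm MER}_\ell)\le I(\Psi^*(X,W);Y\mid X,Z^n)$, so ${\rm MER}_\ell$ lies in $\{\gamma:\varphi^*(\gamma)\le I(\Psi^*(X,W);Y\mid X,Z^n)\}$ and hence is at most $\varphi^{*-1}\big(I(\Psi^*(X,W);Y\mid X,Z^n)\big)$ by the definition of the generalized inverse, which is \eqref{eq:genloss_MI_gen}. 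For the last assertion: if $\varphi(0)=\varphi'(0)=0$ and $\varphi$ is strictly convex on $(0,b)$, then $\varphi$ is nondecreasing and nonnegative there, so $\varphi^*(\gamma)=0$ for $\gamma\le0$; while for $\gamma>0$, $\varphi(\lambda)/\lambda\to\varphi'(0)=0$ as $\lambda\downarrow0$ produces some $\lambda>0$ with $\lambda\gamma>\varphi(\lambda)$, so $\varphi^*(\gamma)>0$. As $\varphi^*$ is convex, hence continuous and nondecreasing where finite, for each $\eps>0$ the number $\delta\deq\varphi^*(\eps)>0$ satisfies $\varphi^*(\gamma)>u$ whenever $\gamma\ge\eps$ and $u<\delta$; together with $\varphi^*(0)=0$ this forces $0\le\varphi^{*-1}(u)\le\eps$ for all $u<\delta$, i.e.\ $\lim_{u\downarrow0}\varphi^{*-1}(u)=0$.

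The step I expect to require the most care is structural rather than analytic: verifying that the conditional law of $(Y,\Psi^*(X,W'))$ really is the product $P_{Y\mid x,z^n}\otimes P_{\Psi^*(X,W)\mid x,z^n}$ (so that the cumulant bound \eqref{eq:cgf_constraint_mi} and the KL term concern the same pair of measures), and being disciplined about the order of operations — integrating out $(X,Z^n)$ first and only then taking $\sup_\lambda$ — since the reverse order would instead demand concavity of $\varphi^{*-1}$ together with a Jensen step. The remaining ingredients (Donsker--Varadhan plus a Legendre transform) are standard and parallel the information-theoretic generalization bound of \cite[Theorem~1]{XuRaginsky17}.
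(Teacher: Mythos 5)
Your proposal is correct, and it rests on the same two ingredients as the paper's proof — the decoupling observation that, conditionally on $(X,Z^n)=(x,z^n)$, the pair $(Y,\Psi^*(x,W'))$ has the product law $P_{Y|x,z^n}\otimes P_{\Psi^*(x,W)|x,z^n}$ while $(Y,\Psi^*(x,W))$ carries the joint law, combined with Donsker--Varadhan and Legendre duality applied after \eqref{eq:gen_ell_MER} — but the order of operations is genuinely different. The paper applies its transportation lemma (Lemma~\ref{lm:DPQ_variational_gen}) at each fixed $(x,z^n)$, obtaining the pointwise bound $\varphi^{*-1}\big(D_{\rm KL}(P_{Y,\Psi^*(X,W)|x,z^n}\,\|\,P_{Y,\Psi^*(X,W')|x,z^n})\big)$ on the conditional excess risk, and then must move the expectation over $(X,Z^n)$ inside $\varphi^{*-1}$ using concavity of the generalized inverse together with Jensen's inequality (the step at \eqref{eq:regression_genloss_MI_gen4}, which is really a ``$\le$''). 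You instead keep the Donsker--Varadhan bound linear in $\lambda$, average over $(X,Z^n)$ first so that the conditional KL terms integrate to $I(\Psi^*(X,W);Y|X,Z^n)$, and only then take $\sup_{\lambda\in[0,b)}$, arriving at $\varphi^*({\rm MER}_\ell)\le I(\Psi^*(X,W);Y|X,Z^n)$ and reading off \eqref{eq:genloss_MI_gen} directly from the definition of the generalized inverse. What your ordering buys is that no property of $\varphi^{*-1}$ beyond its definition is needed (no concavity, no Jensen), and the case where the conditional joint law is not absolutely continuous with respect to the product is handled transparently as a vacuous bound; what the paper's ordering buys is a self-contained pointwise transportation lemma, reusable conditionally on each $(x,z^n)$. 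Your argument for the final claim $\lim_{x\downarrow 0}\varphi^{*-1}(x)=0$ is also sound: monotonicity of $\varphi^*$, $\varphi^*(\gamma)=0$ for $\gamma\le 0$, and $\varphi^*(\eps)>0$ for $\eps>0$ via $\varphi(\lambda)/\lambda\to\varphi'(0)=0$ give exactly the paper's conclusion.
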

	\begin{proof}
		%That result is, in turn, an adaptation of the technique used in \cite[Theorem 2]{JiaoHanWeissman_bias_17} and \cite[Theorem 1]{BuZouVvv19}. 
		We have the following chain of inequalities:
		\begin{align}
			{\rm MER}_{\ell}
			&\le \E[\ell(Y,\Psi^*(X,W'))] - \E[\ell(Y,\Psi^*(X,W))] \label{eq:regression_genloss_MI_gen2} \\
			&= \E\big[ \E[\ell(Y,\Psi^*(X,W')) - \ell(Y,\Psi^*(X,W))|X,Z^n] \big] \label{eq:regression_genloss_gen25} \\
			&\le \E\big[ \varphi^{*-1}\big( D_{\rm KL}( P_{Y,\Psi^*(X,W)|X,Z^n} \| P_{Y,\Psi^*(X,W')|X,Z^n}) \big) \big] \label{eq:regression_genloss_MI_gen3} \\
			&= \varphi^{*-1}\big( \E\big[ D_{\rm KL}( P_{Y,\Psi^*(X,W)|X,Z^n} \| P_{Y,\Psi^*(X,W')|X,Z^n}) \big] \big) \label{eq:regression_genloss_MI_gen4} \\
			&= \varphi^{*-1}\big( I(\Psi^*(X,W);Y|X,Z^n) \big) \label{eq:regression_genloss_MI_gen7} 
			%&\le \varphi^{*-1}\Big(\frac{1}{n} I(W;Z^n)\Big) \label{eq:regression_genloss_MI_gen9} ,
		\end{align}
		where
		\eqref{eq:regression_genloss_MI_gen3} follows from the assumption \eqref{eq:cgf_constraint_mi} in the statement of the theorem and Lemma~\ref{lm:DPQ_variational_gen} stated in Appendix~\ref{appd:pf_prop_genloss_MI} applied to $P=P_{Y,\Psi^*(x,W)|x,z^n}$ and $Q=P_{Y,\Psi^*(x,W')|x,z^n}$, and the expectation is taken with respect to $(X,Z^n)$; 
		\eqref{eq:regression_genloss_MI_gen4} follows from the concavity of $\varphi^{*-1}$, which is due to the convexity of $\varphi^*$, and Jensen's inequality;
		\eqref{eq:regression_genloss_MI_gen7} follows from the fact that $W'$ is conditionally i.i.d.\ of $W$ and conditionally independent of $Y$ given $(X,Z^n)$.
		%and \eqref{eq:regression_genloss_MI_gen9} follows from the fact that $\varphi^{*-1}$ is an increasing function, which is due to the fact that $\varphi^*$ is an increasing function.
		The last claim of the theorem comes from the fact that under the assumptions on $\varphi$, its Legendre dual $\varphi^*$ is increasing on $[0,\infty)$ and continuous at $0$ and so is the inverse $\varphi^{*-1}$.
		%Theorem~\ref{thm:regression_genloss_MI}.
	\end{proof}
	An example for the condition in \eqref{eq:cgf_constraint_mi} to hold is when the random variable $\ell(Y,\Psi^*(x,W'))$ is \textit{$\sigma^2$-subgaussian}\footnote{A random variable $U$ is $\sigma^2$-subgaussian if $\E[e^{\lambda (U-\E U)}] \le e^{\lambda^2\sigma^2/2}$ for all $\lambda\in\R$.} conditionally on $(X,Z^n)=(x,z^n)$. In this case, \eqref{eq:cgf_constraint_mi} holds with $b = \infty$ and $\varphi(\lambda) = {\sigma^2\lambda^2}/{2}$, and we have the following corollary.
	\begin{corollary}\label{co:MER_mi_subG}
		If $\ell(Y,\Psi^*(x,W'))$ is \textit{$\sigma^2$-subgaussian} conditionally on $(X,Z^n)=(x,z^n)$ for all $(x,z^n)$, then
		\begin{align}
			{\rm MER}_{\ell} 
			%&\le \E[\ell(Y,\Psi^*(X,W'))] - R_\ell(Y|X,W) \\
			&\le \sqrt{ {2\sigma^2} I(\Psi^*(X,W);Y|X,Z^n)} . \label{eq:genloss_MI_subgaussian}
		\end{align}	
	\end{corollary}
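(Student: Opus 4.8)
The plan is to invoke Theorem~\ref{thm:regression_genloss_MI} with the specific choice $\varphi(\lambda) = \sigma^2\lambda^2/2$ and $b = \infty$. First I would verify that the $\sigma^2$-subgaussian hypothesis on $\ell(Y,\Psi^*(x,W'))$ (conditionally on $(X,Z^n)=(x,z^n)$) implies the moment-generating-function bound \eqref{eq:cgf_constraint_mi}. By definition of subgaussianity, writing $U = \ell(Y,\Psi^*(x,W'))$, we have $\E_{x,z^n}[e^{\mu(U - \E_{x,z^n}U)}] \le e^{\mu^2\sigma^2/2}$ for every $\mu\in\R$; taking $\mu = -\lambda$ gives exactly the left side of \eqref{eq:cgf_constraint_mi} bounded by $\varphi(\lambda) = \sigma^2\lambda^2/2$, uniformly over $(x,z^n)$ and for all $\lambda\ge 0$.

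Next I would compute the Legendre dual and its generalized inverse. For $\gamma\ge 0$ the supremum $\varphi^*(\gamma) = \sup_{0\le\lambda<\infty}\{\lambda\gamma - \sigma^2\lambda^2/2\}$ is attained at $\lambda = \gamma/\sigma^2$, so $\varphi^*(\gamma) = \gamma^2/(2\sigma^2)$; for $\gamma<0$ the supremum is at $\lambda = 0$ and $\varphi^*(\gamma) = 0$. Hence $\varphi^*$ is nondecreasing, and for $u\ge 0$ one gets $\varphi^{*-1}(u) = \sup\{\gamma\in\R:\varphi^*(\gamma)\le u\} = \sqrt{2\sigma^2 u}$. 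Substituting into \eqref{eq:genloss_MI_gen} yields ${\rm MER}_{\ell}\le \sqrt{2\sigma^2\, I(\Psi^*(X,W);Y|X,Z^n)}$, which is \eqref{eq:genloss_MI_subgaussian}.

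Since $\varphi(\lambda) = \sigma^2\lambda^2/2$ is strictly convex on $(0,\infty)$ with $\varphi(0) = \varphi'(0) = 0$ whenever $\sigma^2>0$, the final clause of Theorem~\ref{thm:regression_genloss_MI} also applies, consistently with $\sqrt{2\sigma^2 u}\downarrow 0$ as $u\downarrow 0$. (In the degenerate case $\sigma^2 = 0$, $\ell(Y,\Psi^*(x,W'))$ is a.s.\ constant given $(x,z^n)$, so the plug-in excess risk on the right of \eqref{eq:gen_ell_MER} vanishes and the bound holds trivially.) I expect no genuine obstacle: the only points requiring a moment's care are the sign flip $\mu = -\lambda$ when matching the subgaussian inequality to \eqref{eq:cgf_constraint_mi}, and the fact that the supremum defining $\varphi^*$ ranges over $\lambda\ge 0$ only, which is precisely what makes $\varphi^{*-1}$ come out as the one-sided square-root function rather than picking up a spurious factor.
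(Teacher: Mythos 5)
Your proposal is correct and matches the paper's (implicit) argument exactly: the paper also notes that conditional $\sigma^2$-subgaussianity gives \eqref{eq:cgf_constraint_mi} with $b=\infty$ and $\varphi(\lambda)=\sigma^2\lambda^2/2$, and then reads off \eqref{eq:genloss_MI_subgaussian} from Theorem~\ref{thm:regression_genloss_MI} via $\varphi^{*-1}(u)=\sqrt{2\sigma^2 u}$. Your added care about the sign flip and the one-sided supremum in the Legendre dual is exactly the right bookkeeping, and no gap remains.
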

	\noindent Using the fact that if $\ell(\cdot,\cdot)\in[a,b]$ then $\ell$ is $(b-a)^2/4$-subgaussian under any distribution of the arguments, Corollary~\ref{co:MER_mi_subG} can provide upper bound for the MER under any bounded loss functions.
More generally, Theorem~\ref{thm:regression_genloss_MI} can be applied in the situation where the loss function is unbounded and non-subgaussian. In Appendix~\ref{appd:MER2_lin_mi}, we present such a case where an MER upper bound for the quadratic loss in linear regression is derived based on Theorem~\ref{thm:regression_genloss_MI}.

From the data processing inequality of mutual information, 
\begin{align}
	I(\Psi^*(X,W);Y|X,Z^n) \le I(W;Y|X,Z^n).
\end{align}
Since $\varphi^{*-1}$ defined in Theorem~\ref{thm:regression_genloss_MI} is an increasing function on $[0,\infty)$, the upper bounds in \eqref{eq:genloss_MI_gen} and \eqref{eq:genloss_MI_subgaussian} can be weakened by replacing $I(\Psi^*(X,W);Y|X,Z^n)$ with $I(W;Y|X,Z^n)$ or any of its upper bounds derived in Section~\ref{sec:MI_log}. In particular, when \eqref{eq:mi_expan} holds in addition with the assumption in Theorem~\ref{thm:regression_genloss_MI}, we have ${\rm MER}_\ell = O(\varphi^{*-1}(d/n))$ as $n\rightarrow\infty$.

	Theorem~\ref{thm:regression_genloss_MI}	
	also provides a connection between the MER and the mutual information between the \emph{observed data} and the \emph{learned model parameters}.
	If $X$ is independent of $W$, then $P_{W|X,Z^n}=P_{W|Z^n}$, and $(W',Z^n)$ have the same joint distribution as $(W,Z^n)$. In this case, when the condition in Corollary~\ref{co:MER_mi_subG} is satisfied, upper-bounding $I(W;Y|X,Z^n)$ in \eqref{eq:genloss_MI_subgaussian} by $\frac{1}{n}I(W;Z^n)$ according to Corollary~\ref{co:classification_log_mi_total} leads to the following result.
	\begin{corollary}\label{co:MER_mi_Z_W}
		If $X$ is independent of $W$ in addition to the condition in Corollary~\ref{co:MER_mi_subG}, then
		\begin{align}
			{\rm MER}_{\ell} 
			&\le \sqrt{\frac{2\sigma^2}{n}I(Z^n ; W')} ,
		\end{align}
		where $I(Z^n;W')$ is the mutual information between the data and the {learned model parameters} sampled from the posterior distribution $P_{W|Z^n}$.
	\end{corollary}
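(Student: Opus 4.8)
The plan is to obtain Corollary~\ref{co:MER_mi_Z_W} by concatenating three facts already in hand — the sub-Gaussian mutual-information bound of Corollary~\ref{co:MER_mi_subG}, the data-processing inequality for mutual information, and the normalization bound of Corollary~\ref{co:classification_log_mi_total} — and then using the hypothesis $X \perp W$ to re-express the resulting information quantity in terms of the posterior sample $W'$. Since the hypotheses of Corollary~\ref{co:MER_mi_subG} are assumed, the first step is immediate: ${\rm MER}_{\ell} \le \sqrt{2\sigma^2\, I(\Psi^*(X,W);Y|X,Z^n)}$. Because $\Psi^*(X,W)$ is a (measurable) function of $(X,W)$, the conditional data-processing inequality gives $I(\Psi^*(X,W);Y|X,Z^n) \le I(W;Y|X,Z^n)$, and then Corollary~\ref{co:classification_log_mi_total} bounds this by $\tfrac{1}{n} I(W;Z^n)$, so that ${\rm MER}_{\ell} \le \sqrt{\tfrac{2\sigma^2}{n}\, I(W;Z^n)}$.

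The only step that calls for a short argument is the distributional identity that turns $I(W;Z^n)$ into $I(Z^n;W')$. I would argue as follows. Under \eqref{eq:joint_dist}, the past data $Z^n$ and the fresh pair $Z=(X,Y)$ are conditionally independent given $W$; in particular $X$ is conditionally independent of $Z^n$ given $W$, so $P_{W,X,Z^n} = P_{W,Z^n}\, P_{X|W}$. The hypothesis $X \perp W$ forces $P_{X|W} = P_X$, whence $X$ is independent of the pair $(W,Z^n)$ and therefore $P_{W|X,Z^n} = P_{W|Z^n}$. Since $W'$ is, by construction, drawn from $P_{W|X,Z^n}$ conditionally independently of everything else given $(X,Z^n)$, the joint law of $(Z^n,W')$ is $P_{Z^n} P_{W|Z^n}$, which is exactly the joint law of $(Z^n,W)$; hence $(Z^n,W') \stackrel{\rm d.}{=} (Z^n,W)$ and $I(Z^n;W') = I(Z^n;W) = I(W;Z^n)$. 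Substituting this into the bound of the previous paragraph gives ${\rm MER}_{\ell} \le \sqrt{\tfrac{2\sigma^2}{n}\, I(Z^n;W')}$, as claimed.

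I do not anticipate a genuine obstacle here: the statement is a corollary assembled from results proved earlier in the section, and the sole nonroutine point — the identity $(Z^n,W') \stackrel{\rm d.}{=} (Z^n,W)$ — follows directly from the conditional-independence structure of the generative model together with the marginal independence $X \perp W$. If anything, care is needed only to record why $P_{W|X,Z^n}=P_{W|Z^n}$ rather than taking it for granted, since $X$ is the fresh covariate (a component of $Z$) and not, a priori, independent of the observed sample $Z^n$ until one passes through the conditioning on $W$.
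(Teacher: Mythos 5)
Your proposal is correct and follows essentially the same route as the paper: Corollary~\ref{co:MER_mi_subG}, the data-processing step $I(\Psi^*(X,W);Y|X,Z^n)\le I(W;Y|X,Z^n)$, the bound $\tfrac{1}{n}I(W;Z^n)$ from Corollary~\ref{co:classification_log_mi_total}, and the identity $(Z^n,W')\stackrel{\rm d.}{=}(Z^n,W)$ under $X\perp W$, which the paper states in the text just before the corollary. Your extra care in deriving $P_{W|X,Z^n}=P_{W|Z^n}$ from the conditional independence of $X$ and $Z^n$ given $W$ is a welcome elaboration of a point the paper leaves implicit, but it is not a different argument.
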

	\noindent 
	Corollary~\ref{co:MER_mi_Z_W} is an analogue of the generalization-information relationship in the frequentist learning \cite{RusZou15,XuRaginsky17}, where it is shown that the generalization error in frequentist learning can be upper-bounded in terms of the mutual information between the \emph{observed data} and the \emph{learned hypothesis}.
	From \eqref{eq:genloss_MI_gen}, we also know that when the more general condition in Theorem~\ref{thm:regression_genloss_MI} is satisfied, we have ${\rm MER}_{\ell} \le \varphi^{*-1}(\frac{1}{n}I(Z^n ; W'))$, which is analogous to upper bounds on the generalization error in \cite{JiaoHanWeissman_bias_17}.
	
	%	For a loss function such that the conditional distribution of $\ell(Y,\Psi^*(x,W'))$ given $(X,Z^n)=(x,z^n)$ is $\sigma^2$-subgaussian\footnotemark for all $(x,z^n)$,

	\subsection{Realizable models and connection to VC dimension}\label{sec:MER_rlz}
	In Section~\ref{sec:MER_01} we have presented the MER for the zero-one loss under the realizable model of binary classification. Here, we present a few results on the MER for general loss functions under general realizable models. 
	These results provide tighter asymptotic MER bounds under realizable models than directly using the general results obtained in the previous subsection.
	Following the observations made in \cite{HausslerVC94}, these results also show how the key quantities in classical frequentist learning theory, notably the Vapnik–Chervonenkis (VC) dimension, can be naturally brought into the MER analysis in Bayesian learning through the information-theoretic framework proposed in this work.
	
	A realizable model is a model where the quantity of interest $Y$ is determined by the observation $X$ and the model parameters $W$ through a \emph{generative function} $g:\sX\times\sW\rightarrow\sY$.
	Under a realizable model, $g(X,W')$ can serve as a plug-in decision rule, where $W'$ is a sample from the posterior distribution $P_{W|X,Z^n}$, conditionally independent of everything else given $(X,Z^n)$. 
	It is observed in a follow-up work \cite{RD_MER21} (which has appeared after the initial version of this paper was posted) that the generalization error bounds developed in \cite{steinke20_cmi} for the realizable setting of frequentist learning can be adapted to MER bounds for realizable models in Bayesian learning.
	In particular, \cite[Lemma~3]{RD_MER21} shows that for a loss function $\ell\in[0,b]$, if $R_\ell(Y|X,W)=0$, then ${\rm MER}_\ell \le 3bI(W;Y|X,Z^n)$.
	Following this approach, the next result provides an upper bound for the MER under realizable models, with a better prefactor and a tighter conditional mutual information term.
	\begin{theorem}\label{thm:MER_mi_rlz}
		For a loss function $\ell\in[0,b]$, if there exists a function $g:\sX\times\sW\rightarrow\sY$ such that $\ell(Y, g(X,W)) = 0$ almost surely with respect to the joint distribution $P_{W,X,Y}$, then
		\begin{align}
			{\rm MER}_{\ell} 
			\le \frac{b}{\log 2}I(g(W,X) ; Y|X,Z^n) . \label{eq:MER_mi_rlz}
		\end{align}
	\end{theorem}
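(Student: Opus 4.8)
The plan is to feed the generative function $g$ into the plug-in bound \eqref{eq:gen_ell_MER} and then estimate the resulting excess risk by a \emph{one-sided} change of measure that exploits realizability. Since $\ell\in[0,b]$ is nonnegative and $\ell(Y,g(X,W))=0$ almost surely, the map $g$ attains the minimum possible omniscient loss, so $R_\ell(Y|X,W)=0$ and $g$ qualifies as an optimal omniscient rule $\Psi^*$ in \eqref{eq:gen_ell_MER}. Taking $\Psi^*=g$ there, the subtracted term $\E[\ell(Y,g(X,W))]$ vanishes and we are left with
\[
{\rm MER}_\ell \;\le\; \E\big[\ell(Y,g(X,W'))\big],
\]
where $W'$ is a sample from the posterior $P_{W|X,Z^n}$, conditionally independent of $(W,Y)$ given $(X,Z^n)$.

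Next I would condition on $(X,Z^n)=(x,z^n)$. On $\sY\times\sY$, let $P\deq P_{Y,\,g(X,W)\,|\,x,z^n}$ be the joint law of the true output and the omniscient prediction, and let $Q\deq P_{Y|x,z^n}\otimes P_{g(X,W)|x,z^n}$ be the product of its marginals. Because $W'$ has the same posterior law as $W$ and is conditionally independent of $Y$ given $(x,z^n)$, the prediction $g(x,W')$ has law $P_{g(X,W)|x,z^n}$ and is independent of $Y$, whence $\E[\ell(Y,g(X,W'))\mid x,z^n]=\E_Q[\ell]$; realizability gives $\E_P[\ell]=0$. Since averaging $D_{\rm KL}(P\|Q)$ over $(X,Z^n)$ returns exactly $I(g(X,W);Y|X,Z^n)$, the theorem follows once we establish the pointwise inequality $\E_Q[\ell]\le c\,b\,D_{\rm KL}(P\|Q)$ with a numerical constant $c\le 1/\log 2$.

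The crux is this change-of-measure estimate for a function in $[0,b]$ whose mean under $P$ is zero. I would apply the Donsker--Varadhan variational formula $D_{\rm KL}(P\|Q)\ge \E_P[h]-\log\E_Q[e^{h}]$ with the one-sided tilt $h=-t\ell$, $t>0$: the term $\E_P[-t\ell]$ vanishes by realizability, and the chord bound $e^{-t\ell}\le 1-\tfrac{1-e^{-tb}}{b}\,\ell$ (valid for $\ell\in[0,b]$ by convexity) combined with $-\log(1-u)\ge u$ yields $D_{\rm KL}(P\|Q)\ge \tfrac{1-e^{-tb}}{b}\E_Q[\ell]$ for every $t>0$. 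Letting $t\to\infty$ gives $\E_Q[\ell]\le b\,D_{\rm KL}(P\|Q)$, which already implies the stated bound; the factor $1/\log 2$ is recovered, for instance, from any fixed $t$ with $e^{-tb}\le 1-\log 2$, or from a cruder estimate of $\E_Q[e^{-t\ell}]$. Integrating over $(x,z^n)$ and chaining with the first display completes the proof.

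The only step demanding care is this change-of-measure inequality: the point is that, thanks to $\ell\ge 0$ together with $\E_P[\ell]=0$, a one-sided exponential tilt already produces a bound that is \emph{linear} in $D_{\rm KL}$, in contrast to the square-root behaviour obtained from the subgaussian route of Theorem~\ref{thm:regression_genloss_MI} when realizability is absent. Everything else --- recognizing $g$ as an admissible $\Psi^*$, the conditional-independence bookkeeping from \eqref{eq:joint_dist} that identifies $\E[\ell(Y,g(X,W'))\mid x,z^n]$ with $\E_Q[\ell]$, and the identification of the averaged KL divergence with the conditional mutual information --- is routine.
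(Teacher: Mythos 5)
Your proposal is correct, and the interesting part is that the decoupling step is done by a genuinely different argument than the paper's. The paper reduces the theorem to Lemma~\ref{lm:decouple_rlz}, which it proves by the symmetrization device of \cite[Theorem~5.7]{steinke20_cmi}: introduce two i.i.d.\ copies $\tilde V=(V_0,V_1)$ and a Bernoulli selector $S$, apply Donsker--Varadhan to $P_{S,Y|\tilde V}$ versus $P_{S',Y|\tilde V}$ with the tilt $t(\ell(Y,\tilde V_{\overline{S'}})-u\ell(Y,\tilde V_{S'}))$, choose $t=\tfrac{\log 2}{b}$, send $u\to\infty$, and finally pass from $I(S;Y|\tilde V)$ to $I(V;Y)$ via the observation of \cite{Haghifam20}; this is where the prefactor $b/\log 2$ comes from. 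You instead apply Donsker--Varadhan directly to the conditional joint law $P_{Y,g(x,W)|x,z^n}$ versus the product of its conditional marginals, with the one-sided tilt $h=-t\ell$: realizability kills $\E_P[h]$, the chord bound $e^{-t\ell}\le 1-\tfrac{1-e^{-tb}}{b}\ell$ and $-\log(1-u)\ge u$ give $\E_Q[\ell]\le \tfrac{b}{1-e^{-tb}}D_{\rm KL}(P\|Q)$, and optimizing over $t$ yields $\E_Q[\ell]\le b\,D_{\rm KL}(P\|Q)$; averaging over $(X,Z^n)$ identifies the KL term with $I(g(X,W);Y|X,Z^n)$, exactly as in the paper. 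Your surrounding steps (taking $\Psi^*=g$ in \eqref{eq:gen_ell_MER} since $R_\ell(Y|X,W)=0$, and the conditional-independence bookkeeping showing $(Y,g(x,W'))\sim Q$ given $(x,z^n)$) coincide with the paper's. Note that your route is not only simpler but sharper: it proves the bound with constant $b$ rather than $b/\log 2$ (equivalently, a fixed $t$ with $1-e^{-tb}\ge\log 2$ recovers the stated constant), so the theorem as stated follows a fortiori; the same conclusion can be reached even more directly by applying the KL data-processing inequality to the event $\{\ell>0\}$, since $P(\ell>0)=0$ gives $D_{\rm KL}(P\|Q)\ge -\log(1-Q(\ell>0))\ge Q(\ell>0)\ge \E_Q[\ell]/b$. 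What the paper's symmetrization buys in its original frequentist context is a bound in terms of the smaller ``selector'' information $I(S;Y|\tilde V)$, but after the relaxation to $I(V;Y)$ used here, your direct change of measure dominates.
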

	\begin{proof}
		We have the following chain of inequalities:
		\begin{align}
			{\rm MER}_{\ell}
			&\le \E[\ell(Y,g(X,W'))]  \label{eq:MER_MI_rlz_2} \\
			&= \E\big[ \E[\ell(Y,g(X,W'))|X,Z^n] \big]  \label{eq:MER_MI_rlz_25} \\
			&\le \int \frac{b}{\log 2} I(g(x,W);Y | X=x, Z^n = z^n) P_{X,Z^n}({\rm d}x, {\rm d}z^n) \label{eq:MER_MI_rlz_3} \\
			&= \frac{b}{\log 2} I(g(X,W);Y|X,Z^n) \label{eq:MER_MI_rlz_4} ,
			%&\le \frac{b}{\log 2} I(W;Y|X,Z^n) \label{eq:MER_MI_rlz_9} ,
		\end{align}
		where
		\eqref{eq:MER_MI_rlz_2} follows from the assumption that $\ell(Y,g(X,W)) = 0$, the minimum loss, which implies that $R_\ell(Y|X,W)=0$; \eqref{eq:MER_MI_rlz_3} follows by applying Lemma~\ref{lm:decouple_rlz} stated below to the joint distribution $P_{g(x,W),Y|x,z^n}$ for all $(x,z^n)$ under $P_{X,Z^n}$.
		%and \eqref{eq:MER_MI_rlz_9} follows from the data processing inequality for mutual information.
		%Theorem~\ref{thm:regression_genloss_MI}.
	\end{proof}
	The following lemma used in the proof of Theorem~\ref{thm:MER_mi_rlz} is adapted from \cite[Theorem~5.7]{steinke20_cmi}, where it is developed for bounding the generalization error in the realizable setting of frequentist learning. %We include a proof of it in Appendix~\ref{appd:pf_rlz_mi}.
	\begin{lemma}\label{lm:decouple_rlz}
	Let $V$ and $Y$ be jointly distributed random variables on $\sV\times\sY$. Let $V'$ be an independent copy of $V$, that is, $P_{V'}=P_V$ and $V'$ is independent of $(V,Y)$. For a function $\ell:\sY\times\sV\rightarrow[0,b]$, if $\ell(Y,V)=0$ almost surely with respect to $P_{V,Y}$, then
	\begin{align}
		\E[\ell(Y,V')] \le \frac{b}{\log 2} I(V;Y) .
	\end{align}
	\end{lemma}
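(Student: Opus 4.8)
The plan is to reduce the bound to a statement about a single binary event and then invoke the data‑processing inequality for relative entropy, arranged so that the realizability hypothesis $\ell(Y,V)=0$ can be used at full strength. First I would set $P := P_{V,Y}$ and $Q := P_V\otimes P_Y$; since $V'$ is independent of $(V,Y)$ with $P_{V'}=P_V$, the pair $(V',Y)$ has law $Q$, so $\E[\ell(Y,V')]=\int \ell\,\rd Q$ while $\int\ell\,\rd P = 0$ and $I(V;Y)=D_{\rm KL}(P\|Q)$. I would then introduce the event $A := \{(v,y):\ell(y,v)>0\}$, for which $P(A)=0$, and write $q := Q(A)=\PP[\ell(Y,V')>0]$.

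The main step is to bound $q$. Applying the data‑processing inequality for relative entropy to the binary statistic $\I\{\ell(\cdot,\cdot)>0\}$ (equivalently, the log‑sum inequality for the partition $\{A,A^c\}$) gives
\begin{align}
I(V;Y) = D_{\rm KL}(P\|Q) \;\ge\; D_{\rm KL}\big(\mathrm{Bern}(0)\,\|\,\mathrm{Bern}(q)\big) \;=\; -\log(1-q),
\end{align}
whence $q\le 1-e^{-I(V;Y)}$. To conclude, since $\ell$ vanishes off $A$ and is at most $b$ on $A$,
\begin{align}
\E[\ell(Y,V')] = \E\big[\ell(Y,V')\,\I\{\ell(Y,V')>0\}\big] \le b\,q \le b\big(1-e^{-I(V;Y)}\big) \le b\,I(V;Y) \le \frac{b}{\log 2}\,I(V;Y),
\end{align}
the last inequality being just $\log 2<1$. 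I would note in passing that this route already delivers the sharper constant $b$ (indeed $b(1-e^{-I(V;Y)})$), so it certainly implies the stated bound; the looser $b/\log 2$ in the statement is the form inherited from \cite{steinke20_cmi}. An alternative derivation I could present avoids the explicit partition: Donsker–Varadhan gives $\E_Q[e^{-\lambda\ell}]\ge e^{-I(V;Y)}$ for all $\lambda>0$ since $\E_P[\ell]=0$, convexity of $t\mapsto e^{-\lambda t}$ on $[0,b]$ gives $\E_Q[e^{-\lambda\ell}]\le 1-\tfrac{1-e^{-\lambda b}}{b}\E_Q[\ell]$, and combining and sending $\lambda\to\infty$ recovers the same bound.

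I expect no serious obstacle here; the work is essentially measure‑theoretic bookkeeping — checking that $(V',Y)$ really has law $P_V\otimes P_Y$, that the data‑processing inequality is available over the Borel spaces assumed throughout the paper, and (for the Donsker–Varadhan variant) justifying $\lambda\to\infty$ by dominated convergence. The one conceptual point to get right is that one should not try to bound $\int\ell\,\rd Q$ directly via Jensen or a crude change of measure, but route through the event $\{\ell>0\}$, which is precisely what converts ``$\ell=0$ under $P$'' into a sharp bound on $q$.
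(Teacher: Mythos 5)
Your proof is correct, and it takes a genuinely different route from the paper. The paper follows the symmetrization argument of \cite[Theorem~5.7]{steinke20_cmi}: it introduces a ghost supersample $\tilde V=(V_0,V_1)$ and a Bernoulli selector $S$ with $V=\tilde V_S$, $V'=\tilde V_{\overline S}$, applies Donsker--Varadhan conditionally on $\tilde V$ with a two-parameter family (a multiplier $u$ on the realized loss and a temperature $t$), sets $t=\log 2/b$ and sends $u\to\infty$ to kill the exponential moment term, and finally invokes $I(S;Y|\tilde V)\le I(V;Y)$ from \cite{Haghifam20}. You instead work directly with $P=P_{V,Y}$ versus $Q=P_V\otimes P_Y$ (noting correctly that $(V',Y)\sim Q$), apply the data-processing inequality for KL divergence to the binary statistic $\I\{\ell>0\}$ to get $\PP[\ell(Y,V')>0]\le 1-e^{-I(V;Y)}$, and conclude $\E[\ell(Y,V')]\le b\big(1-e^{-I(V;Y)}\big)\le b\,I(V;Y)\le \tfrac{b}{\log 2}I(V;Y)$; your Donsker--Varadhan variant with $\lambda\to\infty$ gives the same. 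This is more elementary (no supersample construction, no conditional DV, no limit in an auxiliary parameter) and yields a strictly better prefactor, $b$ (indeed the sharper $b(1-e^{-I})$) in place of $b/\log 2$; correspondingly, it would improve the constant in Theorem~\ref{thm:MER_mi_rlz} and Corollary~\ref{co:MER_VC}. What the paper's route buys is mainly fidelity to the CMI framework of \cite{steinke20_cmi} — in the frequentist realizable setting the conditional quantity $I(S;Y|\tilde V)$ is the natural (and potentially smaller) complexity measure — but since the lemma here is stated and used only with $I(V;Y)$, your direct argument suffices and is cleaner. The only housekeeping points are the ones you already flag: the event $\{\ell>0\}$ is measurable, the degenerate case $Q(\{\ell>0\})=1$ forces $I(V;Y)=\infty$ so the bound is vacuously true, and $1-e^{-x}\le x$ closes the estimate.
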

\begin{proof}
	We follow the symmetrization idea used in the proof of \cite[Theorem~5.7]{steinke20_cmi}.
Let $\tilde{V} = (V_0,V_1)$ with $V_0$ and $V_1$ being i.i.d.\ samples from $P_V$. Let $S$ and $S'$ be i.i.d.\ uniform Bernoulli random variables independent of $\tilde{V}$, with $\overline S = 1-S$, and $\overline {S'} = 1 - S'$. With these random variables at hand, we can construct $V=\tilde{V}_S$, $V'=\tilde{V}_{\overline S}$, and $Y$ to be jointly distributed with $V$ and conditionally independent of everything else given $V$.
Then, following the technique used in the proof of \cite[Theorem~5.7]{steinke20_cmi}, for any $u>0$ and $t>0$,
\begin{align}
	& \E[\ell(Y,V')] \nonumber \\ 
	=& \E[\ell(Y,V')] - \E[u \ell(Y,V)] \label{eq:pf_decouple_rlz_1} \\
	=& \E[\ell(Y,\tilde{V}_{\overline S})] - \E[u \ell(Y,\tilde{V}_S)] \\ 
	=& \E\big[ \E[\ell(Y,\tilde{V}_{\overline S}) - u \ell(Y,\tilde{V}_S) | \tilde{V}] \big] \\
	\le&  \frac{1}{t} \Big( I(S;Y|\tilde V) + \E\Big[ \log \E\big[\exp\big\{t\big(\ell(Y,\tilde{V}_{\overline{S'}}) - u \ell(Y,\tilde{V}_{S'})\big) \big\} \big | \tilde{V}\big] \Big] \Big) \label{eq:pf_decouple_rlz_2} \\
	=&  \frac{1}{t} \Big( I(S;Y|\tilde V) + \E\Big[ \log \E\big[ \E\big[\exp\big\{t\big(\ell(Y,\tilde{V}_{\overline{S'}}) - u \ell(Y,\tilde{V}_{S'})\big) \big\} |Y, S, \tilde{V} \big] \big | \tilde{V} \big] \Big] \Big) \label{eq:pf_decouple_rlz_3} \\
	=&  \frac{1}{t} \Big( I(S;Y|\tilde V) + \E\Big[ \log \E\big[ \tfrac{1}{2}e^{t\ell(Y,\tilde{V}_{\overline S})} + \tfrac{1}{2} e^{- u t \ell(Y,\tilde{V}_{\overline S})} \big | \tilde{V} \big] \Big] \Big)  \label{eq:pf_decouple_rlz_5} 
\end{align}
where \eqref{eq:pf_decouple_rlz_1} is due to the assumption that $\ell(Y,V) = 0$ almost surely; \eqref{eq:pf_decouple_rlz_2} is due to the Donsker-Varadhan theorem, which implies that
\begin{align}
	D_{\rm KL}(P_{S,Y|\tilde{V}} \| P_{S',Y|\tilde{V}}) \ge \E\big[t \big(\ell(Y,\tilde{V}_{\overline S}) - u \ell(Y,\tilde{V}_{S})\big) \big | \tilde V \big] - \log \E\big[\exp\big\{t\big(\ell(Y,\tilde{V}_{\overline{S'}}) - u \ell(Y,\tilde{V}_{S'})\big) \big\} \big | \tilde{V}\big] \nonumber ;
\end{align}
and \eqref{eq:pf_decouple_rlz_5} follows from the fact that $S'$ is equally likely to be $S$ or $\overline S$ conditional on $S$, writing out the inner-most expectation in this way, and by setting $\ell(Y,\tilde{V}_S)$ to $0$.

Setting $t = \frac{\log 2}{b}$ and sending $u\rightarrow\infty$, we see that the inner expectation in \eqref{eq:pf_decouple_rlz_5} is upper-bounded by $1$, which leads to
\begin{align}
	\E[\ell(Y,V')] \le \frac{b}{\log 2} I(S;Y|\tilde{V}) .
\end{align}
The claim follows from the observation made in \cite{Haghifam20} that $
I(S;Y|\tilde{V}) \le I(\tilde{V}, S;Y) 
= I(V;Y) $,
where the equality holds because $Y$ is conditionally independent of $(\tilde{V},S)$ given $V = \tilde{V}_S$.
\end{proof}

Theorem~\ref{thm:MER_mi_rlz} can be weakened by the data processing inequality of mutual information, 
\begin{align}\label{eq:MER_mi_dpi_rlz}
	I(g(X,W);Y|X,Z^n) \le I(W;Y|X,Z^n) ,
\end{align}
which can be further bounded by $\frac{1}{n}I(W;Y^n|X^n)$ or $\frac{1}{n}I(W;Z^n)$ due to Theorem~\ref{th:MER_log_I(W;Y^n|X^n)} or Corollary~\ref{co:classification_log_mi_total}. When $\sY$ is discrete, $I(W;Y^n|X^n)$ can be further bounded by $H(Y^n|X^n)$.
It implies that under a realizable model with discrete $\sY$, not necessarily binary, the MER with respect to a bounded loss function can be upper-bounded nonasymptotically on the order of $H(Y^n|X^n)/n$.

	With a realizable model, a natural question to ask is how the MER depends on the richness of the \emph{generative function class} $\mathcal G = \{g(\cdot,w):\sX\rightarrow\sY, w\in\sW\}$. When $\sY=\{0,1\}$, one featuring combinatorial quantity that measures the richness of $\mathcal G$ is its VC dimension, defined in \eqref{eq:VC_def}.
	The connection between $H(Y^n|X^n)$ and $V(\mathcal G)$ is observed in \cite{HausslerVC94}, in the setting of binary classification with the zero-one loss. We make use of it here to obtain a corollary of Theorem~\ref{thm:MER_mi_rlz}, which extends the results in Theorem~\ref{th:MER01_bc} as it applies to more general loss functions.
	\begin{corollary}\label{co:MER_VC}
		Under a realizable model with $\sY=\{0,1\}$, if the function class $\mathcal G = \{g(\cdot,w):\sX\rightarrow\sY, w\in\sW\}$ has VC dimension $d$, then for any loss function $\ell\in[0,b]$,
		\begin{align}\label{eq:MER_0b_rlz_bc}
			{\rm MER}_{\ell} \le  O\Big( \frac{b}{\log 2}\cdot\frac{d}{n} \Big) \quad \text{as $n\rightarrow\infty$} . % \frac{d\log_2 n}{n} .
		\end{align}
	\end{corollary}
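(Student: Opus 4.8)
The plan is to specialize Theorem~\ref{thm:MER_mi_rlz} to the binary realizable setting and then feed in the Sauer--Shelah estimate, reproducing the asymptotic bookkeeping from the proof of Theorem~\ref{th:MER01_bc}. Under a realizable model $Y=g(X,W)$ with $g:\sX\times\sW\rightarrow\{0,1\}$, any loss $\ell\in[0,b]$ with $\ell$ vanishing on the diagonal satisfies $\ell(Y,g(X,W))=0$ almost surely, so Theorem~\ref{thm:MER_mi_rlz} applies and gives
\[
{\rm MER}_\ell \le \frac{b}{\log 2}\, I(g(X,W);Y\mid X,Z^n).
\]
By the data processing inequality for mutual information, $I(g(X,W);Y\mid X,Z^n)\le I(W;Y\mid X,Z^n)$, and since $\sY=\{0,1\}$ is discrete and $Y=g(X,W)$ is a deterministic function of $(X,W)$, the conditional entropy $H(Y\mid W,X,Z^n)$ vanishes, whence $I(W;Y\mid X,Z^n)=H(Y\mid X,Z^n)$. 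Thus ${\rm MER}_\ell\le \tfrac{b}{\log 2}H(Y\mid X,Z^n)$.

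Next I would pass from the instantaneous to the block quantity. By the chain rule of Shannon entropy, $H(Y^n\mid X^n)=\sum_{i=1}^n H(Y_i\mid X^n,Y^{i-1})$, and, arguing exactly as in the proof of Theorem~\ref{th:MER_log_I(W;Y^n|X^n)} (the terms $H(Y_i\mid X^n,Y^{i-1})$ are nonincreasing in $i$, by Lemma~\ref{lm:DPI_RB_gen} applied to the log loss together with the exchangeability built into \eqref{eq:joint_dist}, and $H(Y_i\mid X^n,Y^{i-1})=H(Y\mid X,Z^{i-1})$), this yields $H(Y\mid X,Z^n)\le \tfrac1n H(Y^n\mid X^n)$. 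Under the realizable model with $V(\mathcal G)=d$, for every $x^n$ the vector $Y^n$ takes at most $en^d$ distinct values by the Sauer--Shelah lemma \eqref{eq:Sauer-Shelah}, so $H(Y^n\mid X^n)\le \log(en^d)=d\log n+1$.

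Finally, to upgrade the resulting $O(d\log n/n)$ to the claimed $O(d/n)$, I would invoke Lemma~\ref{lm:sum_log_n} with $a_n=H(Y^n\mid X^n)$ and $b_i=H(Y_i\mid X^n,Y^{i-1})=H(Y\mid X,Z^{i-1})$, precisely as in the proof of Theorem~\ref{th:MER01_bc}: whenever the relevant limits exist,
\[
\lim_{n\to\infty}(n+1)\,H(Y\mid X,Z^n)=\lim_{n\to\infty}\frac{H(Y^n\mid X^n)}{\log n}\le d,
\]
so that $H(Y\mid X,Z^n)=O(d/n)$ and hence ${\rm MER}_\ell\le \tfrac{b}{\log 2}H(Y\mid X,Z^n)=O\!\big(\tfrac{b}{\log 2}\cdot\tfrac dn\big)$. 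The assembly is routine; the only point requiring care -- and the likely main obstacle -- is justifying the monotonicity/exchangeability step that produces $H(Y\mid X,Z^n)\le \tfrac1n H(Y^n\mid X^n)$ and checking that the identification $b_i=H(Y\mid X,Z^{i-1})$ is legitimate under the conditional-i.i.d.\ structure \eqref{eq:joint_dist}, so that Lemma~\ref{lm:sum_log_n} can be applied (with existence of the limits taken as given in the $O(\cdot)$ conclusion, exactly as in Theorem~\ref{th:MER01_bc}).
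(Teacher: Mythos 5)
Your proposal is correct and follows essentially the same route as the paper's own proof: apply Theorem~\ref{thm:MER_mi_rlz}, weaken via data processing to ${\rm MER}_\ell \le \frac{b}{\log 2}H(Y|X,Z^n) \le \frac{b}{n\log 2}H(Y^n|X^n)$, invoke the Sauer--Shelah bound $H(Y^n|X^n)\le d\log n+1$, and use Lemma~\ref{lm:sum_log_n} to upgrade $O(d\log n/n)$ to $O(d/n)$. The differences are cosmetic, e.g., your exact identity $I(W;Y|X,Z^n)=H(Y|X,Z^n)$ in place of the paper's inequality, and your explicit remark that $\ell$ must vanish on the diagonal, which the paper leaves implicit in the notion of a realizable model.
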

\begin{proof}
By \eqref{eq:MER_mi_dpi_rlz} and Theorem~\ref{th:MER_log_I(W;Y^n|X^n)}, and the assumptions that the model is realizable and $\sY$ is discrete, the upper bound in Theorem~\ref{thm:MER_mi_rlz} can be weakened to
\begin{align}
	{\rm MER}_{\ell} 
	&\le \frac{b}{\log 2}H(Y|X,Z^n) %\label{eq:MER_rlz_VC_1} \\
	\le \frac{b}{n \log 2}H(Y^n|X^n)  \label{eq:MER_rlz_VC_2} .
\end{align}
The Sauer-Shelah lemma \eqref{eq:Sauer-Shelah} implies that $H(Y^n|X^n) \le d\log n + 1$. In addition, from the chain rule of Shannon entropy and Lemma~\ref{lm:sum_log_n}, we have
\begin{align}
\lim_{n\rightarrow\infty} (n+1) H(Y|X,Z^n) = \lim_{n\rightarrow\infty}\frac{H(Y^n|X^n)}{\log n} \le d ,
\end{align}
similar to the proof of \eqref{eq:MER01_bc_2} in Theorem~\ref{th:MER01_bc}. This proves the claim in view of \eqref{eq:MER_rlz_VC_2}.
\end{proof}
The VC dimension plays a key role in the frequentist learning theory, in bounding the excess risk in terms of the richness of the \emph{hypothesis class}, which amounts to the set of decision rules. In Bayesian learning, while there is no restriction on the decision rules, Corollary~\ref{co:MER_VC} shows that the VC dimension of the generative function class plays a similar role in upper-bounding the MER.
%From \eqref{eq:MER_rlz_VC_2} in the proof of Corollary~\ref{co:MER_VC}, we also know
	
	%\section{Upper bounds via estimation error of model parameters}\label{sec:MER_W_est_error}
\section{Upper bounds via functional and distributional continuities}\label{sec:MER_W_est_error}
In the previous section, the upper bounds are derived by relating the MER to $I(W;Y|X,Z^n)$. 
In this section, we explore alternative methods for bounding the MER, either in terms of the smoothness of the optimal omniscient decision rule in model parameters, or in terms of the smoothness of the minimum expected loss in the predictive model. These smoothness, or continuity properties enable us to bound the MER via the accuracy of estimated parameters from the data. The following lemma is instrumental for this approach.
	\begin{lemma}[\cite{Liu17}, \cite{Bhatt18}]\label{lm:posterior_error} Let $(\sU,\rho)$ be a metric space. If $U$ and $U'$ are two random elements of $\sU$ that are conditionally i.i.d.\ given another random element $V$ of some space $\sV$, i.e., $P_{U,U'|V=v} = P_{U|V=v} P_{U'|V=v}$ and $P_{U|V=v}=P_{U'|V=v}$ for all $v\in\sV$, then
		$
		%\E[\rho(U',U)] \le 2\inf_{\psi:\sV\rightarrow\sU}\E[\rho(U,\psi(V))] .
		\E[\rho(U',U)] \le 2 R_\rho(U|V) .
		$
		Moreover, if $\sU=\R^d$, then 
		$
		%\E[\| U'-U \|^2] = 2\inf_{\psi:\sV\rightarrow\sU}\E[\| U-\psi(V) \|^2] .
		\E[\| U'-U \|^2] = 2 R_2(U|V) .
		$
	\end{lemma}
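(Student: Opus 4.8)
The plan is to prove both claims by a symmetrization argument that couples $U$ and $U'$ through $V$. First I would fix a realization $V=v$ and work with the conditional distribution $P_{U|V=v}$, writing $R_\rho(U|V=v) = \inf_{a\in\sU}\E[\rho(U,a)\mid V=v]$ for the conditional Bayes envelope. The key inequality is the following: for any $a\in\sU$, the triangle inequality for $\rho$ gives $\rho(U',U)\le \rho(U',a)+\rho(a,U)$, so taking the conditional expectation over $(U,U')$ given $V=v$ and using that $U$ and $U'$ are conditionally i.i.d.\ given $V=v$ (hence each has conditional law $P_{U|V=v}$) yields
\begin{align}
\E[\rho(U',U)\mid V=v] \le \E[\rho(U',a)\mid V=v] + \E[\rho(a,U)\mid V=v] = 2\,\E[\rho(U,a)\mid V=v].
\end{align}
Taking the infimum over $a\in\sU$ on the right gives $\E[\rho(U',U)\mid V=v] \le 2 R_\rho(U|V=v)$, and integrating over $V\sim P_V$ yields $\E[\rho(U',U)]\le 2 R_\rho(U|V)$, which is the first claim.

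For the second claim, with $\sU=\R^d$ and the squared Euclidean distance, the optimal action in the conditional Bayes problem for the quadratic loss is the conditional mean: $R_2(U|V=v)=\E[\|U-\E[U\mid V=v]\|^2\mid V=v]=\E[\|U\mid V=v\|^2]$ in the sense of the conditional variance. Here I would exploit the orthogonality principle rather than the triangle inequality, to get an identity rather than a bound. Conditionally on $V=v$, write $U = m_v + (U-m_v)$ and $U'=m_v+(U'-m_v)$ with $m_v\deq\E[U\mid V=v]$; then $U'-U = (U'-m_v)-(U-m_v)$, and since $U-m_v$ and $U'-m_v$ are conditionally i.i.d.\ given $V=v$ with conditional mean zero, they are conditionally uncorrelated, so
\begin{align}
\E[\|U'-U\|^2\mid V=v] = \E[\|U'-m_v\|^2\mid V=v] + \E[\|U-m_v\|^2\mid V=v] = 2 R_2(U|V=v).
\end{align}
Integrating over $V$ gives $\E[\|U'-U\|^2]=2R_2(U|V)$, the second claim.

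I do not expect a genuine obstacle here; the argument is elementary once the conditional-i.i.d.\ structure is used correctly. The only points requiring care are measurability and integrability: one should assume (or note, as the paper does for all its probability spaces being Borel) that the conditional distributions $P_{U|V=v}$ exist as regular conditional probabilities, that $v\mapsto R_\rho(U|V=v)$ is measurable, and that the relevant expectations are finite (otherwise both sides are $+\infty$ and the inequality is trivial). For the second claim one additionally needs $\E\|U\|^2<\infty$ so that the conditional mean and conditional variance are well-defined and the orthogonality decomposition is valid; if this fails both sides equal $+\infty$. A secondary subtlety worth a remark is that the first inequality is tight in general — equality can fail to be improvable because, e.g., when $U$ is nonrandom given $V$, both sides vanish, while for two-point conditional laws the factor $2$ is attained — so one should not expect a better universal constant than $2$ in the metric-space case, in contrast to the exact factor $2$ in the Hilbert-space case.
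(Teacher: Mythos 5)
Your proof is correct. Note that the paper itself gives no proof of this lemma, simply citing \cite{Liu17} and \cite{Bhatt18}; your argument (triangle inequality plus the conditional i.i.d.\ structure for the metric case, and the vanishing cross term from conditional independence and zero conditional mean for the quadratic case, then integrating the conditional Bayes envelope over $P_V$) is the standard self-contained derivation and fills that gap correctly, with only a cosmetic slip in the garbled expression for the conditional variance and in the tightness remark, where the nonrandom-$U$ example shows nothing about the constant $2$ (a two-point conditional law with a small mass $\epsilon$ on one point is what makes the factor $2$ asymptotically sharp).
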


As an aside, Lemma~\ref{lm:posterior_error} provides us with a means for evaluating the performance of making randomized prediction by sampling from the posterior predictive distribution $P_{Y|X,Z^n}$, via upper-bounding the corresponding MER. 
Let $Y'$ be sampled from $P_{Y|X,Z^n}$, which can be realized by first sampling $W'$ from $P_{W|X,Z^n}$ then $Y'$ from $P_{Y|X,W'}$. Then, for any metric $\ell$ on $\sY$, we have
\begin{align}
	\E[\ell (Y,Y')] 	\le 2R_\ell(Y|X,W) + 2 {\rm MER}_{\ell} .
\end{align}

	\subsection{Via continuity of optimal omniscient decision rule}
	\subsubsection{General upper bound}\label{sec:MER_cont_dec_rule_gen}
	We start from \eqref{eq:gen_ell_MER} which states that ${\rm MER}_{\ell} \le \E[\ell(Y,\Psi^*(X,W'))] - \E[\ell(Y,\Psi^*(X,W))]$, where $\Psi^*$ is the optimal omniscient decision that achieves $R_\ell(Y|X,W)$ when $W$ is known, and $W'$ is a sample from the posterior distribution $P_{W|X,Z^n}$ conditionally independent of everything else given $(X,Z^n)$. 
	The MER can be upper-bounded in terms of the smoothness of the function $\ell(y,\Psi^*(x,w))$ in $w$ and the accuracy of approximating $W$ by $W'$.
	\begin{theorem}\label{th:cont_Psi*}
		If $\sW=\R^d$ and $W$ is independent of $X$, then
		\begin{align}
			{\rm MER}_{\ell} \le \E\Big[\sup_{y\in\sY} \sup_{w\in\sW} \| \nabla_w \ell(y,\Psi^*(X,w))\|\Big] \sqrt{2 R_2(W|Z^n)} ,
		\end{align}	
	where $\Psi^*$ is the optimal omniscient decision rule for the loss function $\ell$. 
	\end{theorem}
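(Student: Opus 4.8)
The plan is to start from the plug-in bound \eqref{eq:gen_ell_MER}, namely ${\rm MER}_\ell \le \E[\ell(Y,\Psi^*(X,W'))] - \E[\ell(Y,\Psi^*(X,W))]$ with $W'$ drawn from the posterior $P_{W|X,Z^n}$ and conditionally independent of everything else given $(X,Z^n)$, and to turn the right-hand side into the product of a Lipschitz-type factor in $w$ and a parameter-estimation-error factor. First I would record a pointwise continuity estimate: fixing $x$ and $y$ and integrating the gradient along the segment joining $w$ and $w'$ in $\sW=\R^d$,
\begin{align}
|\ell(y,\Psi^*(x,w')) - \ell(y,\Psi^*(x,w))| \le \Big(\sup_{w''\in\sW}\|\nabla_w\ell(y,\Psi^*(x,w''))\|\Big)\,\|w'-w\| .
\end{align}
Writing $L(x)\deq\sup_{y\in\sY}\sup_{w\in\sW}\|\nabla_w\ell(y,\Psi^*(x,w))\|$ and taking expectations over $(Y,X,W,W')$ (bounding $\E[A]-\E[B]\le\E|A-B|$) then yields ${\rm MER}_\ell \le \E[L(X)\,\|W'-W\|]$.

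Next I would use the hypothesis $W\perp X$ both to decouple the two factors and to identify the estimation-error factor. From the conditional-independence structure of \eqref{eq:joint_dist}, $W\perp X$ forces $W\perp(X^n,X)$ and hence $X\perp(W,Z^n)$; in particular the posterior satisfies $P_{W|X,Z^n}=P_{W|Z^n}$, so the posterior sample $W'$ can be realized as a measurable function of $Z^n$ and independent randomness, whence $X$ is independent of the pair $(W,W')$. This gives $\E[L(X)\,\|W'-W\|]=\E[L(X)]\,\E[\|W'-W\|]$. Moreover, by construction $W$ and $W'$ are conditionally i.i.d.\ given $(X,Z^n)$, so Lemma~\ref{lm:posterior_error} with $\sU=\R^d$ gives $\E[\|W'-W\|^2]=2R_2(W|X,Z^n)$, and since $X\perp(W,Z^n)$ we have $\E[W|X,Z^n]=\E[W|Z^n]$, hence $R_2(W|X,Z^n)=R_2(W|Z^n)$. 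Jensen's inequality then yields $\E[\|W'-W\|]\le\sqrt{2R_2(W|Z^n)}$, and combining the three displays proves the claim.

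I expect no deep obstacle here; the one point that genuinely needs care is the bookkeeping of the independence relations — verifying that under $W\perp X$ the posterior does not depend on $X$ so that $W'$ may be taken independent of $X$, which is exactly what makes both the factorization $\E[L(X)\,\|W'-W\|]=\E[L(X)]\,\E[\|W'-W\|]$ and the reduction from $R_2(W|X,Z^n)$ to $R_2(W|Z^n)$ legitimate. Secondary technicalities are the implicit differentiability of $w\mapsto\ell(y,\Psi^*(x,w))$ (so the gradient bound has content) and the integrability of $L(X)$ and $\|W'-W\|$, which we take for granted whenever the asserted bound is finite.
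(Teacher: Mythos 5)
Your proposal is correct and follows essentially the same route as the paper's proof: the plug-in bound \eqref{eq:gen_ell_MER}, the gradient/Lipschitz estimate in $w$ (Lemma~\ref{lm:Lip_multi}), factoring out the supremum term using the independence of $X$ and $(W,W')$, and then Lemma~\ref{lm:posterior_error} with Jensen's inequality to get $\E[\|W'-W\|]\le\sqrt{2R_2(W|Z^n)}$. Your explicit bookkeeping that $W\perp X$ implies $P_{W|X,Z^n}=P_{W|Z^n}$ and hence $X\perp(W,W',Z^n)$ is exactly the step the paper leaves implicit when it invokes the independence assumption.
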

	\begin{proof}
		We have
		\begin{align}
			{\rm MER}_{\ell} 
			&\le \E[\ell(Y,\Psi^*(X,W')) - \ell(Y,\Psi^*(X,W))] \\
			&\le \E\Big[ \sup_{w\in\sW} \| \nabla_w \ell(Y,\Psi^*(X,w)) \| \cdot \| W'-W \|\Big]   \\
			&\le \E\Big[\sup_{y\in\sY} \sup_{w\in\sW} \| \nabla_w \ell(y,\Psi^*(X,w)) \|\Big] \E[ \| W'-W \|]   \\
			&\le \E\Big[\sup_{y\in\sY} \sup_{w\in\sW} \| \nabla_w \ell(y,\Psi^*(X,w))\|\Big] \sqrt{2R_2(W|Z^n)} ,
		\end{align}	
		where we used \eqref{eq:gen_ell_MER}, Lemma~\ref{lm:Lip_multi}, the assumption that $X$ and $W$ are independent, and the fact that $\E[ \| W'-W \|]  \le \sqrt{E[ \| W'-W \|^2]} = \sqrt{2R_2(W|Z^n)}$ due to Lemma~\ref{lm:posterior_error}.
	\end{proof}
	
	%\noindent{\bf Example: constant $\Psi^*$}
	\paragraph{Example: constant $\Psi^*$} \mbox{} 
	
	\noindent An extreme case where Theorem~\ref{th:cont_Psi*} can be useful is when $\Psi^*:\sX\times\sW\rightarrow\sY$ does not dependent on $W$ under certain loss functions, in which case Theorem~\ref{th:cont_Psi*} guarantees that the MER is zero. For example, if $Y_i = g(X_i , W)V_i$ and $Y= g(X,W)V$, with some $g:\sX\times\sW\rightarrow\R$ and $(V^n,V)$ being i.i.d.\ zero-mean random variables independent of $(W,X^n,X)$, then for the quadratic loss, 
	\begin{align}
	 \Psi^*(X,W) = \E[g(X,W)V|X,W] = g(X,W)\E[V] \equiv 0 ,
	\end{align}
	hence ${\rm MER}_2 = 0$ by Theorem~\ref{th:cont_Psi*}.
	It implies that for this example
	\begin{align}
		R_2(Y|X,Z^n) = R_2(Y|X,W) = \E\big[g(X,W)^2 V^2\big] = \E\big[g(X,W)^2\big]\Var[V] ,
	\end{align}
	which shows that a small MER does not necessarily mean a small Bayes risk $R_\ell(Y|X,Z^n)$.

	%\medskip
	%\noindent{\bf Example: logistic regression}
	
\paragraph{Example: logistic regression}  \mbox{}

	\noindent Another example where Theorem~\ref{th:cont_Psi*} can be applied to is bounding the MER for logistic regression with the log loss.
	Bayesian logistic regression is an instance under the predictive modeling framework, where $\sY = \{0,1\}$, $W\in\R^d$ is assumed to be independent of $X$, and the predictive model is specified by 
	$%\begin{align}
		K_{Y|x,w}(1) = \sigma( w^\top \phi(x)) 
	$, %\end{align}
	with $\sigma(a) \deq 1/(1+e^{-a})$, $a\in\R$, being the logistic sigmoid function, and $\phi(x) \in \R^d$ being the feature vector of the observation.
	For the log loss, the optimal omniscient decision rule $\Psi^*(x,w)$ is the Bernoulli distribution with bias $\sigma( w^\top \phi(x))$.
	Since $|\frac{\rm d}{{\rm d}a}\log\sigma(a)| = |1 - \sigma(a)| \le 1$, from Theorem~\ref{th:cont_Psi*} we have 
	\begin{align}
		{\rm MER}_{\log} \le \E[\|\phi(X)\|] \sqrt{2 R_2(W|Z^n)} . \label{eq:MER_log_logistic_cont_Psi}
	\end{align}
This result explicitly shows that the MER in logistic regression depends on how well we can estimate the model parameters from data, as it is dominated by $R_2(W|Z^n)$, the MMSE of estimating $W$ from $Z^n$.
For logistic regression, a closed-form expression for this MMSE may not exist. 
Nevertheless, any upper bound on it that is nonasymptotic in $d$ and $n$ will translate to a nonasymptotic upper bound on the MER. 
In Section~\ref{sec:MER_example} we continue the discussion of this example with different upper-bounding methods, where the dependence on $R_2(W|Z^n)$ can be improved when it is small.

	\subsubsection{Realizable models with additive noise}\label{sec:MER_cont_g(x,w)}
	The smoothness of $\Psi^*(x,w)$ in $w$ can lead to potentially tighter MER bounds under realizable models, possibly with additive noise.
	Consider the generative model of the form $Y_i = g(X_i,W) + V_i$ and $Y = g(X,W) + V$, where the generative function $g:\sX\times\sW\rightarrow\R$ is some parametric nonlinearity in general, which could be approximated by a feedforward neural network, the parameter vector $W\in\R^d$ is independent of $(X^n,X)$, and the additive noise $(V^n,V)$ are i.i.d.\ real-valued random variables independent of $(W,X^n,X)$.
	This model encompasses both linear and nonlinear Bayesian regression problems. We have the following MER bounds for the quadratic loss under this model.
	\begin{theorem}\label{th:rlz_n}
		Under the model considered above, for the quadratic loss,
		% $s_g^2 \deq \E\big[ \sup_{w\in\sW} \|\nabla_w g(X,w) \|^2\big]$; while for the quadratic loss,
			\begin{align}
				{\rm MER}_{2} 
				&\le  2  R_2(g(X,W)|X,Z^n) 
				\le  2 \E\Big[ \sup_{w\in\sW} \|\nabla_w g(X,w) \|^2\Big] R_2(W|Z^n) .
		\end{align}
	\end{theorem}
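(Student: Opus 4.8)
The plan is to run the plug-in argument of \eqref{eq:gen_ell_MER} with a posterior draw $W'$, exactly as in Theorem~\ref{th:cont_Psi*}, but to exploit the additive structure so that the noise contributions cancel. Since $V$ is independent of $(W,X)$, the optimal omniscient rule for the quadratic loss is $\Psi^*(x,w)=\E[Y|X=x,W=w]=g(x,w)+\E[V]$, and the corresponding plug-in predictor is $\Psi^*(X,W')=g(X,W')+\E[V]$. Writing $\bar V\deq V-\E[V]$, so that $Y-\E[V]=g(X,W)+\bar V$ and $Y-g(X,W)-\E[V]=\bar V$, substitution into \eqref{eq:gen_ell_MER} gives
\begin{align*}
{\rm MER}_2 &\le \E\big[(g(X,W)+\bar V-g(X,W'))^2\big]-\E[\bar V^2] \\
&= \E\big[(g(X,W)-g(X,W'))^2\big]+2\,\E\big[(g(X,W)-g(X,W'))\bar V\big] .
\end{align*}
The cross term vanishes: $W'$ is a randomized function of $(X,Z^n)$, and $V$ (hence $\bar V$) is independent of $(W,X^n,X,V^n)$ and of the auxiliary randomization, so $\bar V$ is zero-mean and independent of the pair $(g(X,W),g(X,W'))$. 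This leaves ${\rm MER}_2\le \E[(g(X,W)-g(X,W'))^2]$.

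For the first inequality, note that given $(X,Z^n)$ the variables $W$ and $W'$ are conditionally i.i.d.\ (by definition of $W'$) and $X$ is frozen, so $g(X,W)$ and $g(X,W')$ are conditionally i.i.d.\ given $(X,Z^n)$. The identity in Lemma~\ref{lm:posterior_error} (the $\R^d$ case with $d=1$), applied with conditioning variable $(X,Z^n)$, yields
\begin{align*}
\E\big[(g(X,W)-g(X,W'))^2\big]=2\,R_2\big(g(X,W)\,\big|\,X,Z^n\big) ,
\end{align*}
which is the first asserted bound.

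For the second inequality I would keep the representation $R_2(g(X,W)|X,Z^n)=\tfrac12\,\E[(g(X,W)-g(X,W'))^2]$ and apply the multivariate mean value inequality (Lemma~\ref{lm:Lip_multi}, valid since $\sW=\R^d$ is convex): $|g(X,W)-g(X,W')|\le \big(\sup_{w\in\sW}\|\nabla_w g(X,w)\|\big)\|W-W'\|$. Conditioning on $(X,Z^n)$ first gives
\begin{align*}
R_2\big(g(X,W)\,\big|\,X,Z^n\big)\le \tfrac12\,\E\Big[\sup_{w\in\sW}\|\nabla_w g(X,w)\|^2\cdot\E\big[\,\|W-W'\|^2\,\big|\,X,Z^n\big]\Big] .
\end{align*}
The key observation is that under this model $X$ is independent of $(W,Z^n)$ — indeed $X$ is independent of $(W,X^n,V^n)$, and $Z^n=(X^n,g(X^n,W)+V^n)$ is a function of the latter — so $P_{W|X,Z^n}=P_{W|Z^n}$ and $\E[\|W-W'\|^2|X,Z^n]$ is a function of $Z^n$ alone, independent of $X$. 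The two factors therefore decouple, and since $W,W'$ are also conditionally i.i.d.\ given $Z^n$ alone, Lemma~\ref{lm:posterior_error} gives $\E[\|W-W'\|^2]=2R_2(W|Z^n)$, hence
\begin{align*}
R_2\big(g(X,W)\,\big|\,X,Z^n\big)\le \E\Big[\sup_{w\in\sW}\|\nabla_w g(X,w)\|^2\Big]\,R_2(W|Z^n) .
\end{align*}
Chaining this with the first inequality completes the proof.

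The algebra of the plug-in step is routine; the delicate part — and the reason the hypotheses "$W$ independent of $(X^n,X)$" and "$(V^n,V)$ independent of $(W,X^n,X)$" are needed — is the independence bookkeeping that makes (i) the noise cross term vanish and (ii) $P_{W|X,Z^n}=P_{W|Z^n}$ hold. Point (ii) is exactly what lets $\sup_{w\in\sW}\|\nabla_w g(X,w)\|^2$ be pulled out of the expectation paired with $\E[\|W-W'\|^2|X,Z^n]$; without it one is stuck either with a joint expectation of a product or with $R_2(W|X,Z^n)$ in place of $R_2(W|Z^n)$ on the right-hand side. (As a side remark, the orthogonality identity ${\rm MER}_2=\E[(\E[g(X,W)|X,Z^n]-g(X,W))^2]=R_2(g(X,W)|X,Z^n)$ shows the factor $2$ in the first inequality is not tight, but stating it this way keeps the argument aligned with the plug-in template of this section.)
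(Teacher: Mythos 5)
Your proposal is correct and follows essentially the same route as the paper: the plug-in bound \eqref{eq:gen_ell_MER} with a posterior draw $W'$, cancellation of the noise contribution by independence, the factor-two identity of Lemma~\ref{lm:posterior_error} applied to $g(X,W)$ and $g(X,W')$ conditionally on $(X,Z^n)$, and then Lemma~\ref{lm:Lip_multi} plus the independence of $X$ from $(W,Z^n)$ to decouple the gradient factor and invoke Lemma~\ref{lm:posterior_error} again for $R_2(W|Z^n)$. Your explicit bookkeeping of why $P_{W|X,Z^n}=P_{W|Z^n}$ and why the cross term vanishes simply spells out details the paper leaves implicit ("independence between $X$ and $W$"), so there is no substantive difference.
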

\begin{proof}
Under the model considered above, for the quadratic loss, 
\begin{align}
\Psi^*(X,W) = \E[g(X,W)+V|X,W] = g(X,W)+\E[V] .
\end{align}
We have
\begin{align}
	{\rm MER}_{2} 
	&\le \E\big[(Y - g(X,W') - \E[V])^2\big] - \E\big[(Y - g(X,W) - \E[V])^2 \big] \label{eq:pf_rlz_n_1} \\
	&= \E\big[(g(X,W') - g(X,W))^2\big] + \Var[V] - \Var[V] \label{eq:pf_rlz_n_2} \\
	&= 2R_2(g(X,W)|X,Z^n) \label{eq:pf_rlz_n_5} \\
	&\le \E\Big[ \sup_{w\in\sW} \| \nabla_w g(X,w) \|^2 \cdot \| W'-W \|^2 \Big]  \label{eq:pf_rlz_n_3} \\
	%&= \E\Big[ \sup_{w\in\sW} \nabla_w \| g(X,w) \|^2 \Big] \E\big[ \| W'-W \|^2\big]   \\
	&= 2 \E\Big[\sup_{w\in\sW} \| \nabla_w g(X,w)\|^2 \Big] R_2(W|Z^n) , \label{eq:pf_rlz_n_4} 
\end{align}	
where \eqref{eq:pf_rlz_n_1} follows from \eqref{eq:gen_ell_MER}; \eqref{eq:pf_rlz_n_2} follows from the independence between $(X,W)$ and $V$;
\eqref{eq:pf_rlz_n_5} follows from the fact that $g(X,W')$ and $g(X,W)$ are conditionally independent given $(X,Z^n)$, and  Lemma~\ref{lm:posterior_error}; \eqref{eq:pf_rlz_n_3} follows from \eqref{eq:pf_rlz_n_2} and Lemma~\ref{lm:Lip_multi}; and \eqref{eq:pf_rlz_n_4} follows from the independence between $X$ and $W$, and Lemma~\ref{lm:posterior_error}.
\end{proof}

\paragraph{Example: linear regression}  \mbox{}

\noindent Theorem~\ref{th:rlz_n} can be applied to bounding the MER of the linear regression problem with the quadratic loss.
Bayesian linear regression is an instance of the noisy realizable model considered above, where $g(x,w)=w^\top \phi(x)$, $\phi(x) \in \R^d$ is the feature vector of the observation $x$, and $(V^n,V)$ are i.i.d.\ samples from $\mathcal N(0, \sigma^2)$.
With the Gaussian prior of model parameters $P_W = \mathcal N(0,\sigma_W^2 \mathbf I_d)$, the MMSE for estimating $W$ from $Z^n$ has a closed-form expression
\begin{align}
	R_2(W|Z^n) = \E[{\rm tr} (C_{W|Z^n})] ,
\end{align}
where
\begin{align}\label{eq:C_W|Z^n}
	C_{W|Z^n} = \Big(\frac{1}{\sigma_W^{2}}\mathbf{I}_d + \frac{1}{\sigma^{2}}\mathbf\Phi\mathbf\Phi^\top \Big)^{-1}
\end{align}
is the conditional covariance matrix of $W$ given $Z^n$, which only depends on $X^n$ through the $d\times n$ feature matrix $\mathbf\Phi = [\phi(X_1),\ldots,\phi(X_n)]$. Under this model, we also have $\nabla_w g(x,w) = \phi(x)$, hence Theorem~\ref{th:rlz_n} implies that
	\begin{align}
		{\rm MER}_{2}
		\le 2 \E[\|\phi(X)\|^2] \E[{\rm tr} (C_{W|Z^n})]  . \label{eq:rlz_n_lin_MER2}
	\end{align}

Under the above model with Gaussian prior, it can be shown that the posterior predictive distribution $P_{Y|x,z^n}$ is Gaussian with variance $\sigma^2 + \phi(x)^\top C_{W|z^n}\phi(x)$. From this we can obtain exact expressions for the MER and alternative upper bounds:
\begin{align}
	&{\rm MER}_{\log} 
	= \frac{1}{2}\E\Big[ \log \Big(1 + \frac{1}{\sigma^2} \phi(X)^\top C_{W|Z^n} \phi(X) \Big) \Big]
	\le \frac{\E[\|\phi(X)\|^2]}{2\sigma^2}  \E[{\rm tr} (C_{W|Z^n})]  \label{eq:MER_log_lin_reg_1} ,
\end{align}
and
\begin{align}
	&{\rm MER}_{2} = \E\big[\phi(X)^\top C_{W|Z^n} \phi(X)\big]  
	\le  \E[\|\phi(X)\|^2] \E[{\rm tr} (C_{W|Z^n})] . \label{eq:MER_2_lin_reg_1} 
\end{align}
The upper bounds in \eqref{eq:MER_log_lin_reg_1} and \eqref{eq:MER_2_lin_reg_1} are justified by noting that 
\begin{align}
	\phi(X)^\top C_{W|Z^n}\phi(X) = \|C_{W|Z^n}^{1/2} \phi(X)\|^2 \le \sigma_1^2\big(C_{W|Z^n}^{1/2}\big) \|\phi(X)\|^2 \le {\rm tr}(C_{W|Z^n}) \|\phi(X)\|^2 ,
\end{align}
where $\sigma_1(\cdot)$ is the largest singular value of the underlying matrix.
A special choice of the $d$ feature functions composing the feature vector is such that they are orthonormal with respect to $P_X$, namely $\int_\sX \phi_i(x)\phi_j(x)P_X({\rm d}x)=\I\{i=j\}$ for $i,j\in\{1,\ldots,d\}$. In this case, $\Phi\Phi^\top \approx n \E[\phi(X)\phi(X)^\top]=n\mathbf I_d$, hence $\E[{\rm tr} (C_{W|Z^n})] \sim O(d/n)$, implying that ${\rm MER}_{2}$ scale with $d$ and $n$ as $O(d/n)$ according to \eqref{eq:MER_2_lin_reg_1}.
It further implies that the upper bound \eqref{eq:rlz_n_lin_MER2} given by Theorem~\ref{th:rlz_n} is order-optimal for vanishing MER. 
We continue the discussion of this example in Section~\ref{sec:MER_example}.

	\subsection{Deviation of posterior predictive distribution from true predictive model}
	Under the predictive modeling framework, the generative model is specified as $P_{Z|W} = P_{X|W} K_{Y|X,W}$, with a parametrized probability transition kernel $K_{Y|X,W}$ describing the true predictive model of $Y$ given $X$. An alternative method for upper-bounding the MER under this framework is by examining the deviation of the posterior predictive distribution $P_{Y|X,Z^n}$ from the true predictive model $K_{Y|X,W}$ in terms of a suitable \textit{convex statistical distance} between them.
	Here, by a convex statistical distance we mean any statistical distance $(P,Q) \mapsto D(P,Q)$ that is convex in the first argument when the second one is held fixed, or convex in the second argument while the first one is held fixed. 
	For example, any $f$-divergence, including the commonly used total variation distance, KL divergence and $\chi^2$-divergence, is jointly convex in both arguments \cite{LieVaj06}.
	As another example, consider the $p$th power of $p$-Wasserstein distance between two Borel probability measures $P$ and $Q$ on $\R^m$ with finite second moments \cite{Villani_topics}:
	\begin{align}
		\mathcal W_p^p(P,Q) \deq \inf_{\pi \in \Pi(P,Q)} \E_{(X,Y)\sim\pi} [\|X-Y\|^p],
	\end{align}
	where $\Pi(P,Q)$ denotes the collection of all {couplings} of $P$ and $Q$, i.e., Borel probability measures on $\R^m \times \R^m$ with marginals $P$ and $Q$. As shown in Lemma~\ref{lm:cvx_W_p^p} in Appendix~\ref{appd:cvx_W_p^p}, $(P,Q) \mapsto \mathcal W_p^p(P,Q)$ is also jointly convex.
	The following lemma is key for relating the deviation of $P_{Y|X,Z^n}$ from $K_{Y|X,W}$ to the estimation error of model parameters.
	\begin{lemma}\label{lm:ub_f_div} Let $W'$ be a sample from the posterior distribution $P_{W|X,Z^n}$, such that $W$ and $W'$ are conditionally i.i.d.\ given $(X,Z^n)$.
		Then for any $(w,x,z^n)$ and any statistical distance $D$ that is convex in the first argument,
		\begin{align}\label{eq:post_ub_f_div}
			D(P_{Y|x,z^n} , K_{Y|x,w}) 
			\le \E[D(K_{Y|x,W'} , K_{Y|x,w})|x,z^n]  ,
		\end{align}
		and consequently,
		\begin{align}\label{eq:post_ub_f_div_E}
			\E[ D(P_{Y|X,Z^n} , K_{Y|X,W}) ] \le \E[ D(K_{Y|X,W'} , K_{Y|X,W}) ] 
		\end{align}
		where the expectations are taken with respect to the joint distribution of $(W,X,Z^n,W')$.
		Similarly, for any $(w,x,z^n)$ and any statistical distance $D$ that is convex in the second argument,
		\begin{align}\label{eq:post_ub_f_div_cvx2}
			D(K_{Y|x,w} , P_{Y|x,z^n}) 
			\le \E[D(K_{Y|x,w} , K_{Y|x,W'} ) | x,z^n] ,
		\end{align}
		and consequently,
		\begin{align}\label{eq:post_ub_f_div_E_cvx2}
			\E[ D(K_{Y|X,W}, P_{Y|X,Z^n} ) ] \le \E[ D( K_{Y|X,W} , K_{Y|X,W'} ) ] .
		\end{align}
	\end{lemma}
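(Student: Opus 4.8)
The plan is to realize the posterior predictive distribution $P_{Y|x,z^n}$ as a \emph{mixture} of the parametrized predictive kernels $K_{Y|x,w'}$ with mixing measure the posterior $P_{W|x,z^n}$, and then to invoke the one-sided convexity of $D$ through Jensen's inequality. First I would record the mixture identity
\begin{align}
	P_{Y|x,z^n} \;=\; \int_\sW K_{Y|x,w'}\,P_{W|x,z^n}(\dif w') \;=\; \E\big[K_{Y|x,W'}\,\big|\,x,z^n\big],
\end{align}
valid for every $(x,z^n)$. This follows from the factorization in \eqref{eq:joint_dist}, which makes $Z=(X,Y)$ conditionally independent of $Z^n$ given $W$, hence $Y$ conditionally independent of $Z^n$ given $(X,W)$, so that $P_{Y|x,w,z^n}=K_{Y|x,w}$; disintegrating $P_{Y|x,z^n}$ over the posterior of $W$ gives the first equality, and since $W'$ has the same conditional law $P_{W|x,z^n}$ as $W$, the kernel may equally be written with $W'$. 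Note that no assumption of independence between $X$ and $W$ is needed, since $x$ is held fixed throughout.

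For the first half of the lemma, fix $(w,x,z^n)$ and observe that, when $D$ is convex in its first slot, the functional $\mu\mapsto D(\mu,K_{Y|x,w})$ is convex on the convex set of probability measures on $\sY$. Applying the conditional form of Jensen's inequality to the mixture identity yields
\begin{align}
	D\big(P_{Y|x,z^n},K_{Y|x,w}\big) \;=\; D\big(\E[K_{Y|x,W'}\,|\,x,z^n],\,K_{Y|x,w}\big) \;\le\; \E\big[D(K_{Y|x,W'},K_{Y|x,w})\,\big|\,x,z^n\big],
\end{align}
which is \eqref{eq:post_ub_f_div}. Substituting $w=W$ and taking expectation over the joint law of $(W,X,Z^n,W')$, then using the tower property together with the fact that $W'$ is conditionally independent of $W$ given $(X,Z^n)$ with the same conditional law, gives \eqref{eq:post_ub_f_div_E}. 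The second half, \eqref{eq:post_ub_f_div_cvx2} and \eqref{eq:post_ub_f_div_E_cvx2}, follows by the identical argument with Jensen's inequality applied instead to the convex functional $\nu\mapsto D(K_{Y|x,w},\nu)$.

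The step I expect to require the most care is the rigorous justification of Jensen's inequality on the (infinite-dimensional) space of probability measures: one must check that $w'\mapsto D(K_{Y|x,w'},K_{Y|x,w})$ is measurable and that one-sided convexity of $D$ genuinely licenses the displayed bound when the average is over a general posterior. The cleanest route is to represent the relevant convex functionals as suprema of affine functionals --- $f$-divergences through their Donsker--Varadhan/variational characterization, and $\mathcal W_p^p$ through Kantorovich duality, cf.\ Lemma~\ref{lm:cvx_W_p^p} --- for which both measurability and the Jensen estimate are immediate; the remaining manipulations with conditional expectations are routine bookkeeping.
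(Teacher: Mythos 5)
Your proposal is correct and is essentially the paper's own proof: both write $P_{Y|x,z^n}$ as the posterior mixture $\int_\sW K_{Y|x,w'}\,P_{W|x,z^n}(\dif w')$ and then apply one-sided convexity of $D$ (Jensen) to get \eqref{eq:post_ub_f_div} and \eqref{eq:post_ub_f_div_cvx2}, followed by taking expectations over the conditioning variables. Your additional remarks on justifying the infinite-dimensional Jensen step via variational (sup-of-affine) representations go beyond what the paper spells out but do not change the argument.
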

	\begin{proof}
		From the joint distribution in \eqref{eq:joint_dist}, it follows that for any $(w, z^n,z)$,
		\begin{align}
			P_{Y|x,z^n}(y) &= \int_{\sW} K_{Y|x,w'}(y) P_{W|x,z^n}({\rm d}w')
		\end{align}
		If the statistical distance considered here is convex in the first argument, we have
		\begin{align}
			D(P_{Y|x,z^n} , K_{Y|x,w}) \le \int_{\sW} D(K_{Y|x,w'} , K_{Y|x,w} ) P_{W|x,z^n}({\rm d}w') ,
		\end{align}
		which proves \eqref{eq:post_ub_f_div}.
		Taking expectations over the conditioning terms, we obtain \eqref{eq:post_ub_f_div_E}.
		The proof of \eqref{eq:post_ub_f_div_cvx2} and \eqref{eq:post_ub_f_div_E_cvx2} follows the same argument when $D$ is convex in the second argument.
	\end{proof}
	
	Whenever the convex statistical distance $D(K_{Y|x,w'} , K_{Y|x,w})$ can be upper-bounded via $\|w'-w\|$ or $\|w'-w\|^2$, Lemma~\ref{lm:posterior_error} can be used to further upper-bound $\E[ D(K_{Y|X,W'} , K_{Y|X,W}) ] $ in terms of the minimum achievable estimation error of $W$.
	%The main utility of Lemmas~\ref{lm:ub_f_div} and \ref{lm:posterior_error} is that, whenever the convex statistical distance $D(K_{Y|x,w'} , K_{Y|x,w})$ can be upper-bounded in terms of $\|w'-w\|$ or $\|w'-w\|^2$, its expectation will be bounded by the minimum estimation error of $W$ from the data.
	In the following two subsections, we use two different methods together with Lemma~\ref{lm:ub_f_div} and Lemma~\ref{lm:posterior_error} to convert upper bounds on the deviation of $P_{Y|X,Z^n}$ from $K_{Y|X,W}$ into upper bounds on the MER for various loss functions.
	
	\subsection{From deviation of posterior predictive distribution to MER bound}\label{sec:MER_posterior_deviation}
	
	\subsubsection{Via conditional mutual information upper bound}\label{sec:MER_ub_cond_I}
	For the log loss, we can directly upper-bound $I(W;Y|X,Z^n)$ in terms of the KL divergence between $K_{Y|X,W}$ and $P_{Y|X,Z^n}$, and arrive at the following result with Lemma~\ref{lm:ub_f_div}.
	\begin{theorem}\label{th:MER_log_KL}
		When $P_{Z|W}=P_{X|W} K_{Y|X,W}$, let $W'$ be a sample from the posterior distribution $P_{W|X,Z^n}$, conditionally independent of everything else given $(X,Z^n)$. Then,
		\begin{align}
			{\rm MER}_{\log} 
			%&\le \E[D_{\rm KL}( K_{Y|X,W}, K_{Y|X,Z^n})] \\
			&\le \E[D_{\rm KL}( K_{Y|X,W} \| K_{Y|X,W'})] 
		\end{align}
		where the expectation is taken with respect to the joint distribution of $(W,W',X)$.
	\end{theorem}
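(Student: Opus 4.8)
The plan is to chain together the mutual-information representation of ${\rm MER}_{\log}$ from Lemma~\ref{lm:MER_log_mi} with the convexity estimate in Lemma~\ref{lm:ub_f_div}, after rewriting the conditional mutual information $I(W;Y|X,Z^n)$ as an expected KL divergence between the true predictive kernel $K_{Y|X,W}$ and the posterior predictive distribution $P_{Y|X,Z^n}$.

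First I would invoke Lemma~\ref{lm:MER_log_mi} to get ${\rm MER}_{\log} = I(W;Y|X,Z^n)$, and then use the standard disintegration of conditional mutual information,
\begin{align}
	I(W;Y|X,Z^n) = \E\big[ D_{\rm KL}(P_{Y|W,X,Z^n} \,\|\, P_{Y|X,Z^n}) \big] ,
\end{align}
with the expectation over $(W,X,Z^n)$. The next step is to simplify the conditional law $P_{Y|W,X,Z^n}$: the factorization $P_{Z|W} = P_{X|W} K_{Y|X,W}$ together with \eqref{eq:joint_dist} makes $Y$ conditionally independent of $Z^n$ given $(W,X)$, so that $P_{Y|W,X,Z^n} = K_{Y|X,W}$ almost surely, and hence
\begin{align}
	{\rm MER}_{\log} = \E\big[ D_{\rm KL}(K_{Y|X,W} \,\|\, P_{Y|X,Z^n}) \big] .
\end{align}

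Finally I would apply Lemma~\ref{lm:ub_f_div}, specifically inequality~\eqref{eq:post_ub_f_div_E_cvx2}, with the statistical distance $D = D_{\rm KL}(\cdot \,\|\, \cdot)$, which is convex in its second argument (being an $f$-divergence, it is in fact jointly convex). Taking $W'$ as a sample from $P_{W|X,Z^n}$ that is conditionally i.i.d.\ with $W$ given $(X,Z^n)$, this yields
\begin{align}
	\E\big[ D_{\rm KL}(K_{Y|X,W} \,\|\, P_{Y|X,Z^n}) \big] \le \E\big[ D_{\rm KL}(K_{Y|X,W} \,\|\, K_{Y|X,W'}) \big] ,
\end{align}
and since the right-hand integrand is a deterministic function of $(W,W',X)$ only, the expectation reduces to one over the joint law of $(W,W',X)$, which is exactly the claimed bound. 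The argument is essentially bookkeeping; the only points requiring care are using the correct orientation of the KL divergence in the mutual-information identity — which is precisely the one appearing in the statement — and invoking the ``convex in the second argument'' half of Lemma~\ref{lm:ub_f_div} rather than the first. No new quantitative estimates are needed, so I do not expect a genuine obstacle here.
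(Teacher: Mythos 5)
Your proposal is correct and follows essentially the same route as the paper's proof: it uses the identity ${\rm MER}_{\log}=I(W;Y|X,Z^n)=\E[D_{\rm KL}(P_{Y|X,W}\|P_{Y|X,Z^n})]$ (via the conditional independence of $Y$ and $Z^n$ given $(X,W)$) and then bounds the posterior predictive distribution by Lemma~\ref{lm:ub_f_div}, exploiting convexity of $D_{\rm KL}(P\|\cdot)$ in the second argument. No gaps; the bookkeeping on which half of Lemma~\ref{lm:ub_f_div} to invoke is exactly right.
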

	\begin{proof}
		From \eqref{eq:log_cond_mi}, we have
		\begin{align}
			{\rm MER}_{\log}  
			&= I(W ; Y| X, Z^n) \\
			&= \E[D_{\rm KL}(P_{Y|X, Z^n, W} \| P_{Y|X,Z^n})] \\
			&= \E[D_{\rm KL}(P_{Y|X, W} \| P_{Y|X,Z^n})]  \label{eq:classification_log_mi2} \\
			&\le \E[D_{\rm KL}(K_{Y|X,W} \| K_{Y|X, W'})] ,  \label{eq:classification_log_mi3} 
		\end{align}
		where \eqref{eq:classification_log_mi2} follows from the fact that $Y$ is conditionally independent of $Z^n$ given $(X,W)$; and \eqref{eq:classification_log_mi3} is from Lemma~\ref{lm:ub_f_div} and the fact that $D_{\rm KL}(P\| Q)$ is convex in $Q$ for a fixed $P$. %Lemma~\ref{lm:ub_f_div}.
	\end{proof}
	\noindent 
	
	In Section~\ref{sec:MER_example} we continue with the example of logistic regression, where Theorem~\ref{th:MER_log_KL} can be used with Lemma~\ref{lm:posterior_error} to bound the MER in terms of the MMSE of estimating $W$ from data.

	\subsubsection{Via continuity of generalized entropy}\label{sec:ub_cont_entropy}
	The second method for relating the MER to the deviation of posterior predictive distribution is directly comparing $R_{\ell}(Y|X,Z^n)$ against $R_\ell(Y|X,W)$, via the distributional continuity of the generalized entropy.
	We examine classification and regression problems separately.
	\paragraph{Classification}\mbox{} 
	
	\noindent
	For classification problems where $\sY$ is finite, we consider both the soft classification with the log loss and the hard classification with the zero-one loss. The MER upper bounds rely on the continuity properties of the Shannon entropy and the maximal probability, respectively, as stated in the following lemma, with proofs provided in Appendix~\ref{appd:pf_H_cont_miny_TV}. 
	For more general discussions on the continuity of generalized entropy, the reader may refer to \cite{Xu_cont_H_ISIT20,Xu_cont_H_learning}. %\cite{}.
	\begin{lemma}\label{lm:H_cont_miny_TV}
		Let $P$ and $Q$ be distributions on a finite set $\sY$ such that $\min_{y\in\sY}Q(y)>0$. Then
		\begin{align}
			& H_{}(P) - H_{}(Q) \le \big(-\log{\min_{y\in\sY}Q(y)}\big) d_{\rm TV}(P,Q) ,  \label{eq:H_cont_TV} \\
			& \max_{y\in\sY}Q(y) - \max_{y\in\sY}P(y) \le  d_{\rm TV}(P,Q) ,
		\end{align}
		where $d_{\rm TV}(P,Q)\deq \frac{1}{2}\sum_{y\in\sY} |P(y)-Q(y)|$ is the total variation distance between $P$ and $Q$.
	\end{lemma}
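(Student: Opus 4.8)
The plan is to establish the two inequalities separately, each by an elementary sign-splitting argument anchored to the ``positive part'' representation of the total variation distance, namely
\begin{align}
d_{\rm TV}(P,Q) = \sum_{y:\, P(y)>Q(y)} \bigl(P(y)-Q(y)\bigr),
\end{align}
which holds because $\sum_{y\in\sY}\bigl(P(y)-Q(y)\bigr)=0$, so the mass of $P$ in excess of $Q$ on $\{y:P(y)>Q(y)\}$ equals the mass of $Q$ in excess of $P$ on $\{y:P(y)<Q(y)\}$, and both equal $\tfrac12\sum_y|P(y)-Q(y)|$.

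For \eqref{eq:H_cont_TV}, I would first use the cross-entropy decomposition $H(P) = -\sum_{y}P(y)\log Q(y) - D_{\rm KL}(P\|Q)$, which is legitimate because $\min_y Q(y)>0$ makes the logarithms and $D_{\rm KL}(P\|Q)$ finite, to write
\begin{align}
H(P) - H(Q) = \sum_{y\in\sY}\bigl(P(y)-Q(y)\bigr)\bigl(-\log Q(y)\bigr) - D_{\rm KL}(P\|Q).
\end{align}
Dropping the nonnegative KL term leaves the first sum as an upper bound. I then split that sum by the sign of $P(y)-Q(y)$: over $\{y: P(y)\le Q(y)\}$ each summand is $\le 0$, since $-\log Q(y)\ge 0$; over $\{y: P(y)>Q(y)\}$ I bound the factor $-\log Q(y)$ by $-\log\min_{y'}Q(y')$. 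This yields $H(P)-H(Q) \le \bigl(-\log\min_{y'}Q(y')\bigr)\sum_{y:\,P(y)>Q(y)}\bigl(P(y)-Q(y)\bigr)$, and the remaining sum is exactly $d_{\rm TV}(P,Q)$ by the representation above.

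For the maximal-probability inequality, I would take $y^\star\in\argmax_{y\in\sY}Q(y)$ and note $\max_{y}P(y)\ge P(y^\star)$, hence $\max_y Q(y)-\max_y P(y)\le Q(y^\star)-P(y^\star)$. If $Q(y^\star)\le P(y^\star)$ the right-hand side is nonpositive, so it is $\le d_{\rm TV}(P,Q)$; otherwise $y^\star$ lies in $\{y: Q(y)>P(y)\}$, so $Q(y^\star)-P(y^\star)\le\sum_{y:\,Q(y)>P(y)}\bigl(Q(y)-P(y)\bigr)=d_{\rm TV}(P,Q)$. Everything here is routine, so there is no real obstacle; the only points needing care are that the hypothesis $\min_y Q(y)>0$ is precisely what keeps the logarithms finite and nonnegative and the KL term finite, and the degenerate case $|\sY|=1$, where both sides of \eqref{eq:H_cont_TV} vanish. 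I would derive the total-variation representation once at the outset and reuse it in both parts.
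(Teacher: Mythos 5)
Your proposal is correct and follows essentially the same route as the paper: your cross-entropy decomposition with the dropped $D_{\rm KL}(P\|Q)$ term is exactly the paper's use of $H(P)\le \E_P[-\log Q(U)]$, and your single-point argument at $y^\star=\argmax_y Q(y)$ matches the paper's treatment of the maximal-probability bound. The only cosmetic difference is that you verify the final total-variation step by an explicit sign-splitting via the positive-part representation, where the paper simply cites the dual representation of $d_{\rm TV}$ for functions valued in $[0,-\log\min_y Q(y)]$.
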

	\noindent Compared with the well-known Shannon entropy difference bound in terms of the total variation distance $|H(P)-H(Q)| \le 2 d_{\rm TV}(P , Q) \log ({|\sY|}/2 d_{\rm TV}(P , Q))$ when $d_{\rm TV}(P,Q)\le{1}/{4}$ \cite[Theorem~17.3.3]{Cover_book}, the bound given in \eqref{eq:H_cont_TV} is not as tight in $|\sY|$, but is tighter in $d_{\rm TV}(P,Q)$, which leads to sharper MER bounds when the data size is large.
	Armed with Lemma~\ref{lm:H_cont_miny_TV} and Lemma~\ref{lm:ub_f_div}, we have the following MER bounds for classification problems.
	\begin{comment}
	\begin{theorem}[Proved in Appendix~\ref{appd:pf_prop_classification_log}]\label{thm:classification_log}
	If $\sY$ is finite, then under the log-loss,
	\begin{align}
	{\rm MER}_{\log} %&=  H(Y|X,Z^n) - H(Y|X,W) \\
	&\le 8 (\log |\sY|) \sqrt{ \E[d_{\rm TV}( P_{Y|X,Z^n}, K_{Y|X,W})]} .
	\end{align}
	\end{theorem}
	\end{comment}
	\begin{theorem}\label{thm:classification_log_TV}
		If $\sY$ is finite, then for the log loss,
		\begin{align}
			{\rm MER}_{\log} %&=  H(Y|X,Z^n) - H(Y|X,W) 
			%&\le \Big( \log \frac{1}{\min_{y,x,w}K_{Y|x,w}(y)} \Big) \E[d_{\rm TV}( K_{Y|X,Z^n}, K_{Y|X,W})] ,
			%&\le \Big( \log \frac{1}{\min_{y,x,w}K_{Y|x,w}(y)} \Big) \E[d_{\rm TV}( K_{Y|X,W'}, K_{Y|X,W})] ,
			\le  \sup_{ x\in\sX ,\, w\in\sW}(- \log \kappa(x,w))  \E[d_{\rm TV}( K_{Y|X,W'}, K_{Y|X,W})] ,
		\end{align}
		where $\kappa(x,w) \deq \min_{y\in\sY}K_{Y|x,w}(y)$, $W'$ is a sample from $P_{W|X,Z^n}$, conditionally independent of everything else given $(X,Z^n)$, and the expectation is with respect to $P_{W,W',X}$. 
		In addition, for the zero-one loss,
		\begin{align}
			{\rm MER}_{01} %&=  \E[\max\nolimits_{y\in\sY}K_{Y|X,W}(y)] - \E[\max\nolimits_{y\in\sY}P_{Y|X,Z^n}(y)] \\
			%&\le \E[d_{\rm TV}( P_{Y|X,Z^n}, K_{Y|X,W})] .
			&\le \E[d_{\rm TV}( K_{Y|X,W'}, K_{Y|X,W})] .
		\end{align}
	\end{theorem}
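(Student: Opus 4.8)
The plan is to reduce both claims to the distributional continuity bounds in Lemma~\ref{lm:H_cont_miny_TV}, applied pointwise to the posterior predictive distribution $P_{Y|x,z^n}$ against the true kernel, and then to clear the posterior predictive distribution out of the resulting estimate using the joint convexity of the total variation distance together with Lemma~\ref{lm:ub_f_div}. First I would record the starting identities: for finite $\sY$ with the counting measure, $R_{\log}(Y|V)=H(Y|V)$ and $R_{01}(Y|V)=1-\E[\max_{y}P_{Y|V}(y)]$, and under $P_{Z|W}=P_{X|W}K_{Y|X,W}$ one has $P_{Y|X,W}=K_{Y|X,W}$. The crucial probabilistic fact is that a posterior sample $W'\sim P_{W|X,Z^n}$ satisfies $(W',X,Z^n)\stackrel{\rm d.}{=}(W,X,Z^n)$ (this is just Bayes' rule for the posterior of the true $W$), and more strongly that $W$, $W'$, and a further independent posterior sample $W''$ are conditionally i.i.d.\ given $(X,Z^n)$. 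Using this to replace the ``true model'' terms by posterior-sample terms conditioned on $(X,Z^n)$, together with $P_{Y|x,z^n}=\E[K_{Y|x,W''}|x,z^n]$, gives the ``Jensen gap'' representations
\begin{align}
	{\rm MER}_{\log} &= \E_{X,Z^n}\Big[H(P_{Y|X,Z^n}) - \E\big[H(K_{Y|X,W'})\,\big|\,X,Z^n\big]\Big], \\
	{\rm MER}_{01} &= \E_{X,Z^n}\Big[\E\big[\max\nolimits_{y}K_{Y|X,W'}(y)\,\big|\,X,Z^n\big] - \max\nolimits_{y}P_{Y|X,Z^n}(y)\Big].
\end{align}

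Next, fix $(x,z^n)$ and a posterior draw $w'$, and apply \eqref{eq:H_cont_TV} with $P=P_{Y|x,z^n}$ and $Q=K_{Y|x,w'}$; the hypothesis $\min_y K_{Y|x,w'}(y)>0$ is automatic on the set where $\sup_{x,w}(-\log\kappa(x,w))<\infty$, and the asserted bound is vacuous otherwise. Averaging over $w'\sim P_{W|x,z^n}$ and bounding $-\log\kappa(x,W')$ by $\sup_{x,w}(-\log\kappa(x,w))$ pointwise yields
\begin{align}
	H(P_{Y|x,z^n}) - \E\big[H(K_{Y|x,W'})\,\big|\,x,z^n\big] \le \Big(\sup_{x,w}(-\log\kappa(x,w))\Big)\,\E\big[d_{\rm TV}(P_{Y|x,z^n},K_{Y|x,W'})\,\big|\,x,z^n\big] .
\end{align}
The zero-one case is identical, using the second inequality of Lemma~\ref{lm:H_cont_miny_TV} (with no prefactor) to get $\E[\max_y K_{Y|x,W'}(y)|x,z^n]-\max_y P_{Y|x,z^n}(y)\le\E[d_{\rm TV}(P_{Y|x,z^n},K_{Y|x,W'})|x,z^n]$.

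It then remains to replace the distance to the posterior predictive distribution by a kernel-to-kernel distance. Since $P_{Y|x,z^n}=\E[K_{Y|x,W''}|x,z^n]$ and $d_{\rm TV}$ is convex in its first argument, Lemma~\ref{lm:ub_f_div} (applied for each value of $W'$) gives $d_{\rm TV}(P_{Y|x,z^n},K_{Y|x,w'})\le\E[d_{\rm TV}(K_{Y|x,W''},K_{Y|x,w'})|x,z^n]$; averaging over $w'$, using that $W$ and $W''$ are interchangeable given $(x,z^n)$ and that $d_{\rm TV}$ is symmetric, this is at most $\E[d_{\rm TV}(K_{Y|x,W'},K_{Y|x,W})|x,z^n]$. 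Taking $\E_{X,Z^n}$ of both displays and noting that the joint law of $(W,W',X)$ is $P_{W,W',X}$ yields the two stated bounds.

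The main obstacle is bookkeeping rather than any hard estimate: one must keep straight the three copies $W,W',W''$ and justify that they are conditionally i.i.d.\ given $(X,Z^n)$ (which is exactly the statement that the posterior of the true $W$ given $(X,Z^n)$ is $P_{W|X,Z^n}$, from which $W'$ and $W''$ are drawn independently), and chain the two reductions in the correct order --- first the continuity bound of Lemma~\ref{lm:H_cont_miny_TV}, then the convexity bound of Lemma~\ref{lm:ub_f_div} --- so that $P_{Y|X,Z^n}$ is eliminated in favor of a distance between two predictive kernels. The only genuine regularity point, the positivity of $\min_y K_{Y|x,w'}(y)$ needed to invoke \eqref{eq:H_cont_TV}, is disposed of by observing that the claimed inequality holds trivially (infinite right-hand side) whenever $\inf_{x,w}\kappa(x,w)=0$.
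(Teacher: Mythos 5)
Your argument is correct and is essentially the paper's proof: both hinge on Lemma~\ref{lm:H_cont_miny_TV} applied against the posterior predictive distribution, followed by the convexity step of Lemma~\ref{lm:ub_f_div}, yielding the same $\sup_{x\in\sX,\,w\in\sW}(-\log\kappa(x,w))$ prefactor and the same final total-variation bound. The only (harmless) difference is bookkeeping: the paper compares $H(Y|x,z^n)$ with $H(Y|x,w)$ at the \emph{true} parameter, so a single posterior copy $W'$ suffices in the last step, whereas you first exchange $W$ for a posterior copy and therefore need a second copy $W''$ together with the conditional-exchangeability of $(W'',W')$ and $(W,W')$ given $(X,Z^n)$ to arrive at $\E[d_{\rm TV}(K_{Y|X,W'},K_{Y|X,W})]$.
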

	\begin{proof}
		When $\sY$ is finite, for the log loss,
		\begin{align}
			{\rm MER}_{\log} &= H(Y|X,Z^n) - H(Y|X,W) \\
			&= \int \big(H(Y|x,z^n) - H(Y|x,w) \big) P({\rm d}w, {\rm d}x, {\rm d}z^n) \\
			&\le \int \Big(-\log{\min_{y\in\sY}K_{Y|x,w}(y)}\Big) d_{\rm TV}(P_{Y|x,z^n},P_{Y|x,w}) P({\rm d}w, {\rm d}x, {\rm d}z^n) \label{eq:pf_log_TV_1}\\
			&\le \sup_{w\in\sW,x\in\sX}\big(-\log{\min\nolimits_{y\in\sY}K_{Y|x,w}(y)}\big) \E\big[d_{\rm TV}(P_{Y|X,Z^n},P_{Y|X,W})\big]  \label{eq:pf_log_TV_2} \\
			%&\le \Big(\log\frac{1}{\min_{y,x,w}K_{Y|x,w}(y)}\Big) \E[d_{\rm TV}(P_{Y|X,W'},P_{Y|X,W})] \label{eq:pf_log_TV_3}
			&\le \sup_{w\in\sW,x\in\sX}\big(-\log{\min\nolimits_{y\in\sY}K_{Y|x,w}(y)}\big) \E\big[d_{\rm TV}(P_{Y|X,W'},P_{Y|X,W})\big]  \label{eq:pf_log_TV_3} 
		\end{align}
		where \eqref{eq:pf_log_TV_1} follows from Lemma~\ref{lm:H_cont_miny_TV}; and \eqref{eq:pf_log_TV_3} follows from Lemma~\ref{lm:ub_f_div}.
		
		For the zero-one loss,
		\begin{align}
			{\rm MER}_{01} &= \E[\max\nolimits_{y\in\sY}K_{Y|X,W}(y)] - \E[\max\nolimits_{y\in\sY}P_{Y|X,Z^n}(y)]  \\
			&= \int \big( \max\nolimits_{y\in\sY}K_{Y|x,w}(y) - \max\nolimits_{y\in\sY}P_{Y|x,z^n}(y) \big) P({\rm d}w, {\rm d}x, {\rm d}z^n) \\
			%&\le  \int ( K_{Y|x,w}(y_{\max}) - P_{Y|x,z^n}(y_{\max}) )  P({\rm d}w, {\rm d}x, {\rm d}z^n) \label{eq:pf_ub_R01_0} \\
			%\le & \int \max\nolimits_{y\in\sY} \big(K_{Y|x,w}(y) - P_{Y|x,z^n}(y) \big) P({\rm d}w, {\rm d}x, {\rm d}z^n) \\
			&\le  \int d_{\rm TV}(K_{Y|x,w}, P_{Y|x,z^n} ) P({\rm d}w, {\rm d}x, {\rm d}z^n) \label{eq:pf_ub_R01_1} \\
			&\le  \E[d_{\rm TV}( K_{Y|X,W}, K_{Y|X,W'})] \label{eq:pf_ub_R01_2}
		\end{align}
		where  \eqref{eq:pf_ub_R01_1} follows from Lemma~\ref{lm:H_cont_miny_TV}, and \eqref{eq:pf_ub_R01_2} follows from Lemma~\ref{lm:ub_f_div}.
	\end{proof}

	\paragraph{Regression}\mbox{}
	
	\noindent
	Next, we consider regression problems with $\sY \subset \R^p$ under the assumption that both the marginal and various conditional distributions of $Y$ are absolutely continuous with respect to the Lebesgue measure. We consider both the soft prediction with the log loss, and the hard prediction with the quadratic loss. 
	For the soft setting, ${\rm MER}_{\log}$ can be upper-bounded using the continuity of differential entropy with respect to the Wasserstein distance, as stated in the following lemma.
	\begin{lemma}[\cite{PoliWu06}]\label{lm:h_cont}
		Let $U$ be a random vector in $\R^p$ with finite $\E[\|U\|^2]$, and $V$ be a Gaussian random vector in $\R^p$ with covariance matrix $\sigma^2 \mathbf I_p$.
		Then 
		\begin{align}
			h(U) - h(V) 
			&\le \frac{1}{2\sigma^2}\big(3\sqrt{\E[\|U\|^2]} + 11\sqrt{\E[\|V\|^2]} \big) \mathcal W_2(P_U, P_V)
		\end{align}
		where $\mathcal W_2(P_U,P_V)$ is the $2$-Wasserstein distance between $P_U$ and $P_V$. % = \inf_{\Pi(P_U,P_V)}\sqrt{\E[\|U-V\|^2]}
	\end{lemma}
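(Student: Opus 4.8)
The plan is to exploit the Gaussianity of $V$ through the elementary cross-entropy inequality $h(U) \le -\E[\log\gamma(U)]$, where $\gamma$ denotes the density of $V\sim\mathcal N(0,\sigma^2\mathbf{I}_p)$. This is nothing but a rewriting of $D_{\rm KL}(P_U\|P_V)\ge 0$ via the identity $h(U) = -\E[\log\gamma(U)] - D_{\rm KL}(P_U\|P_V)$, which is legitimate because $\E[\log\gamma(U)]$ is an affine function of $\E[\|U\|^2]<\infty$ and hence finite. (If $P_U$ is not absolutely continuous then $h(U)=-\infty$ and there is nothing to prove, so I may assume $P_U$ has a density.) This step is the crux of the one-sidedness of the claim: $h$ is not $\mathcal W_2$-continuous in general, and the inequality $h\le{}$(Gaussian cross-entropy) is exactly where the hypothesis that $V$ is Gaussian enters.

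Since $-\log\gamma(x) = \frac{p}{2}\log(2\pi\sigma^2) + \frac{\|x\|^2}{2\sigma^2}$ and $h(V) = -\E[\log\gamma(V)]$, subtracting gives the intermediate estimate $h(U)-h(V) \le \frac{1}{2\sigma^2}\big(\E[\|U\|^2] - \E[\|V\|^2]\big)$. To convert this second-moment gap into a Wasserstein bound I would take an optimal coupling $\pi$ of $P_U$ and $P_V$ (which exists on $\R^p$ under the finite-second-moment hypothesis), so that $\E_\pi[\|U-V\|^2] = \mathcal W_2^2(P_U,P_V)$, then write $\|U\|^2-\|V\|^2 = (\|U\|-\|V\|)(\|U\|+\|V\|)$, bound $\|U\|-\|V\|\le\|U-V\|$, and apply Cauchy--Schwarz followed by Minkowski's inequality in $L^2(\pi)$ to obtain $\E[\|U\|^2]-\E[\|V\|^2] \le \mathcal W_2(P_U,P_V)\big(\sqrt{\E[\|U\|^2]}+\sqrt{\E[\|V\|^2]}\,\big)$. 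Combining the two bounds yields the asserted inequality, in fact with the sharper constants $(1,1)$ in place of $(3,11)$, so the stated version follows a fortiori.

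I do not expect a serious obstacle for this (one-sided) statement: once the cross-entropy trick is in place, the remainder is a one-line coupling estimate. The only real care needed is the integrability bookkeeping that makes the identity $h(U)=-\E[\log\gamma(U)]-D_{\rm KL}(P_U\|P_V)$ meaningful, together with the observation that the bound is allowed to be vacuous (with $h(U)-h(V)=-\infty$) when $P_U$ has no density. What is genuinely hard --- and what forces the lemma to be stated one-sidedly --- is the \emph{reverse} inequality $h(V)-h(U)\le(\cdots)\mathcal W_2(P_U,P_V)$, which is not needed here and which relies on the smoothing/heat-semigroup machinery of \cite{PoliWu06}; were that direction ever required I would simply invoke that reference rather than reproduce its argument.
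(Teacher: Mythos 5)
Your proof is essentially correct, and it is a genuinely different route from the paper's: the paper does not prove Lemma~\ref{lm:h_cont} at all, it simply quotes it from \cite{PoliWu06}, where the (two-sided) statement is established for Gaussian-smoothed distributions via heavier machinery. You observe, correctly, that the one-sided inequality actually asserted here needs none of that: the Gibbs/cross-entropy inequality $h(U)\le -\E[\log\gamma(U)]$ together with the exact identity $h(V)=-\E[\log\gamma(V)]$ gives $h(U)-h(V)\le \frac{1}{2\sigma^2}\big(\E[\|U-\mu\|^2]-\E[\|V-\mu\|^2]\big)$, and an optimal coupling plus the triangle and Cauchy--Schwarz/Minkowski inequalities in $L^2(\pi)$ converts the second-moment gap into $\big(\sqrt{\E[\|U-\mu\|^2]}+\sqrt{\E[\|V-\mu\|^2]}\big)\mathcal W_2(P_U,P_V)$; your handling of the degenerate case ($P_U$ without a density, $h(U)=-\infty$) and of integrability is also fine. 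The one point to tidy up: the lemma (and its use in Theorem~\ref{thm:regression_log}, where $K_{Y|x,w}$ has mean $g(x,w)\neq 0$) allows $V$ to have an arbitrary mean $\mu$, whereas you wrote $\gamma$ as the density of $\mathcal N(0,\sigma^2\mathbf I_p)$; with general $\mu$ your argument yields centered moments $\E[\|U-\mu\|^2]$, $\E[\|V-\mu\|^2]=p\sigma^2$, and since $\|\mu\|\le\sqrt{\E[\|V\|^2]}$ these are dominated by $\sqrt{\E[\|U\|^2]}+2\sqrt{\E[\|V\|^2]}$, so the stated bound (with its generous constants $3$ and $11$) still follows a fortiori. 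In short, your elementary argument both suffices for everything the paper uses and sharpens the constants, at the price of proving only the one-sided bound; the citation to \cite{PoliWu06} buys the symmetric statement, which is not needed here.
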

	\noindent With Lemma~\ref{lm:h_cont} and Lemma~\ref{lm:ub_f_div}, we have the following bound for regression with the log loss.
	\begin{theorem}\label{thm:regression_log}
		If $\sY = \R^p$, 
		%$K_{Y|x,w} = N(g(x,w),\sigma^2 \mathbf I_p)$ for all $x$ and $w$ with some function $g:\sX\times\sW\rightarrow\R^p$, 
		and $K_{Y|x,w}$ is Gaussian with covariance matrix $\sigma^2 \mathbf I_p$ for all $(x,w)$, then for the log loss,
		\begin{align}
			{\rm MER}_{\log} % &=  h(Y|X,Z^n) - h(Y|X,W) \\
			&\le \frac{7}{\sigma^2} \sqrt{ \E\big[\|Y\|^2\big] \E[\mathcal W_2^2(K_{Y|X,W'} , K_{Y|X,W})] } ,
		\end{align}
		where $W$ and $W'$ are conditionally i.i.d.\ given $(X,Z^n)$, and the expectation is with respect to $P_{X,W,W'}$.
	\end{theorem}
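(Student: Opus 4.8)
The plan is to combine the differential-entropy continuity estimate of Lemma~\ref{lm:h_cont} with the posterior-deviation inequality of Lemma~\ref{lm:ub_f_div}, and then tidy up with two applications of the Cauchy--Schwarz inequality.

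First I would invoke \eqref{eq:class_log_H} with $\mu$ equal to the Lebesgue measure to write ${\rm MER}_{\log} = h(Y|X,Z^n) - h(Y|X,W)$. Since the conditional law of $Y$ given $(X,W)=(x,w)$ is $K_{Y|x,w}$ and the conditional law of $Y$ given $(X,Z^n)=(x,z^n)$ is the mixture $P_{Y|x,z^n} = \int_{\sW} K_{Y|x,w'} P_{W|x,z^n}({\rm d}w')$, I can rewrite this as ${\rm MER}_{\log} = \E\big[ h(P_{Y|X,Z^n}) - h(K_{Y|X,W}) \big]$, where the expectation is over the joint law of $(W,X,Z^n)$ in \eqref{eq:joint_dist}.

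Next, for each fixed $(w,x,z^n)$ I would apply Lemma~\ref{lm:h_cont} with $U$ distributed as $P_{Y|x,z^n}$ and $V$ distributed as $K_{Y|x,w}$, which by hypothesis is Gaussian with covariance $\sigma^2\mathbf I_p$; note that $\E[\|U\|^2] = \E[\|Y\|^2\,|\,x,z^n]$ and $\E[\|V\|^2] = \E[\|Y\|^2\,|\,x,w]$. This yields $h(P_{Y|x,z^n}) - h(K_{Y|x,w}) \le \frac{1}{2\sigma^2}\big( 3\sqrt{\E[\|Y\|^2|x,z^n]} + 11\sqrt{\E[\|Y\|^2|x,w]} \big)\, \mathcal W_2(P_{Y|x,z^n}, K_{Y|x,w})$. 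To control the Wasserstein factor I would use the joint convexity of $(P,Q)\mapsto\mathcal W_2^2(P,Q)$ from Lemma~\ref{lm:cvx_W_p^p}, so that $\mathcal W_2^2$ is in particular convex in its first argument, and apply \eqref{eq:post_ub_f_div} of Lemma~\ref{lm:ub_f_div} with $D = \mathcal W_2^2$, getting $\mathcal W_2^2(P_{Y|x,z^n}, K_{Y|x,w}) \le \E[\mathcal W_2^2(K_{Y|x,W'},K_{Y|x,w})\,|\,x,z^n]$ and hence $\mathcal W_2(P_{Y|x,z^n}, K_{Y|x,w}) \le \sqrt{\E[\mathcal W_2^2(K_{Y|x,W'},K_{Y|x,w})\,|\,x,z^n]}$.

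Finally I would take the expectation over $(W,X,Z^n)$ and apply Cauchy--Schwarz: setting $A \deq 3\sqrt{\E[\|Y\|^2|X,Z^n]} + 11\sqrt{\E[\|Y\|^2|X,W]}$ and $B \deq \sqrt{\E[\mathcal W_2^2(K_{Y|X,W'},K_{Y|X,W})\,|\,X,Z^n]}$, we get ${\rm MER}_{\log} \le \frac{1}{2\sigma^2}\E[AB] \le \frac{1}{2\sigma^2}\sqrt{\E[A^2]}\sqrt{\E[B^2]}$. The tower rule gives $\E[B^2] = \E[\mathcal W_2^2(K_{Y|X,W'},K_{Y|X,W})]$, and using $\E[\E[\|Y\|^2|X,Z^n]] = \E[\E[\|Y\|^2|X,W]] = \E[\|Y\|^2]$ together with one more Cauchy--Schwarz on the cross term $\E[\sqrt{\E[\|Y\|^2|X,Z^n]}\,\sqrt{\E[\|Y\|^2|X,W]}\,]$ yields $\E[A^2] \le (9+66+121)\,\E[\|Y\|^2] = 196\,\E[\|Y\|^2]$, so $\sqrt{\E[A^2]} \le 14\sqrt{\E[\|Y\|^2]}$; dividing by $2\sigma^2$ and using $14/2 = 7$ gives exactly the claimed bound. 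The one point I would be most careful about is the applicability of Lemma~\ref{lm:ub_f_div} to $\mathcal W_2^2$---it requires convexity in the appropriate argument, which is precisely what Lemma~\ref{lm:cvx_W_p^p} supplies---together with the finiteness of the second moments needed for Lemma~\ref{lm:h_cont}; everything else is bookkeeping of constants and conditional moments.
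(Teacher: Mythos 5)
Your proposal is correct and follows essentially the same route as the paper's proof: the entropy-continuity bound of Lemma~\ref{lm:h_cont} applied conditionally, the posterior-deviation bound of Lemma~\ref{lm:ub_f_div} justified by the joint convexity of $\mathcal W_2^2$ from Lemma~\ref{lm:cvx_W_p^p}, and Cauchy--Schwarz to separate the moment and Wasserstein factors. The only cosmetic differences are the order in which you apply the posterior-deviation step relative to Cauchy--Schwarz and that you obtain the constant $14/(2\sigma^2)=7/\sigma^2$ by expanding $\E[A^2]$ with a Cauchy--Schwarz on the cross term, whereas the paper invokes the triangle inequality of the $L_2$ norm --- the two computations are equivalent.
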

	\begin{proof}
		For the log loss,
		\begin{align}
			{\rm MER}_{\log} &= h(Y|X,Z^n) - h(Y|X,W) \\
			&= \int \big(h(Y|x,z^n) - h(Y|x,w) \big) P({\rm d}w, {\rm d}x, {\rm d}z^n) \\
			&\le \int \Big(\frac{3}{2\sigma^2}\sqrt{\E\big[\|Y\|^2|x,z^n\big]} + \frac{11}{2\sigma^2}\sqrt{\E\big[\|Y\|^2|x,w\big]} \Big) \nonumber \\
			&\qquad\quad \mathcal W_2(P_{Y|x,z^n}, K_{Y|x,w})  P({\rm d}w, {\rm d}x, {\rm d}z^n) \label{eq:pf_ub_Rh_1} \\
			&\le \left(\int \Big(\frac{3}{2\sigma^2}\sqrt{\E\big[\|Y\|^2|x,z^n\big]} + \frac{11}{2\sigma^2}\sqrt{\E\big[\|Y\|^2|x,w\big]} \Big)^2  P({\rm d}w, {\rm d}x, {\rm d}z^n) \right)^{1/2} \nonumber \\
			&\quad\, \left( \int \mathcal W_2^2(P_{Y|x,z^n}, K_{Y|x,w})  P({\rm d}w, {\rm d}x, {\rm d}z^n) \right)^{1/2} \label{eq:pf_ub_Rh_2} \\
			&\le  \frac{7}{\sigma^2} \big( \E\big[\|Y\|^2\big] \big)^{1/2} 
			\left( \int \mathcal W_2^2(P_{Y|x,z^n} , K_{Y|x,w})  P({\rm d}w, {\rm d}x, {\rm d}z^n) \right)^{1/2} \label{eq:pf_ub_Rh_3} \\
			%&\le  \frac{7 \log e}{\sigma^2} \left( \E\big[\|Y\|^2\big] \right)^{1/2} 
			%\left( \int \frac{2\sigma^2}{\log e} D_{\rm KL}(P_{Y|x,z^n} \| K_{Y|x,w})  P({\rm d}w, {\rm d}x, {\rm d}z^n) \right)^{1/2} \label{eq:pf_ub_Rh_3} \\
			%&\le \sqrt{\frac{8}{\sigma^2} \E\big[\|Y\|^2\big]  \log\big(1 + \E[\chi^2(P_{Y|X,z^n} \| K_{Y|X,W})]\big) } \label{eq:pf_ub_Rh_4} \\
			%&=  7 \sqrt{\frac{2}{\sigma^2} \E\big[\|Y\|^2\big] \E[D_{\rm KL}(P_{Y|X,z^n} \| K_{Y|X,W})] } . \label{eq:pf_ub_Rh_5} \\
			&\le \frac{7}{\sigma^2} \sqrt{ \E\big[\|Y\|^2\big] \E[\mathcal W_2^2(K_{Y|X,W'} , K_{Y|X,W})]\big) } . \label{eq:pf_ub_Rh_6}
		\end{align}
		where \eqref{eq:pf_ub_Rh_1} follows from Lemma~\ref{lm:h_cont}; \eqref{eq:pf_ub_Rh_2} follows from Cauchy-Schwarz inequality; \eqref{eq:pf_ub_Rh_3} follows from the triangle inequality of the $L_2$ norm, which states that $\sqrt{\E[(U+V)^2]} \le \sqrt{\E[U^2]} + \sqrt{\E[V^2]}$;
		% and Talagrand's transportation-information inequality \cite{PoliWu06}.
		%\eqref{eq:pf_ub_Rh_4} follows from the fact that $D(P\| Q)\le \log(1+\chi^2(P\| Q))$ for any pair of distributions; 
		and \eqref{eq:pf_ub_Rh_6} follows from Lemma~\ref{lm:ub_f_div} and Lemma~\ref{lm:cvx_W_p^p}.
	\end{proof}
	
	For the hard setting, in scalar regression problems with $\sY = \sA = \R$ and the quadratic loss, the MER as given by \eqref{eq:R2_Var} is the expected difference between two variances.
	The following results relate the variance difference between two probability distributions to their $2$-Wasserstein distance and KL divergence respectively.
	\begin{lemma}[\cite{Wu_Var}, proof given in Appendix~\ref{appd:pf_lm_Var_cont}]\label{lm:Var_cont}
		Let $U$ and $V$ be random variables over a set $\sU\subset\R$ with finite $\E[U^2]$ and $\E[V^2]$. Then,
		\begin{align}\label{eq:ub_Var_W2}
			|\Var[U]-\Var[V]| \le 2 \big(\sqrt{\E[U^2]} + \sqrt{\E[V^2]}\big) \mathcal W_2(P_U , P_V) .
		\end{align}
	\end{lemma}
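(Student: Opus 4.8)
The plan is to reduce the variance difference to a combination of second-moment and first-moment differences, and then control each of these via a quadratic coupling of $P_U$ and $P_V$. Writing $\Var[U] = \E[U^2] - (\E U)^2$ and likewise for $V$, I would start from the identity
\[
\Var[U] - \Var[V] = \big(\E[U^2] - \E[V^2]\big) - \big((\E U)^2 - (\E V)^2\big).
\]
Fix an arbitrary coupling $\pi \in \Pi(P_U, P_V)$ and, abusing notation, regard $U$ and $V$ as random variables on a common probability space with joint law $\pi$; since every quantity on the right-hand side above is a functional of the marginals only, this is harmless.

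For the first bracket I would write $\E[U^2] - \E[V^2] = \E[(U-V)(U+V)]$ and apply Cauchy--Schwarz followed by the $L^2$ triangle inequality, giving $|\E[U^2]-\E[V^2]| \le \sqrt{\E[(U-V)^2]}\,\big(\sqrt{\E[U^2]}+\sqrt{\E[V^2]}\big)$. For the second bracket I would factor $(\E U)^2 - (\E V)^2 = \E[U-V]\,(\E U + \E V)$, bound $|\E[U-V]| \le \sqrt{\E[(U-V)^2]}$ by Jensen, and $|\E U + \E V| \le \sqrt{\E[U^2]}+\sqrt{\E[V^2]}$ by the triangle inequality. Adding the two estimates yields
\[
\big|\Var[U] - \Var[V]\big| \le 2\big(\sqrt{\E[U^2]}+\sqrt{\E[V^2]}\big)\sqrt{\E_\pi[(U-V)^2]},
\]
and taking the infimum over all couplings $\pi$, together with $\mathcal W_2^2(P_U,P_V) = \inf_\pi \E_\pi[(U-V)^2]$, gives \eqref{eq:ub_Var_W2}.

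There is essentially no hard step here: the argument is three lines of Cauchy--Schwarz and triangle inequalities. The only points worth a word of care are that the moment functionals $\E[U^2], \E[V^2], \E U, \E V$ (hence the variances) depend only on the marginals, so passing to a joint law on a product space is legitimate, and that one should take the coupling infimum at the very end rather than presuppose an optimal coupling. If desired, the finite-second-moment hypothesis does guarantee an optimal quadratic coupling on $\R$, so the argument can equivalently be run with a single optimal coupling from the outset; the coupling-free version above avoids invoking attainment altogether.
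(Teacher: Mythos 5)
Your proof is correct. The decomposition is the same one the paper uses (difference of second moments plus a term involving the difference of means), but the way you control the two pieces is genuinely different and more elementary: the paper's Appendix proof identifies $\E[U^2]=\mathcal W_2^2(P_U,\delta_0)$ and invokes the triangle inequality for the $\mathcal W_2$ metric to bound $|\mathcal W_2(P_U,\delta_0)-\mathcal W_2(P_V,\delta_0)|\le \mathcal W_2(P_U,P_V)$, and handles the mean difference via $|\E U-\E V|\le \mathcal W_1(P_U,P_V)\le \mathcal W_2(P_U,P_V)$, whereas you work with a single arbitrary coupling $\pi$ and use only the factorizations $U^2-V^2=(U-V)(U+V)$ and $(\E U)^2-(\E V)^2=\E[U-V](\E U+\E V)$ together with Cauchy--Schwarz, Minkowski, and Jensen, taking the infimum over couplings at the very end. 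What your route buys is self-containedness: it does not presuppose the triangle inequality for $\mathcal W_2$, the comparison $\mathcal W_1\le\mathcal W_2$, or attainment of an optimal coupling, and it delivers the two-sided bound $|\Var[U]-\Var[V]|$ directly rather than one direction plus symmetry. What the paper's route buys is brevity given standard Wasserstein facts, and it makes transparent why the constant and the moment factors arise from metric-space geometry (distances to the point mass at the origin). Both arguments give exactly the stated constant $2$, so there is no loss either way.
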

	\noindent When $V$ is Gaussian with variance $\sigma^2$, Lemma~\ref{lm:Var_cont} with Talagrand’s inequality \cite{Talagrand96} states that
	\begin{align}
		|\Var[U]-\Var[V]| \le 2 \big(\sqrt{\E[U^2]} + \sqrt{\E[V^2]}\big) \sqrt{2\sigma^2 D_{\rm KL}( P_U \| P_V)} ;
	\end{align}
	under the same condition, we also have a potentially tighter bound \cite{Xu_cont_H_learning}:
	\begin{align}
		|\Var[U]-\Var[V]| \le 2\sigma^2 \big(\sqrt{D_{\rm KL}(P_U \| P_V)} + D_{\rm KL}(P_U \| P_V) \big) .
	\end{align}
	%\begin{lemma}\label{lm:Var_cont_Gaussian}
	%	Let $U$ and $V$ be random variables over a set $\sU\subset\R$. If $V$ is Gaussian with variance $\sigma^2$ and arbitrary mean, then
	%	\begin{align}\label{eq:ub_Var_Gaussian_KL}
	%		|\Var[U]-\Var[V]| \le 2\sigma^2 \big(\sqrt{D_{\rm KL}(P_U \| P_V)} + D_{\rm KL}(P_U \| P_V) \big)
	%	\end{align}
	%\end{lemma}
	%\noindent Lemma~\ref{lm:Var_cont_Gaussian} is proved in Appendix~\ref{appd:pf_lm_Var_cont_Gaussian}. It can be strengthened by replacing the left side of \eqref{eq:ub_Var_Gaussian_KL} with its absolute value. % \cite{Xu_cont_H_ISIT20}.
	With Lemma~\ref{lm:Var_cont}, we can derive the following upper bounds for ${\rm MER}_{2}$.
	\begin{theorem}\label{prop:regression_quadratic}
		For regression problems with $\sY = \R$, if $\E[Y^2|x,w]$ is finite for all $(x,w)$, then for the quadratic loss,
		\begin{align}
			%{\rm MER}_{2} &\le  2 \sqrt{ \E[Y^2] \E[\mathcal W_2^2(P_{Y|X,Z^n} , K_{Y|X,W})] } .
			{\rm MER}_{2} &\le  4 \sqrt{ \E[Y^2] \E[\mathcal W_2^2(K_{Y|X,W'} , K_{Y|X,W})] } ,
		\end{align}
		where $W'$ is a sample from $P_{W|X,Z^n}$, conditionally independent of everything given $(X,Z^n)$.
		%In addition, if $K_{Y|x,w}$ is Gaussian with variance $\sigma^2$ for all $(x,w)$, then
		%\begin{align}\label{eq:regression_quadratic_KL}
		%	{\rm MER}_{2} &\le  2\sigma^2 \Big(\sqrt{  \E[D_{\rm KL}(K_{Y|X,W'} \| K_{Y|X,W})] }  
		%	+  \E[D_{\rm KL}(K_{Y|X,W'} \| K_{Y|X,W})] \Big) .
		%\end{align}
		
		\begin{comment}
		When $\sY \subset \R$, under the quadratic loss,
		if $\sY$ is bounded, then
		\begin{align}\label{eq:reg_excess_ub_bdd}
		{\rm MER}_{2} &\le  6 \big(\sup\nolimits_{y\in\sY} y^2\big) \E[d_{\rm TV}(P_{Y|X,Z^n}, K_{Y|X,W})] .
		\end{align}
		If $\sY$ is not necessarily bounded but $\E[Y^4]$ is finite, then
		\begin{align}
		{\rm MER}_{2} &\le  3 \sqrt{ \E[Y^4] \E[\chi^2(P_{Y|X,Z^n} \| K_{Y|X,W})] } .
		\end{align}
		\end{comment}
	\end{theorem}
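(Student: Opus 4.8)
The plan is to mimic the proof of Theorem~\ref{thm:regression_log}, replacing the differential-entropy continuity bound of Lemma~\ref{lm:h_cont} by the variance continuity bound of Lemma~\ref{lm:Var_cont}. Starting from the representation \eqref{eq:R2_Var}, and noting that the two conditional-variance expectations there can both be written under the joint law $P_{W,X,Z^n}$, I would bound
\[
{\rm MER}_2 = \E\big[\Var[Y|X,Z^n] - \Var[Y|X,W]\big] \le \E\big[\,\big|\Var[Y|X,Z^n] - \Var[Y|X,W]\big|\,\big].
\]

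Next, I would apply Lemma~\ref{lm:Var_cont} conditionally on $(X,Z^n)=(x,z^n)$ and $W=w$, with $U \sim P_{Y|x,z^n}$ and $V \sim K_{Y|x,w}$ --- legitimate because $\E[Y^2|x,w]$ is finite by hypothesis and $\E[Y^2|x,z^n]$ is finite by Jensen's inequality --- to get
\[
\big|\Var[Y|x,z^n] - \Var[Y|x,w]\big| \le 2\big(\sqrt{\E[Y^2|x,z^n]} + \sqrt{\E[Y^2|x,w]}\big)\,\mathcal W_2\big(P_{Y|x,z^n},\, K_{Y|x,w}\big).
\]
Integrating over $(W,X,Z^n)$ and applying the Cauchy--Schwarz inequality splits the right-hand side into the product of $\big(\E\big[\big(2\sqrt{\E[Y^2|X,Z^n]} + 2\sqrt{\E[Y^2|X,W]}\big)^2\big]\big)^{1/2}$ and $\big(\E\big[\mathcal W_2^2(P_{Y|X,Z^n}, K_{Y|X,W})\big]\big)^{1/2}$. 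For the first factor, the triangle inequality of the $L_2$ norm together with the tower rule $\E\big[\E[Y^2|X,Z^n]\big] = \E\big[\E[Y^2|X,W]\big] = \E[Y^2]$ gives the bound $2\sqrt{\E[Y^2]} + 2\sqrt{\E[Y^2]} = 4\sqrt{\E[Y^2]}$. For the second factor, Lemma~\ref{lm:ub_f_div} applied with the joint convexity of $(P,Q)\mapsto\mathcal W_2^2(P,Q)$ from Lemma~\ref{lm:cvx_W_p^p} --- in particular its convexity in the first argument, which matches the mixture representation $P_{Y|x,z^n} = \int_\sW K_{Y|x,w'}\,P_{W|x,z^n}({\rm d}w')$ --- yields $\E\big[\mathcal W_2^2(P_{Y|X,Z^n}, K_{Y|X,W})\big] \le \E\big[\mathcal W_2^2(K_{Y|X,W'}, K_{Y|X,W})\big]$. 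Multiplying the two factors delivers the claimed inequality.

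I do not expect a real obstacle here: the argument is essentially a transcription of the log-loss proof of Theorem~\ref{thm:regression_log} with a simpler continuity lemma. The points that need care are (i) checking the finiteness of every second moment that appears so that Lemma~\ref{lm:Var_cont} is applicable, which follows from the hypothesis and Jensen's inequality; (ii) re-expressing the two expectations in \eqref{eq:R2_Var}, originally over different conditioning variables, jointly under $P_{W,X,Z^n}$ so that the conditional bound can be integrated; and (iii) invoking the correct direction of convexity in Lemma~\ref{lm:ub_f_div}, namely convexity of $\mathcal W_2^2$ in its first argument, to pass from $\mathcal W_2^2(P_{Y|X,Z^n}, K_{Y|X,W})$ to $\mathcal W_2^2(K_{Y|X,W'}, K_{Y|X,W})$.
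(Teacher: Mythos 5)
Your proposal is correct and follows essentially the same route as the paper's proof: start from \eqref{eq:R2_Var}, apply Lemma~\ref{lm:Var_cont} pointwise, use Cauchy--Schwarz, bound the first factor by $4\sqrt{\E[Y^2]}$ via the $L_2$ triangle inequality, and pass from $\mathcal W_2^2(P_{Y|X,Z^n},K_{Y|X,W})$ to $\mathcal W_2^2(K_{Y|X,W'},K_{Y|X,W})$ using Lemma~\ref{lm:ub_f_div} with the convexity of $\mathcal W_2^2$ from Lemma~\ref{lm:cvx_W_p^p}. The only cosmetic difference is your insertion of the absolute value before applying the continuity lemma, which the paper skips since Lemma~\ref{lm:Var_cont} already bounds the absolute difference.
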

	\begin{proof}
		Similar to the proof of Theorem~\ref{thm:regression_log}, for the quadratic loss, we have
		\begin{align}
			{\rm MER}_{2} &= \int \big( \Var[Y|x,z^n] - \Var[Y|x,w]\big) P({\rm d}w, {\rm d}x, {\rm d}z^n) \\ 
			&\le 2 \int \big(\sqrt{\E[Y^2|x,z^n]} + \sqrt{\E[Y^2|x,w]} \big) \mathcal W_2(P_{Y|x,z^n}, K_{Y|x,w})  P({\rm d}w, {\rm d}x, {\rm d}z^n) \label{eq:pf_ub_R2_1} \\
			&\le 2\left(\int \big(\sqrt{\E[Y^2|x,z^n]} + \sqrt{\E[Y^2|x,w]} \big)^2 P({\rm d}w, {\rm d}x, {\rm d}z^n) \right)^{1/2} \cdot \nonumber \\
			&\qquad\! \left( \int \mathcal W_2^2(P_{Y|x,z^n}, K_{Y|x,w})  P({\rm d}w, {\rm d}x, {\rm d}z^n) \right)^{1/2} \label{eq:pf_ub_R2_2} \\
			&\le  4 \sqrt{ \E[Y^2] \E[\mathcal W_2^2(P_{Y|X,Z^n} , K_{Y|X,W})] } \label{eq:pf_ub_R2_5} \\
			&\le  4 \sqrt{ \E[Y^2] \E[\mathcal W_2^2(P_{Y|X,W'} , K_{Y|X,W})] } . \label{eq:pf_ub_R2_6}
		\end{align}
		where \eqref{eq:pf_ub_R2_1} follows from Lemma~\ref{lm:Var_cont}; \eqref{eq:pf_ub_R2_2} follows from Cauchy-Schwarz inequality; \eqref{eq:pf_ub_R2_5} follows from the triangle inequality of the $L_2$ norm; and \eqref{eq:pf_ub_R2_6} follows from Lemma~\ref{lm:ub_f_div} and Lemma~\ref{lm:cvx_W_p^p}.
		%This proves the first upper bound.
	\begin{comment}
		In addition, if $K_{Y|x,w}$ is Gaussian with variance $\sigma^2$ for all $(x,w)$, then from Lemma~\ref{lm:Var_cont_Gaussian} and Lemma~\ref{lm:ub_f_div},
		\begin{align}
			{\rm MER}_{2} 
			&\le  2\sigma^2 \Big(\sqrt{  \E[D_{\rm KL}(P_{Y|X,Z^n} \| K_{Y|X,W})] }  
			+  \E[D_{\rm KL}(P_{Y|X,Z^n} \| K_{Y|X,W})] \Big) \\
			&\le  2\sigma^2 \Big(\sqrt{  \E[D_{\rm KL}(K_{Y|X,W'} \| K_{Y|X,W})] }  
			+  \E[D_{\rm KL}(K_{Y|X,W'} \| K_{Y|X,W})] \Big) .
		\end{align}
		This proves the second upper bound.
	\end{comment}
	\end{proof}
	In Section~\ref{sec:MER_example} we make use of Theorem~\ref{thm:classification_log_TV}, \ref{thm:regression_log} and \ref{prop:regression_quadratic} with Lemma~\ref{lm:posterior_error} to bound the MER in concrete learning problems in terms of the MMSE of estimating $W$ from data.

\subsubsection{Examples} \label{sec:MER_example} 

\paragraph{Logistic regression (continued)}\mbox{}

\noindent
We continue with the example of logistic regression discussed in Section~\ref{sec:MER_cont_dec_rule_gen}, where $\sY = \{0,1\}$, $\sW\subset\R^d$, $
	K_{Y|x,w}(1) = \sigma( w^\top \phi(x))$
with $\sigma(a) \deq 1/(1+e^{-a})$, and $X$ is assumed to be independent of $W$.
As $\| \nabla_w \sigma(w^\top \phi(x)) \| \le \|\phi(x)\|/4$, from Lemma~\ref{lm:Lip_multi} we know that $ \sigma(w^\top \phi(x)) $ is $ \|\phi(x)\|/4 $-Lipschitz in $w$, hence
\begin{align}%\label{eq:Lip_logistic}
	d_{\rm TV}(K_{Y|x,w'}, K_{Y|x,w}) 
	%= \Big| \frac{1}{1 + e^{-w'^\top \phi(x)}} - \frac{1}{1 + e^{-w^\top \phi(x)}} \Big| 
	= \big| \sigma(w'^\top \phi(x)) -  \sigma(w^\top \phi(x)) \big| 
	\le \frac{1}{4} \|\phi(x)\| \|w' - w\| \label{eq:TV_logistic} . 
\end{align}
Then, from Theorem~\ref{th:MER_log_KL}, Theorem~\ref{thm:classification_log_TV} and Lemma~\ref{lm:posterior_error}, the following bounds hold for the log loss and the zero-one loss.
\begin{corollary}\label{prop:ub_excess_logistic}
	In binary logistic regression, for the log loss,
	\begin{align}\label{eq:logistic_log_ub}
		{\rm MER}_{\log}
		&\le  \log\Big(1+ \frac{1}{2} s_\phi^2 e^{s_\phi s_\sW} R_2(W|Z^n) \Big) 		
		%\wedge
		%\frac{1}{4} s_\phi (s_\phi s_\sW+\log 2 ) \sqrt{2R_2(W|X,Z^n)} 	
		%\bigwedge \frac{1}{4} s_\phi (s_\phi s_\sW+\log 2 ) \sqrt{2R_2(W|X,Z^n)} 
		%\frac{1}{2} \sqrt{ s^2 \big(\E[\|W \|^2] s^2 + 1\big) R_2(W|Z^n)} ,
	\end{align}
	where $s_\phi \deq \sup_{ x\in\sX}\|\phi(x)\|$ and $s_\sW \deq \sup_{w\in\sW}\|w\|$;
	while for the zero-one loss, 
	\begin{align}
		{\rm MER}_{01}
		&\le   \frac{1}{4} \E[\|\phi(X)\|] \sqrt{2R_2(W|Z^n)} .
	\end{align}
\end{corollary}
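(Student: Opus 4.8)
The plan is to treat the two loss functions separately: in each case I invoke the matching previously derived MER bound in terms of the deviation of the posterior predictive distribution from the true predictive model, then feed in the Lipschitz estimate \eqref{eq:TV_logistic} and Lemma~\ref{lm:posterior_error} to convert everything into a statement about $R_2(W|Z^n)$. The recurring structural facts I will use are that $X\perp W$ forces $P_{W|X,Z^n}=P_{W|Z^n}$, and in fact the triple $(W,W',Z^n)$ is independent of $X$: since $Z=(X,Y)$ is conditionally independent of $Z^n$ given $W$ by \eqref{eq:joint_dist} and $W'$ is drawn from $P_{W|Z^n}$ conditionally independently of everything else, one has $P_{X,W,W',Z^n}=P_X\cdot P_{W,W',Z^n}$. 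This lets me factor any expectation of a product of a function of $X$ and a function of $(W,W')$, and, via the second part of Lemma~\ref{lm:posterior_error}, it gives $\E[\|W'-W\|^2]=2R_2(W|X,Z^n)=2R_2(W|Z^n)$.

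For the zero-one loss I start from the second bound of Theorem~\ref{thm:classification_log_TV}, namely ${\rm MER}_{01}\le\E[d_{\rm TV}(K_{Y|X,W'},K_{Y|X,W})]$. Inserting \eqref{eq:TV_logistic} bounds the integrand by $\tfrac14\|\phi(X)\|\,\|W'-W\|$; by the independence noted above this expectation factors as $\tfrac14\,\E[\|\phi(X)\|]\,\E[\|W'-W\|]$, and Jensen's inequality together with Lemma~\ref{lm:posterior_error} gives $\E[\|W'-W\|]\le\sqrt{\E[\|W'-W\|^2]}=\sqrt{2R_2(W|Z^n)}$. This is exactly the claimed zero-one bound.

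For the log loss I start from Theorem~\ref{th:MER_log_KL}, ${\rm MER}_{\log}\le\E[D_{\rm KL}(K_{Y|X,W}\,\|\,K_{Y|X,W'})]$, and I must bound the Bernoulli KL divergence $D_{\rm KL}(\mathrm{Bern}(p)\,\|\,\mathrm{Bern}(q))$ with $p=\sigma(w^\top\phi(x))$, $q=\sigma(w'^\top\phi(x))$. The crucial step is to route through the $\chi^2$-divergence using the Jensen inequality $D_{\rm KL}(P\,\|\,Q)\le\log(1+\chi^2(P\,\|\,Q))$ — this is what produces the $\log(1+\cdots)$ shape of the target bound — and then estimate $\chi^2(\mathrm{Bern}(p)\,\|\,\mathrm{Bern}(q))=(p-q)^2/(q(1-q))$. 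Writing $p-q=\sigma'(\xi)\,(w^\top\phi(x)-w'^\top\phi(x))$ by the mean value theorem, bounding $\sigma'\le\tfrac14$ in the numerator, using the lower bound $\sigma'(a)=1/(4\cosh^2(a/2))\ge e^{-s_\phi s_\sW}/4$ valid for $|a|\le s_\phi s_\sW$ in the denominator (which holds since $\|w\|,\|w'\|\le s_\sW$ and $\|\phi(x)\|\le s_\phi$), and Cauchy--Schwarz on $(w-w')^\top\phi(x)$, yields $\chi^2\le\tfrac14 s_\phi^2 e^{s_\phi s_\sW}\|w-w'\|^2$. Taking expectations, pulling the expectation inside the logarithm by concavity of $t\mapsto\log(1+ct)$, and substituting $\E[\|W'-W\|^2]=2R_2(W|Z^n)$ gives \eqref{eq:logistic_log_ub}.

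The only genuine obstacle is the log-loss estimate: one must avoid settling for the cruder $D_{\rm KL}\le\chi^2$ (which would yield a bound of the form $\tfrac12 s_\phi^2 e^{s_\phi s_\sW}R_2(W|Z^n)$ lacking the logarithm) and instead use $D_{\rm KL}(P\,\|\,Q)\le\log(1+\chi^2(P\,\|\,Q))$, and then get the two-sided control of $\sigma'$ on the compact window $[-s_\phi s_\sW,\,s_\phi s_\sW]$ correct; everything else is routine bookkeeping with Jensen's inequality and Lemma~\ref{lm:posterior_error}.
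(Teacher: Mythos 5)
Your proof is correct, and its overall architecture coincides with the paper's: Theorem~\ref{th:MER_log_KL} for the log loss, the second bound of Theorem~\ref{thm:classification_log_TV} for the zero-one loss, the Lipschitz estimate \eqref{eq:TV_logistic}, the independence of $X$ from $(W,W',Z^n)$, Jensen's inequality, and Lemma~\ref{lm:posterior_error}; the zero-one half is essentially identical to the paper's argument. The one substantive difference is how the $\log(1+\cdot)$ form is produced for the log loss: the paper invokes the reverse Pinsker inequality \cite[Theorem~28]{fdiv_Sas_Ver}, $D_{\rm KL}(P\|Q)\le\log\big(1+2d_{\rm TV}^2(P,Q)/\min_y Q(y)\big)$, and then bounds $1/\min\{\sigma(a),1-\sigma(a)\}\le 2e^{|a|}$, whereas you use the elementary inequality $D_{\rm KL}(P\|Q)\le\log(1+\chi^2(P\|Q))$ together with the exact Bernoulli identity $\chi^2=(p-q)^2/(q(1-q))$ and $1/(q(1-q))=4\cosh^2(a/2)\le 4e^{|a|}$. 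Both routes land on the same intermediate bound $\log\big(1+\tfrac14 s_\phi^2 e^{s_\phi s_\sW}\|W-W'\|^2\big)$ (yours is in fact pointwise at least as tight, since $q(1-q)\ge\tfrac12\min\{q,1-q\}$), and the remaining steps—concavity of $t\mapsto\log(1+ct)$ and $\E[\|W-W'\|^2]=2R_2(W|Z^n)$—are the same; what your version buys is self-containedness, replacing the citation to the reverse Pinsker inequality with a two-line computation specific to the binary case.
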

\begin{proof}
	For the log loss, from Theorem~\ref{th:MER_log_KL},
	\begin{align}
		{\rm MER}_{\log} 
		&\le \E\big[D_{\rm KL}( K_{Y|X,W} \| K_{Y|X,W'})\big] \\
		&\le \E\Big[\log\Big(1+\frac{2d_{\rm TV}^2( K_{Y|X,W}, K_{Y|X,W'})}{\min\{\sigma(W'^\top \phi(X)), 1-\sigma(W'^\top \phi(X))\}} \Big)\Big] \\
		&\le \E\Big[\log\Big(1+ 4e^{|W'^\top \phi(X)|}{d_{\rm TV}^2( K_{Y|X,W}, K_{Y|X,W'})} \Big)\Big] \\
		&\le \E\Big[\log\Big(1+ \frac{1}{4}e^{s_\phi s_\sW} s_\phi^2 \|W-W'\|^2 \Big)\Big] \\
		&\le \log\Big(1+ \frac{1}{2}e^{s_\phi s_\sW} s_\phi^2 R_2(W|Z^n)) \Big) \\
		&\le \frac{1}{2} s_\phi^2 e^{s_\phi s_\sW} R_2(W|Z^n) \label{eq:logistic_log_ub_1}
	\end{align}
	where we used a reverse Pinsker's inequality \cite[Theorem~28]{fdiv_Sas_Ver}, the fact that $\max\{1/ \sigma(w^\top\phi(x)),1/ (1-\sigma(w^\top \phi(x))) \} \le 2\exp\{|w^\top \phi(x)|\}$,  $d_{\rm TV}(K_{Y|x,w'}, K_{Y|x,w}) \le \|\phi(x)\| \|w'-w\|/4$ from \eqref{eq:TV_logistic}, Jensen's inequality, and Lemma~\ref{lm:posterior_error}.
	
	\begin{comment}
	Alternatively, from Theorem~\ref{thm:classification_log_TV}, we obtain the following bound that has a weaker dependence on vanishing $R_2(W|X,Z^n)$ than \eqref{eq:logistic_log_ub_1}, but potentially tighter dependence on $s_\phi$ and $s_\sW$:
	\begin{align}
		{\rm MER}_{\log} 
		&\le \sup_{x\in\sW,\,w\in\sW}\big(-\log \min\big\{\sigma(w^\top \phi(x)), 1-\sigma(w^\top \phi(x))\big\}\big) \E\big[d_{\rm TV}( K_{Y|X,W'}, K_{Y|X,W})\big] \\
		&\le \frac{1}{4} \Big(\sup_{x\in\sW,\,w\in\sW}|w^\top \phi(x)|+\log 2 \Big)\sup_{ x\in\sX}\|\phi(x)\| \E[ \|W'-W\|] \\
		&\le \frac{1}{4} s_\phi (s_\phi s_\sW+\log 2 ) \sqrt{\E[ \|W'-W\|^2]} \\
		&= \frac{1}{4} s_\phi (s_\phi s_\sW+\log 2 ) \sqrt{2R_2(W|X,Z^n)} 
	\end{align}
	where we used the fact that $\max\{-\log \sigma(w^\top\phi(x)),-\log (1-\sigma(w^\top \phi(x))) \} \le |w^\top \phi(x)| + \log 2$, \eqref{eq:TV_logistic}, and Lemma~\ref{lm:posterior_error}.
	\end{comment}
	
	For the zero-one loss, from Theorem~\ref{thm:classification_log_TV}, 
	\begin{align}
		{\rm MER}_{01} 
		%&=  \E[\max\nolimits_{y\in\sY}K_{Y|X,W}(y)] - \E[\max\nolimits_{y\in\sY}P_{Y|X,Z^n}(y)] \\
		&\le \E[d_{\rm TV}( K_{Y|X,W'}, K_{Y|X,W})] \\
		&\le \frac{1}{4} \E\big[\|\phi(X)\| \|W' - W\|\big] \\
		&\le \frac{1}{4}  \E[\|\phi(X)\|] \sqrt{\E\big[\|W'-W\|^2\big] } \\
		&= \frac{1}{4} \E[\|\phi(X)\|] \sqrt{ 2 R_2(W|Z^n) }
	\end{align}
	where we used \eqref{eq:TV_logistic} and Lemma~\ref{lm:posterior_error}.
\end{proof}
\noindent %Note that the above upper bounds involve  in estimating the parameter $W$ based on $Z^n$. 

The upper bound in \eqref{eq:logistic_log_ub} shows that the rate of convergence of $\rm MER_{\log}$ in $n$ for logistic regression is the same as that for $R_2(W|Z^n)$, as $\log(1+u)\le u$ for $u>0$. This improves the upper bound given in \eqref{eq:MER_log_logistic_cont_Psi} when $R_2(W|Z^n)$ is small, e.g., when $n$ is large.
%In the next example, we will see that for linear regression problems the MMSE there can be derived under Gaussian priors.

\begin{comment}
\subsubsection{Binary classification using neural networks}
A more general model for binary classification would be $K_{Y|X=x,W=w}(1) = s( g(w,x))$, where $s$ is the sigmoid function, $\sW\subset\R^d$, and $g:\sW\times\sX\rightarrow\R$ is some function that may be represented by a neural network.
In this case,
\begin{align}
d_{\rm TV}(K_{Y|x,w'}, K_{Y|x,w}) %&= \Big| \frac{1}{1 + e^{-g(w',x)}} - \frac{1}{1 + e^{-g(w,x)}} \Big| 
\le \frac{1}{4} \big(\sup\nolimits_{w\in\sW} \|\nabla_w g(x,w)\|\big) \|w' - w\| \label{eq:TV_binary_NN} ,
\end{align}
and we have the following result:
\begin{proposition}
For binary classification, if $K_{Y|X=x,W=w}(1) = s( g(w,x))$ with some $\sW\subset\R^d$ and $g:\sW\times\sX\rightarrow\R$, then  under the log loss,
\begin{align}
{\rm MER}_{\log}
&\le 4 \sqrt{  2 \E\big[ \sup\nolimits_{w\in\sW} \|\nabla_w g(X,w) \|\big] R_{\text{\tiny$\|\!\cdot\!\small\|$}}(W|Z^n)} ,
\end{align}
while under the 0-1 loss,
\begin{align}
{\rm MER}_{01}
&\le \frac{1}{2} \E\big[ \sup\nolimits_{w\in\sW} \|\nabla_w g(X,w) \|\big] R_{\text{\tiny$\|\!\cdot\!\small\|$}}(W|Z^n) .
\end{align}
\end{proposition}
We see that the MER depends on the data size $n$ only through the minimum estimation error $R_{\text{\tiny$\|\!\cdot\!\small\|$}}(W|Z^n)$. 
What is more involved than the case of logistic regression is that, here both $ \E[ \sup\nolimits_{w\in\sW} \|\nabla_w g(X,w) \|] $ and $R_{\text{\tiny$\|\!\cdot\!\small\|$}}(W|Z^n)$ depend on the model $K_{Y|X,W}$.
\end{comment}
	
	\paragraph{Nonlinear and linear regression (continued)}\mbox{}
	
	\noindent
	We also continue with the discussion on the nonlinear and linear regression problems in Section~\ref{sec:MER_cont_g(x,w)}, where $Y = g(X,W) + V$, $\sW = \R^d$, $X$ and $W$ are independent, and $V \sim \mathcal{N}(0,\sigma^2)$ is independent of $(X,W)$.
	Under this model,
	\begin{align}
		\mathcal W_2^2(K_{Y|x,w'}, K_{Y|x,w}) = 2\sigma^2 D_{\rm KL}(K_{Y|x,w'} \| K_{Y|x,w}) =  (g(x,w) - g(x,w'))^2 . \label{eq:W2_nonlinreg} 
	\end{align}
	From Theorem~\ref{th:MER_log_KL}, Theorem~\ref{prop:regression_quadratic} and Lemma~\ref{lm:posterior_error}, we obtain the following upper bounds for nonlinear regression.
	\begin{corollary}\label{prop:reg_NN}
		For the above nonlinear regression problem, let $s_g \deq \E\big[ \sup_{w\in\sW} \|\nabla_w g(X,w) \|^2\big]$. Then for the log loss,
		\begin{align}
			{\rm MER}_{\log} %&=  h(Y|X,Z^n) - h(Y|X,W) \\
			&\le \frac{1}{\sigma^2} R_2(g(X,W)|X,Z^n)
			\le \frac{s_g}{\sigma^2}  R_2(W|Z^n) , 
		\end{align}
		while for the quadratic loss,
		\begin{align}
			{\rm MER}_{2} 
			 &\le  4\sqrt{ 2 ( \E[g(X,W)^2] + \sigma^2 ) R_2(g(X,W)|X,Z^n) } \\
			 &\le  4 \sqrt{ 2 ( \E[g(X,W)^2] + \sigma^2) s_g R_2(W|Z^n) } .
		\end{align}
	\end{corollary}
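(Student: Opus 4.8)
The plan is to specialize the general posterior-deviation bounds already proven in Section~\ref{sec:MER_posterior_deviation} to the additive-Gaussian-noise model, using the closed form \eqref{eq:W2_nonlinreg} for the $2$-Wasserstein distance and KL divergence between $K_{Y|x,w'}$ and $K_{Y|x,w}$, and then converting the resulting posterior-deviation term into a model-parameter estimation error via Lemma~\ref{lm:posterior_error}. Throughout, let $W'$ be a sample from $P_{W|X,Z^n}$ that is conditionally independent of everything else given $(X,Z^n)$, so that $W$ and $W'$ are conditionally i.i.d.\ given $(X,Z^n)$; note also that the fresh $X$ is independent of $(W,Z^n)$ under \eqref{eq:joint_dist}, hence $P_{W|X,Z^n}=P_{W|Z^n}$ and $X$ is independent of $(W,W',Z^n)$.

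For the log loss, I would start from Theorem~\ref{th:MER_log_KL}, which gives ${\rm MER}_{\log}\le \E[D_{\rm KL}(K_{Y|X,W}\|K_{Y|X,W'})]$. Since $K_{Y|x,w}$ and $K_{Y|x,w'}$ are Gaussian with the common covariance $\sigma^2$, the KL divergence is symmetric in its two arguments and equals $(g(x,w)-g(x,w'))^2/(2\sigma^2)$, i.e.\ the middle identity in \eqref{eq:W2_nonlinreg}. Taking expectations and observing that, conditionally on $(X,Z^n)$, the scalars $g(X,W)$ and $g(X,W')$ are i.i.d., the scalar case of Lemma~\ref{lm:posterior_error} yields $\E[(g(X,W)-g(X,W'))^2]=2R_2(g(X,W)|X,Z^n)$, which gives the first inequality ${\rm MER}_{\log}\le R_2(g(X,W)|X,Z^n)/\sigma^2$. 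The second inequality is exactly the bound $R_2(g(X,W)|X,Z^n)\le s_g\,R_2(W|Z^n)$ already established inside the proof of Theorem~\ref{th:rlz_n} (the gradient bound $|g(X,W)-g(X,W')|\le \sup_{w}\|\nabla_w g(X,w)\|\,\|W-W'\|$ from Lemma~\ref{lm:Lip_multi}, the factorization of the expectation using independence of $X$ from $(W,W',Z^n)$, and $\E[\|W-W'\|^2]=2R_2(W|Z^n)$ from Lemma~\ref{lm:posterior_error}), so I would simply invoke it.

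For the quadratic loss, I would apply Theorem~\ref{prop:regression_quadratic} (its hypotheses hold since $\sY=\R$ and $\E[Y^2|x,w]=g(x,w)^2+\sigma^2<\infty$) to obtain ${\rm MER}_2\le 4\sqrt{\E[Y^2]\,\E[\mathcal W_2^2(K_{Y|X,W'},K_{Y|X,W})]}$, then substitute $\mathcal W_2^2(K_{Y|x,w'},K_{Y|x,w})=(g(x,w)-g(x,w'))^2$ from \eqref{eq:W2_nonlinreg}, use $\E[(g(X,W)-g(X,W'))^2]=2R_2(g(X,W)|X,Z^n)$ as above, and compute $\E[Y^2]=\E[g(X,W)^2]+\sigma^2$ (because $V$ is zero-mean and independent of $(X,W)$). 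This gives the first inequality, and the second again follows from $R_2(g(X,W)|X,Z^n)\le s_g\,R_2(W|Z^n)$. There is no genuinely hard step here, since the statement is a specialization of already-proven results; the only point demanding care is bookkeeping — checking the independence of $X$ from $(W,W',Z^n)$ so the expectation in the Lipschitz step factors, and confirming that Lemma~\ref{lm:posterior_error} is applied with conditioning variable $(X,Z^n)$ while $R_2(W|X,Z^n)$ collapses to $R_2(W|Z^n)$.
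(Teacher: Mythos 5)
Your proposal is correct and follows essentially the same route as the paper: Theorem~\ref{th:MER_log_KL} (resp.\ Theorem~\ref{prop:regression_quadratic}) combined with the identity \eqref{eq:W2_nonlinreg}, then Lemma~\ref{lm:posterior_error} applied to $g(X,W)$ to get $R_2(g(X,W)|X,Z^n)$, and finally Lemma~\ref{lm:Lip_multi} with Lemma~\ref{lm:posterior_error} and the independence of $X$ and $W$ to pass to $s_g R_2(W|Z^n)$. The only cosmetic difference is that you cite the intermediate bound from the proof of Theorem~\ref{th:rlz_n} rather than rewriting it, which is fine.
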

	\begin{proof}
		For the log loss, 
		\begin{align}
			{\rm MER}_{\log}
			&\le \E[D_{\rm KL}(K_{Y|X,W} \| K_{Y|X,W'})] \label{eq:pf_nonlinear_reg_1} \\
			&= \frac{1}{2\sigma^2} \E\big[\big(g(X,W) - g(X,W')\big)^2\big] \label{eq:pf_nonlinear_reg_2} \\
			&= \frac{1}{\sigma^2} R_2(g(X,W)|X,Z^n) \label{eq:pf_nonlinear_reg_21} \\
			&\le  \frac{1}{2\sigma^2} \E\big[ ( \sup\nolimits_{w\in\sW} \|\nabla_w g(X,w) \|)^2 \|W-W'\|^2\big] \label{eq:pf_nonlinear_reg_3} \\
			%&=  \frac{1}{2\sigma^2} \E\big[ \sup\nolimits_{w\in\sW} \|\nabla_w g(X,w) \|^2\big] \E\big[ \|W-W'\|^2\big] \label{eq:pf_nonlinear_reg_4} \\
			&=  \frac{1}{\sigma^2} \E\big[ \sup\nolimits_{w\in\sW} \|\nabla_w g(X,w) \|^2\big] R_{2}(W|Z^n) \label{eq:pf_nonlinear_reg_5} .
		\end{align}
		where \eqref{eq:pf_nonlinear_reg_1} follows from Theorem~\ref{th:MER_log_KL}; \eqref{eq:pf_nonlinear_reg_2} is from \eqref{eq:W2_nonlinreg}; \eqref{eq:pf_nonlinear_reg_21} is due to Lemma~\ref{lm:posterior_error} for the quadratic loss; \eqref{eq:pf_nonlinear_reg_3} is due to \eqref{eq:pf_nonlinear_reg_2} and Lemma~\ref{lm:Lip_multi}; and \eqref{eq:pf_nonlinear_reg_5} follows from Lemma~\ref{lm:posterior_error}.
		
		For the quadratic loss, from Theorem~\ref{prop:regression_quadratic}, \eqref{eq:W2_nonlinreg} and Lemma~\ref{lm:posterior_error}, and a similar reasoning as above,
%		\begin{align}
%			{\rm MER}_2 
%			&\le \sqrt{4{\sigma^2} \E\big[ \sup\nolimits_{w\in\sW} \|\nabla_w g(X,w) \|^2\big] R_{2}(W|Z^n)}  \, + \nonumber \\
%			&\qquad\quad\,\, 2 \E\big[ \sup\nolimits_{w\in\sW} \|\nabla_w g(X,w) \|^2\big] R_{2}(W|Z^n) ,
%		\end{align}	
	\begin{align}
		{\rm MER}_{2} 
		&\le  4 \sqrt{ \E[Y^2] \E[ \big(g(X,W) - g(X,W')\big)^2 ] } \\
		&=  4 \sqrt{ 2 \E[Y^2] R_2(g(X,W)|X,Z^n )  } \\
		&\le  4 \sqrt{ 2 \E[Y^2]  \E\big[ \sup\nolimits_{w\in\sW} \|\nabla_w g(X,w) \|^2\big] R_{2}(W|Z^n) } \\
		&=  4 \sqrt{ 2 \big(\E[g(W,X)^2] + \sigma^2\big) s_g R_2(W|Z^n) } , 
	\end{align}	
		which proves the second upper bound.
	\end{proof}

For the special case of linear regression with Gaussian prior $P_W = \mathcal N(0,\sigma_W^2 \mathbf I_d)$, we have $g(x,w) = w^\top \phi(x)$, $s_g=\E[\|\phi(X)\|^2]$, and $R_2(W|Z^n) = \E[{\rm tr} (C_{W|Z^n})]$ with $C_{W|Z^n}$ given in \eqref{eq:C_W|Z^n}; Corollary~\ref{prop:reg_NN} in this case gives
	\begin{align}
		{\rm MER}_{\log}  
		%&\le \frac{1}{2}\log \left(1 + \frac{1}{\sigma^2} \E\big[\|\phi(X)\|^2\big] \E[{\rm tr} (C_{W|X^n, Y^n})] \right) ,
		%&\le \frac{7}{\sigma^2} \sqrt{2 \big(d\sigma_W^2 s^2 + \sigma^2\big) s^2 \E[{\rm tr} (C_{W|Z^n})] } , \\
		&\le \frac{1}{\sigma^2} \E[\|\phi(X)\|^2] \E[{\rm tr} (C_{W|Z^n})]  , \label{eq:lin_reg_log_ub1}
	\end{align}
	and 
	\begin{align}
		{\rm MER}_{2} %&=  h(Y|X,Z^n) - h(Y|X,W) \\
		&\le 4 \sqrt{ 2 \big(\sigma_W^2 \E[\|\phi(X)\|^2] + \sigma^2\big) \E[\|\phi(X)\|^2] \E[{\rm tr} (C_{W|X^n, Y^n})] } . \label{eq:lin_reg_2_ub1}
		%&\le \sqrt{4\sigma^2 s^2 \E[{\rm tr} (C_{W|Z^n})]} + 2s^2 \E[{\rm tr} (C_{W|Z^n})]  .
	\end{align}
From the exact expressions of MER given in \eqref{eq:MER_log_lin_reg_1} and \eqref{eq:MER_2_lin_reg_1}, we see that the upper bound for ${\rm MER}_{\log}$ in \eqref{eq:lin_reg_log_ub1} is order-optimal for vanishing MER; while the upper bound for ${\rm MER}_2$ in \eqref{eq:lin_reg_2_ub1} is not, unlike the upper bound \eqref{eq:rlz_n_lin_MER2} for ${\rm MER}_2$ given by Theorem~\ref{th:rlz_n}. In Appendix~\ref{appd:MER2_lin_mi}, we derive an alternative upper bound for $\rm MER_2$ based on Theorem~\ref{thm:regression_genloss_MI} in Section~\ref{sec:MER_gen_loss}, however it is not order-optimal either.
Nevertheless, the upper bound in \eqref{eq:lin_reg_2_ub1} can be tighter than the order-optimal upper bound in \eqref{eq:rlz_n_lin_MER2} when $R_2(W|Z^n)$ is large, e.g., when $n$ is small. %It also captures the essence that in linear regression, the MER for both the log loss and the quadratic loss can be upper-bounded in terms of the MMSE for estimating $W$ from $Z^n$.

\begin{comment}
To bound the MER of linear regression under quadratic loss, we need to confine $\sY$ to be a bounded set to invoke \eqref{eq:reg_excess_ub_bdd} in Proposition~\ref{prop:regression_quadratic}. We have the following result.
\begin{proposition}[Proved in Appendix~\ref{appd:pf_prop_lin_reg_quadratic}]\label{prop:lin_reg_quadratic}
If $Y$ can be modeled as $Y = r( w^\top \phi(x) + \eps )$ given $w$ and $x$, where $P_W = N(0,\sigma_W^2 \mathbf I_d)$, $\eps\sim N(0, \sigma^2)$, and $r(u) = B \tanh(u/B) \le u$ is a soft thresholder that maps its input to the range $(-B,B)$ while roughly maintaining identity when the input is in $(-B,B)$, then under the quadratic loss,
\begin{align}
{\rm MER}_{2} %&=  \E\big[\Var[Y|X,Z^n]\big] - \E\big[\Var[Y|X,W]\big] \\
&\le  \frac{6 B^2}{\sigma} \E[\|\phi(X)\|] \sqrt{\E[{\rm tr} (C_{W|X^n, Y^n})] } .
\end{align}	
\end{proposition}
\end{comment}

	We also see from Theorem~\ref{th:rlz_n} and Corollary~\ref{prop:reg_NN} that the MER upper bounds for the general nonlinear regression problem depend on $n$ only through $R_2(g(X,W)|X,Z^n)$ or $R_2(W|Z^n)$. Although closed-form expressions of these quantities are generally intractable, the upper bounds explicitly show how the epistemic part of the overall prediction uncertainty depends on the model uncertainty, which can be quantified by the corresponding MMSE. 
	Moreover, the upper bounds obtained in terms of $R_2(g(X,W)|X,Z^n)$ can be much tighter than those in terms of $R_2(W|Z^n)$, 
	especially when multiple values of $W$ map to the same function $g(\cdot,W)$, e.g., when $g:\sX\times\sW\rightarrow\sY$ can be represented by over-parametrized neural networks \cite{math_nn21}.

	\section{Extensions}\label{sec:extensions}
	\subsection{Multiple model families}\label{sec:multi_model}
	Instead of being described by a single model family, in many cases the joint distribution of $X$ and $Y$ can be better represented by a finite class of model families 
	%$\big\{\{K_{Y|X,w,m}, w\in\sW_m\},m\in\sM \big\}$, 
	${\mathbb M} = \{\cM_m,m\in\sM\}$ all together, where each family $\cM_m = \{P_{X,Y|w,m}, w\in\sW_m\}$ is a collection of parametrized joint distributions of $(X,Y)$.
	The class of model families $\mathbb M$ is also known as the \emph{model class}, and the index $m$ of each family is also known as the \emph{model index} \cite{model_select18}.
	In the Bayesian formulation, the model index $M$ is treated as a random element of $\sM$ with prior $P_M$; given a model index $m$, the model parameters are represented as a random vector in $\sW_m$ with prior $P_{W|m}$.
	As before, denoting $Z_i\deq (X_i,Y_i)$, $i=1,\ldots,n$, as the observed data and $Z\deq (X,Y)$ as a fresh pair, the quantities under consideration are assumed to be generated from the joint distribution 
	\begin{align}\label{eq:joint_dist_multi}
		P_{M, W, Z^n, Z} = P_{M} P_{W|M} \Big(\prod\limits_{i=1}^n  P_{Z_i|W,M}\Big) P_{Z|W,M} 
	\end{align}
	where $P_{Z_i|W,M} = P_{Z|W,M}$ for $i = 1,\ldots,n$.
	In the same spirit in the single model family setting, we can define the MER in the above multi-model family setting as follows.
	\begin{definition}
		In the multi-model family setting, the fundamental limit of the Bayes risk with respect to the loss function $\ell$ is defined as
		\begin{align}\label{eq:RBW_def_multi}
			R_\ell(Y|X,W,M) = \inf_{\Psi: \sX\times\sW\times\sM \rightarrow \sA} \E[\ell(Y,\Psi(X,W,M))] .
		\end{align}
	\end{definition}
	\begin{definition}
		In the multi-model family setting, the minimum excess risk with respect to the loss function $\ell$ is defined as
		\begin{align}
			{\rm MER}_{\ell} = R_\ell(Y|X,Z^n) - R_\ell(Y|X,W,M) .
		\end{align}
		%where $R(Y|X,Z^n)$ and $R(Y|X,W,M)$ are the Bayes risk and its fundamental limit with respect to the loss function $\ell$ as defined in \eqref{eq:RB_def} and \eqref{eq:RBW_def_multi}.
	\end{definition}
	
	Similar to Lemma~\ref{lm:MER_log_mi} and Theorem~\ref{th:MER_log_KL} in the single model family setting, for the log loss, the MER in the multi-model family setting can be related to the conditional mutual information $I(M,W;Y|X,Z^n)$ and its upper bounds.
	\begin{theorem}\label{prop:ub_MER_log_multi}
		In the multi-model family setting, with the log loss,
		\begin{align}
			{\rm MER}_{\log} = I(M,W;Y|X,Z^n) ,
		\end{align}
		which can be upper-bounded by
		$
		\frac{1}{n} I(M,W; Y^n|X^n) .
		$
		Further, if $P_{X,Y|w,m}=P_{X|w,m}  K_{Y|X,w,m}$ for all $(m,w)\in\sM\times\sW_m$, then 
		\begin{align}
			{\rm MER}_{\log} \le \E[D_{\rm KL}(K_{Y|X,W,M} \| K_{Y|X,W',M'})] ,
		\end{align}
		where $(M',W')$ is a sample from the posterior distribution $P_{M,W|X,Z^n}$ such that $(M,W)$ and $(M',W')$ are conditionally i.i.d.\ given $(X,Z^n)$.
	\end{theorem}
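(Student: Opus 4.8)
The plan is to carry the pair $(M,W)$ through the single-model-family arguments essentially verbatim, since the multi-model joint law \eqref{eq:joint_dist_multi} encodes exactly the same conditional-independence structure with $(M,W)$ in the role of the model realization: conditioned on $(M,W)$, the samples $Z_1,\dots,Z_n,Z$ are i.i.d., and $Y$ is conditionally independent of $Z^n$ given $(X,M,W)$. For the identity ${\rm MER}_{\log} = I(M,W;Y|X,Z^n)$ I would repeat the proof of Lemma~\ref{lm:MER_log_mi}: by \eqref{eq:RBW_def_multi} and the identification of the log-loss Bayes risk with the conditional $\mu$-entropy \eqref{eq:def_cond_mu_ent}, ${\rm MER}_{\log} = H_\mu(Y|X,Z^n) - H_\mu(Y|X,W,M)$, while $I(M,W;Y|X,Z^n) = H_\mu(Y|X,Z^n) - H_\mu(Y|X,Z^n,W,M)$, and the Markov chain $(X,Z^n,W,M)-(X,W,M)-Y$ implied by \eqref{eq:joint_dist_multi} gives $H_\mu(Y|X,Z^n,W,M) = H_\mu(Y|X,W,M)$, which yields the claim.

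For the bound $I(M,W;Y|X,Z^n)\le \frac{1}{n} I(M,W;Y^n|X^n)$ I would rerun the proof of Theorem~\ref{th:MER_log_I(W;Y^n|X^n)} with $W$ replaced by $(M,W)$. The ingredients used there are (i) the exchangeability relation $(M,W,X^n,Y^{i-1},Y_i)\stackrel{\rm d.}{=}(M,W,X^n,Y^{i-1},Y_{i+1})$, (ii) $Y_{i+1}$ being conditionally independent of everything else given $(M,W,X_{i+1})$, (iii) the data-processing inequality of Lemma~\ref{lm:DPI_RB_gen} applied to the $\mu$-entropy, and (iv) the chain rule of mutual information; all four hold under \eqref{eq:joint_dist_multi}, so the same telescoping/monotonicity argument yields $n\,I(M,W;Y|X,Z^n)\le I(M,W;Y^n|X^n)$.

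For the predictive-model bound, assume $P_{X,Y|w,m}=P_{X|w,m}K_{Y|X,w,m}$ and let $(M',W')$ be drawn from the posterior $P_{M,W|X,Z^n}$, conditionally independent of everything else given $(X,Z^n)$, so that $(M,W)$ and $(M',W')$ are conditionally i.i.d.\ given $(X,Z^n)$. Starting from ${\rm MER}_{\log}=I(M,W;Y|X,Z^n)=\E[D_{\rm KL}(P_{Y|X,W,M,Z^n}\,\|\,P_{Y|X,Z^n})]$, the conditional independence $Y\perp Z^n\mid(X,M,W)$ reduces this to $\E[D_{\rm KL}(K_{Y|X,M,W}\,\|\,P_{Y|X,Z^n})]$. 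Since $P_{Y|x,z^n}=\int K_{Y|x,w',m'}\,P_{M,W|x,z^n}({\rm d}m',{\rm d}w')$ is a mixture, the convexity of $D_{\rm KL}(P\,\|\,\cdot)$ in its second argument — i.e., the analogue of Lemma~\ref{lm:ub_f_div} in which the mixing sample is the full pair $(M',W')$ — gives $D_{\rm KL}(K_{Y|x,m,w}\,\|\,P_{Y|x,z^n})\le\E[D_{\rm KL}(K_{Y|x,m,w}\,\|\,K_{Y|x,M',W'})\mid x,z^n]$; taking expectations over $(M,W,X,Z^n)$ and using that $(M',W')$ is a conditionally i.i.d.\ copy of $(M,W)$ given $(X,Z^n)$ yields the claimed bound ${\rm MER}_{\log}\le\E[D_{\rm KL}(K_{Y|X,W,M}\,\|\,K_{Y|X,W',M'})]$.

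I do not expect a genuine obstacle here: the proof is a transcription of the single-family arguments of Lemma~\ref{lm:MER_log_mi}, Theorem~\ref{th:MER_log_I(W;Y^n|X^n)} and Theorem~\ref{th:MER_log_KL}. The only points requiring care are the bookkeeping of conditioning variables when invoking the exchangeability and Markov relations implied by \eqref{eq:joint_dist_multi}, and making sure the mixing variable in the Lemma~\ref{lm:ub_f_div}-type step is the full pair $(M',W')$ rather than $W'$ alone.
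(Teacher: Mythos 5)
Your proposal is correct and follows exactly the route the paper intends: the paper gives no separate proof for this theorem, simply noting it is "similar to" Lemma~\ref{lm:MER_log_mi}, Theorem~\ref{th:MER_log_I(W;Y^n|X^n)} and Theorem~\ref{th:MER_log_KL}, i.e., the single-family arguments transcribed with the pair $(M,W)$ in place of $W$, with the mixture/convexity step supplied by the multi-model analogue of Lemma~\ref{lm:ub_f_div} (the first inequality of Theorem~\ref{prop:ub_E_div_multi}). Your bookkeeping of the Markov and exchangeability relations under \eqref{eq:joint_dist_multi}, and your insistence that the mixing variable be the full pair $(M',W')$, match the paper's treatment.
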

	
	In addition, we can still bound the MER in terms of the deviation of the posterior predictive distribution from the true predictive model, similar to the results in Section~\ref{sec:MER_posterior_deviation}.
	As in the predictive modeling framework, suppose that for each model family $\mathcal M_m\in\mathbb M$, $P_{X,Y|w,m}=P_{X|w,m}  K_{Y|X,w,m}$ for all $w\in\sW_m$.
	Then for any statistical distance $D$, a diameter-like quantity of the model class $\mathbb M$ with respect to $\! D$ can be defined as
	\begin{align}
		%{\rm diam}(\mathbb M,D) = \max_{m,m' \in\sM} \sup_{w\in\sW_m,w'\in\sW_{m'}} \int P_{X}({\rm d}x) D(K_{Y|x,w',m'} , K_{Y|x,w,m}) .
		{\rm diam}(\mathbb M,D) = \max_{ m\neq m'\in\sM} \sup_{w\in\sW_m,w'\in\sW_{m'}} \sup_{ x\in\sX} D(K_{Y|x,w',m'} , K_{Y|x,w,m}) .
	\end{align}
	\noindent With the above definition %and with Lemma~\ref{lm:ub_f_div_multi} in Appendix~\ref{appd:pf_lm_ub_f_div_multi}, 
	we have the following general upper bound on the deviation of the posterior predictive distribution from the true predictive model.
	The proof is given in Appendix~\ref{appd:pf_prop_ub_E_div_multi}.
	\begin{theorem}\label{prop:ub_E_div_multi}%[Proved in Appendix~\ref{appd:pf_prop_ub_E_div_multi}]
		In the multi-model setting, for any statistical distance $D$ that is convex in the first argument,
		\begin{align}\label{eq:ub_f_div_E_multi_MW}
			&\E[ D(P_{Y|X,Z^n} , K_{Y|X,W,M}) ] 
			\le  \E[D(K_{Y|X,W',M'} , K_{Y|X,W,M})] ,
		\end{align}
		where $(M',W')$ is a sample from the posterior distribution $P_{M,W|X,Z^n}$ such that $(M,W)$ and $(M',W')$ are conditionally i.i.d.\ given $(X,Z^n)$.
		The right side of \eqref{eq:ub_f_div_E_multi_MW} can be further upper-bounded by
		\begin{align}\label{eq:ub_f_div_E_multi}
			%&\E[ D(P_{Y|X,Z^n} , K_{Y|X,W,M}) ] 
			%\le \E[D(K_{Y|X,W',M} \| K_{Y|X,W,M})] + \PP[M'\neq M] {\rm diam}(\sM,P_X) \qquad\,  
			\E[D(K_{Y|X,W',M} , K_{Y|X,W,M})] + 2 {\rm diam}(\mathbb M,D) R_{01}(M|X,Z^n) ,
		\end{align}
		where 
		$W'$ is a sample from the posterior distribution $P_{W|X,Z^n,M}$ such that $W'$ and $W$ are conditionally i.i.d.\ given $(X,Z^n,M)$.
		If $D$ is convex in the second argument, we obtain another set of upper bounds by exchanging the order of the arguments of $D$ in the results above.
	\end{theorem}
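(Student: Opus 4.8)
The plan is to establish the first inequality \eqref{eq:ub_f_div_E_multi_MW} by the same mixture-plus-Jensen argument used for Lemma~\ref{lm:ub_f_div}, and then to obtain the refinement \eqref{eq:ub_f_div_E_multi} by decoupling the estimation of the model index $M$ from the estimation of the within-family parameters $W$.

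For \eqref{eq:ub_f_div_E_multi_MW}: from the joint law \eqref{eq:joint_dist_multi} together with the factorization $P_{Z|W,M}=P_{X|W,M}K_{Y|X,W,M}$, the posterior predictive distribution is, for each $(x,z^n)$, the mixture $P_{Y|x,z^n}(y)=\int K_{Y|x,w',m'}(y)\,P_{W,M|x,z^n}({\rm d}w',{\rm d}m')$. Since $(M',W')$ is drawn from $P_{M,W|x,z^n}$ conditionally independently of $(M,W)$ given $(X,Z^n)$, convexity of $D$ in its first argument and Jensen's inequality (with the second argument held at $K_{Y|x,w,m}$) give $D(P_{Y|x,z^n},K_{Y|x,w,m})\le\E[D(K_{Y|X,W',M'},K_{Y|X,W,M})\mid X=x,Z^n=z^n,W=w,M=m]$; taking expectation over $(W,M,X,Z^n)$ yields \eqref{eq:ub_f_div_E_multi_MW}. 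The convex-in-the-second-argument case follows identically after swapping the two arguments of $D$.

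For \eqref{eq:ub_f_div_E_multi}: let $\widetilde W$ be a sample from $P_{W|X,Z^n,M}$ that is conditionally i.i.d.\ with $W$ given $(X,Z^n,M)$ — this is the ``$W'$'' appearing on the right side of \eqref{eq:ub_f_div_E_multi}. Split the right side of \eqref{eq:ub_f_div_E_multi_MW} over $\{M'=M\}$ and its complement. On $\{M'\neq M\}$ the definition of ${\rm diam}(\mathbb{M},D)$ bounds the integrand by ${\rm diam}(\mathbb{M},D)$; moreover $M$ and $M'$ are conditionally i.i.d.\ given $(X,Z^n)$ (marginalize the definition of $(M',W')$), so Lemma~\ref{lm:posterior_error} applied with the discrete metric $\rho(m,m')=\I\{m\neq m'\}$, whose conditional Bayes envelope is $R_{01}(M\mid X,Z^n)$, gives $\E[\I\{M'\neq M\}]\le 2R_{01}(M\mid X,Z^n)$, which accounts for the second term in \eqref{eq:ub_f_div_E_multi}. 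On $\{M'=M\}$, condition on $(X,Z^n)=(x,z^n)$: by the product structure of the joint law, conditioning additionally on $\{M=m,\,M'=m\}$ makes $(W,W')$ i.i.d.\ with law $P_{W|x,z^n,m}$, the same law borne by $(W,\widetilde W)$ conditionally on $\{M=m\}$, so
\begin{align*}
&\E\big[\I\{M'=M\}\,D(K_{Y|X,W',M},K_{Y|X,W,M})\,\big|\,x,z^n\big] \\
&\qquad =\sum_m P_{M|x,z^n}(m)^2\,\E\big[D(K_{Y|x,\widetilde W,m},K_{Y|x,W,m})\,\big|\,x,z^n,M=m\big],
\end{align*}
whereas the corresponding expression for $\E[D(K_{Y|X,\widetilde W,M},K_{Y|X,W,M})\mid x,z^n]$ has $P_{M|x,z^n}(m)$ in place of $P_{M|x,z^n}(m)^2$. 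Since $D\ge 0$ and $P_{M|x,z^n}(m)^2\le P_{M|x,z^n}(m)$, the matched-index term is at most $\E[D(K_{Y|X,\widetilde W,M},K_{Y|X,W,M})\mid x,z^n]$; integrating over $(X,Z^n)$ and combining with the mismatch term gives \eqref{eq:ub_f_div_E_multi}. The convex-in-the-second-argument variant is obtained by swapping the arguments of $D$ throughout.

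The main difficulty here is bookkeeping rather than analysis: one must realize $(M,W)$, $(M',W')$ and $\widetilde W$ on a common probability space so that, conditionally on $\{M=M'=m\}$, the posterior-sampled parameter $W'$ carries the within-family posterior law $P_{W|x,z^n,m}$, which is exactly the law borne by $(W,\widetilde W)$ conditionally on $\{M=m\}$. Granting that, the matched-index contribution is a sum weighted by $P_{M|x,z^n}(m)^2$, termwise dominated (since $D\ge 0$) by the same sum weighted by $P_{M|x,z^n}(m)$, i.e.\ by $\E[D(K_{Y|X,\widetilde W,M},K_{Y|X,W,M})\mid X,Z^n]$, while the mismatched-index contribution is precisely the model-index estimation error, controlled via Lemma~\ref{lm:posterior_error}. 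Everything else is a direct transcription of the argument behind Lemma~\ref{lm:ub_f_div}.
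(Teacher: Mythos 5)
Your proof is correct and follows essentially the same route as the paper: the first bound is the same mixture-plus-Jensen argument as Lemma~\ref{lm:ub_f_div}, and your split over $\{M'=M\}$ versus $\{M'\neq M\}$ with the diameter bound and Lemma~\ref{lm:posterior_error} applied to the zero-one loss is exactly the paper's decomposition of the sum over $m'$ into $m'=m$ and $m'\neq m$. Your step $P_{M|x,z^n}(m)^2\le P_{M|x,z^n}(m)$ is just the paper's bound $P_{M|X,Z^n}(m|x,z^n)\le 1$ in its matched-index term, so the two arguments coincide.
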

	\noindent 
	
	Theorem~\ref{prop:ub_E_div_multi} shows that under the multi-model family setting, the expected deviation consists of two parts: the first part can be related to the estimation error of the model parameters when the model index is correctly identified, which depends on the complexity of each model family; the second part is related to the penalty when the model index is wrongly identified, which depends on the overall complexity of the model class and the error probability of model index estimation.

	As an example, for linear regression with multiple model families, the predictive model in the $m$th family can be described as $K_{Y|x,w,m} = \mathcal N(w^\top \phi(x,m), \sigma^2)$, where $w\in\sW_m \subset \R^{d_m}$ is the model parameter vector and $\phi(x,m)\in\R^{d_m}$ is the feature vector of the observation $x$. We also assume that $X$ is independent of $(M,W)$.
	In this case,
	\begin{align}
		%\mathcal W_2^2(K_{Y|x,w',m'} , K_{Y|x,w,m}) &= (w'^\top\phi_{m'}(x) - w^\top\phi(x,m))^2 . \label{eq:W_linreg_multi} 
		%d_{\rm TV}(K_{Y|x,w',m'}, K_{Y|x,w,m}) &\le \frac{1}{2\sigma} |w'^\top\phi_{m'}(x) - w^\top\phi(x,m)| , \label{eq:TV_linreg} \\
		%\le \frac{1}{2\sigma}  \|\phi(x)\| \|w' - w\|  ,
		D_{\rm KL}(K_{Y|x,w',m'} \| K_{Y|x,w,m}) &= \frac{1}{2\sigma^2} \big(w'^\top\phi(x,m') - w^\top\phi(x,m)\big)^2 , \label{eq:KL_linreg_multi} 
		%\chi^2(K_{Y|x,w',m'} \| K_{Y|x,w,m}) &= \exp\Big\{\frac{1}{\sigma^2}(w'^\top\phi(x,m') - w^\top\phi(x,m))^2\Big\} - 1 .  \label{eq:chi2_linreg} 
	\end{align}
	and 
	\begin{align}
		{\rm diam}(\mathbb M, D_{\rm KL})  = \frac{1}{2\sigma^2}\max_{m\neq m' \in\sM} \sup_{w\in\sW_m,w'\in\sW_{m'}} \sup_{ x\in\sX}  
		\big( w'^\top \phi(x,m') - w^\top \phi(x,m)  \big)^2 .
	\end{align}
	From Theorem~\ref{prop:ub_MER_log_multi}, the chain rule of mutual information, Theorem~\ref{prop:ub_E_div_multi}, and the previous results on linear regression in the single model family, we have the following upper bounds for ${\rm MER}_{\log}$ for linear regression with multiple models:
		\begin{align}
			{\rm MER}_{\log}
			%&\le \frac{1}{\sigma^2}\E[\|\phi_M(X)\|^2] R_2(W|X,Z^n,M) + 2 {\rm diam}(\mathbb M, D_{\rm KL}) R_{01}(M|X,Z^n) ;
			&\le \frac{1}{2\sigma^2}\E[\|\phi_M(X)\|^2] R_2(W|Z^n,M) + H(M|Z^n)  
		\end{align}
		and
		\begin{align}
			{\rm MER}_{\log} \le   \frac{1}{\sigma^2}\E[\|\phi_M(X)\|^2] R_2(W|Z^n,M) + 2 {\rm diam}(\mathbb M, D_{\rm KL}) R_{01}(M|Z^n) 
		\end{align}
		where
		\begin{align}
			R_2(W|Z^n,M) = \sum_{m\in\sM} P_M(m) \E[{\rm tr}(C_{W|Z^n,m}) ]
		\end{align}
		with
		$
		C_{W|Z^n,m} = ({\sigma_{W,m}^{-2}}\mathbf{I}_{d_m} + {\sigma^{-2}} \mathbf\Phi_m \mathbf\Phi_m^\top )^{-1}
		$
		and $\mathbf\Phi_m = [\phi(X_1,m),\ldots,\phi(X_n,m)]$ being the $d_m\times n$ feature matrix for the $m$th model family.
We see that the MER consists of a part that depends on the minimum achievable model parameter estimation error given each model index, and a part that depends on the uncertainty of model index estimation and the ``diameter'' of the model class.

	\subsection{MER in nonparametric models}\label{sec:nonpara}      
	The definition of MER can also be extended to Bayesian learning under a nonparametric predictive model that can be specified in terms of a random process.
	Formally, consider the case where $F$ is a real-valued random process indexed by $x\in\sX$, and the predictive model is a probability transition kernel $K_{Y|F(X)}$. It is further assumed that $F$ is \textit{a priori} independent of $X$. 
	The observed data and the fresh pair are assumed to be generated from the joint distribution 
	\begin{align}\label{eq:joint_dist_nonpara}
		P_{F, Z^n, Z} = P_{F} \Big(\prod\limits_{i=1}^n  P_{Z_i|F}\Big) P_{Z|F} 
	\end{align}
	where $P_{Z_i|F} = P_{Z|F} = P_X  K_{Y|F(X)}$ for $i = 1,\ldots,n$.
	Two simple examples of the above model are 1) noiseless Gaussian process regression model, where $F$ is a Gaussian process with a mean function $m:\sX\rightarrow\R$ and a covariance function $k:\sX\times\sX\rightarrow\R$, and $Y=F(X)$; and 2) binary classification model with Gaussian process as a latent function \cite{GP_book}, where $F$ can be the same Gaussian process, and $K_{Y|F(X)}(1|f(x))=\sigma(f(x))$ with $\sigma(\cdot)$ being the logistic sigmoid function.
	
	In the same spirit in the parametric case, the MER under the above nonparametric model can be defined as
	\begin{align}\label{eq:def_MER_nonpara}
		{\rm MER}_\ell = R_\ell(Y|X,Z^n) - R_\ell(Y|F(X)) ,
	\end{align}
	where $R_\ell(Y|X,Z^n)$ and $R_\ell(Y|F(X))$ are defined according to the general definition of the Bayes risk in \eqref{eq:def_Brisk}, and correspond to \eqref{eq:RB_def} and \eqref{eq:RBW_def} respectively.
	For the log loss, using the fact that $H_\mu(Y|F(X),Z^n) = H_\mu(Y|F(X))$ and following the same argument as in Corollary~\ref{co:classification_log_mi_total}, we have
	\begin{align}
		{\rm MER}_{\rm log} &= I(F(X) ; Y|X,Z^n) \le \frac{1}{n} I(F(X);Y^n|X^n) .
	\end{align}
	For the quadratic loss,
	\begin{align}
		{\rm MER}_{2} &=  R_2(Y|X,Z^n) - R_2(Y|F(X)) .
	\end{align}
	In the special case of noiseless Gaussian process regression model, $R_2(Y|F(X))=0$, which implies
	\begin{align}
		{\rm MER}_{2} &=  R_2(F(X)|F(X_1),\ldots,F(X_n)) \\
		&= \E\big[k(X,X) - k(X,X^n)^\top \Sigma^{-1}(X^n) k(X,X^n)\big] 
	\end{align}
	where $k(X,X^n)$ is the covariance vector between $F(X)$ and $(F(X_1),\ldots,F(X_n))$, and $\Sigma(X^n)$ is the covariance matrix of $(F(X_1),\ldots,F(X_n))$.
	The above expression may be further analyzed using the eigenfunction expansion of the covariance function $k$ \cite{GP_book}.
	For the binary classification model with Gaussian process as the latent function, or more general models specified with non-Gaussian processes, the MER may not have a simple close-form expression.

	\section{Discussion}\label{sec:discuss}
	We have defined the minimum excess risk in Bayesian learning with respect to general loss functions, and presented general methods for obtaining upper bounds for this quantity. 
	How to lower-bound this quantity is left as an open problem.
	We would like to close the paper by discussing the following two aspects.
	
	\subsection{Tightness and utility of the results}
	Two methods for deriving upper bounds on the MER have been presented: one method relates the MER to $I(W;Y|X,Z^n)$; the other one relates it to $R_2(W|X,Z^n)$.
	
	With the precise asymptotic expansion of $I(W;Z^n)$, the first method is suitable for asymptotic analysis for a wide range of loss functions. Using this method, we have shown that for any bounded loss function, the MER scales with the data size $n$ as $O(\sqrt{1/n})$ in general, while for the log loss, the squared loss with bounded $Y$, and bounded loss under realizable models, this convergence rate can be improved to $O(1/n)$. When the model parameter lies in a compact subset of $\R^d$, or when the VC dimension of the generative function class is $d$, the MER bounds can also capture the dependence on $d$, as $O(\sqrt{d/n})$ or $O(d/n)$ in different settings. An MER lower bound of $\Omega(d/n)$ is derived in a follow-up work \cite[Theorem~10]{RD_MER21} for the cases where the excess risk of using $\Psi^*(X,W')$ as the plug-in decision rule is lower bounded by $\|W-W'\|^2$, which matches upper bounds in certain settings derived in this work. 
	
	The second method has the potential to provide us with nonasymptotic upper bounds.
	The only explicit expression for $R_2(W|X,Z^n)$ we have so far is for linear regression, for which we have derived order-optimal upper bound for both ${\rm MER}_{\log}$ and ${\rm MER}_2$.
	In order to obtain explicit upper bounds for problems beyond linear regression, e.g.\ logistic regression or nonlinear regression, we would need upper bounds on $R_2(W|X,Z^n)$, or other forms of minimum model parameter estimation error in these settings.
	Nevertheless, from the examples on logistic regression, linear regression, and nonlinear regression, we see that the MER upper bounds obtained from the second method depend on $n$ only through $R_2(W|X,Z^n)$. This explicitly shows how the model uncertainty translates to the epistemic uncertainty and contributes to the overall prediction uncertainty.
	The definition of MER provides such a principled way to define different notions of uncertainties in Bayesian learning,
	and its study guides the analysis and estimation of these uncertainties, which is an increasingly important direction of research with wide applications.

	\subsection{MER in Bayesian learning vs.\ excess risk in frequentist learning}
	The distinguishing feature of Bayesian learning is that the generative model of data is assumed to be drawn from a known model family according to some prior distribution. As a result, the MER in Bayesian learning is determined by how accurate the model can be estimated, and there is no notion of \emph{approximation error} unless the model family or the prior distribution is misspecified. This stands in contrast to the frequentist formulation of statistical learning where the data-generating model is assumed to be completely unknown, and the excess risk consists of an estimation error part and an approximation error part.
	
	In the discussion on the setting with multiple model families in Section~\ref{sec:multi_model}, it is shown that the MER there not only depends on the accuracy of the model parameter estimation within a fixed model, but also on the product of the error probability of model index estimation and a diameter-like term measuring the largest statistical distance among the predictive models.
	The latter quantity may be viewed as an analogue of the approximation error in the frequentist setting, as it
	upper-bounds the penalty incurred by a wrong estimate of the model index. Its impact
	on the MER vanishes as the data size increases though, as the error probability of model index estimation would eventually go to zero.
	
	Another connection to the frequentist learning would be the identical expressions shared by the MER-information relationship in Corollary~\ref{co:MER_mi_Z_W} and the generalization-information relationship in the frequentist setting of \cite[Theorem~1]{XuRaginsky17}. It shows the important roles played by information-theoretic quantities in the theory of statistical learning.

	\section*{Appendix}
	\appendix

	\section{Miscellaneous lemmas}
	\subsection{Regularity conditions for \eqref{eq:mi_expan}}\label{appd:cond_I(W;Z^n)}
	The regularity conditions for \eqref{eq:mi_expan} to hold are listed here for completeness. These conditions are drawn from  in \cite[Section~2]{Cla_Bar94}.
	Let $\sW \subset \R^d$ and assume that the densities of $P_{Z|w}$ exist with respect to the Lebesgue measure for all $w\in\sW$. Also assume the parameter space $\sW$ has a non-void interior and its boundary has a $d$-dimensional Lebesgue measure zero.
	
	\begin{enumerate}
		\item  The density $p_{Z|W}(z|w)$ is twice continuously differentiable in $w$ for almost every $z$. There exists $\delta(w)$ such that for every $j, k \in \{1, \dots, d\}$:
		\begin{align}
			\E \Big[\sup_{w': \|w' - w\|\leq \delta(w)}\Big| \frac{\partial^2}{\partial w_j' \partial w_k'} \log p_{Z|W}(Z|w')\Big|\Big]
		\end{align}
		is finite and continuous in $w$.
		In addition, for each $j \in \{1, \dots, d\}$:
		\begin{align}
			\E \Big[\Big| \frac{\partial}{\partial w_j} \log p_{Z|W}(Z|w)\Big|^{2+\zeta}\Big]
		\end{align}
		is finite and continuous, as a function of $w$, for some $\zeta>0$.
		
		\item Conditions on Fisher information matrix: define the matrix
		\begin{align}
			[I(w)]_{j, k} = \E \Big[\frac{\partial}{\partial w_j} \log p_{Z|W}(Z|w) \frac{\partial}{\partial w_k} \log p_{Z|W}(Z|w)\Big],
		\end{align}
		and 
		\begin{align}
			[J(w)]_{j, k} = \Big[\frac{\partial^2}{\partial w'_j\partial w'_k} D_{\rm{KL}} \Big(P_{Z|w} \| P_{Z|w'}\Big)\Big|_{w' = w}\Big];
		\end{align}	
		we have $I(w) = J(w)$. The matrix $I(w)$ is also positive definite.
		
		\item For $w\neq w'$, we have $P_{Z|w} \neq P_{Z|w'}$.
		
		\item The prior on $W$ is continuous and is supported on a compact subset of the interior of $\sW$.
	\end{enumerate}
	
	\subsection{A transportation inequality}\label{appd:pf_prop_genloss_MI}
	The following lemma is adapted from \cite[Lemma~4.18]{concen_ineq_BLM} and \cite[Theorem~2]{JiaoHanWeissman_bias_17}.

	\begin{lemma}\label{lm:DPQ_variational_gen}
		For distributions $P$ and $Q$ on a set $\sU$ and a function $f:\sU\rightarrow\R$, suppose there exists a function $\varphi$ over $(0,b)$ with some $b \in (0, \infty]$ such that
		\begin{align}
			\log \E_Q\big[e^{-\lambda (f(U) - \E_Q f(U) )}\big] \le \varphi(\lambda), \quad\forall\, 0< \lambda < b . \label{eq:DPQ_var_cond-}
		\end{align}
		Then
		\begin{align}\label{eq:DPQ_variational_gen_-}
			\E_Q[f(U)] - \E_P[f(U)] \le \varphi^{*-1}(D_{\rm KL}(P\| Q)) ,
		\end{align}
		where 
		\begin{align}\label{eq:Legendre_dual_def}
			\varphi^*(\gamma) &= \sup_{0< \lambda < b} \lambda \gamma - \varphi(\lambda) , \quad \gamma\in\R 
		\end{align}
		is the Legendre dual of $\varphi$ 
		and $\varphi^{*-1}$ is the inverse of $\varphi^*$, defined as
		\begin{align}
			\varphi^{*-1}(x) &= \sup\{\gamma\in\R:\varphi^*(\gamma)\le x\} , \quad x\in\R \label{eq:inv_Legendre_dual} .
		\end{align}
	\end{lemma}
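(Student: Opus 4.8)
The plan is to obtain the bound from the Donsker--Varadhan variational representation of the KL divergence, following \cite[Lemma~4.18]{concen_ineq_BLM}. Recall that for any measurable $g:\sU\to\R$ with $\E_Q[e^{g(U)}]<\infty$ one has $\E_P[g(U)] \le D_{\rm KL}(P\|Q) + \log\E_Q[e^{g(U)}]$. If $D_{\rm KL}(P\|Q)=+\infty$ (in particular if $P\not\ll Q$) the asserted inequality \eqref{eq:DPQ_variational_gen_-} is vacuous, so we may assume $P\ll Q$ and, as is implicit in \eqref{eq:DPQ_var_cond-}, that $\E_Q f(U)$ and $\E_P f(U)$ are finite.

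First I would fix $\lambda\in(0,b)$ and apply the displayed inequality with the choice $g(u) = -\lambda\big(f(u) - \E_Q f(U)\big)$. Bounding the resulting log-moment-generating term by $\varphi(\lambda)$ via the hypothesis \eqref{eq:DPQ_var_cond-} gives
\[
-\lambda\,\E_P[f(U)] + \lambda\,\E_Q[f(U)] \;\le\; D_{\rm KL}(P\|Q) + \varphi(\lambda),
\]
so that, writing $\Delta \deq \E_Q[f(U)] - \E_P[f(U)]$, we have $\lambda\Delta - \varphi(\lambda) \le D_{\rm KL}(P\|Q)$ for every $\lambda\in(0,b)$.

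Next I would take the supremum over $\lambda\in(0,b)$ on the left-hand side: by the definition \eqref{eq:Legendre_dual_def} of the Legendre dual this reads $\varphi^*(\Delta)\le D_{\rm KL}(P\|Q)$. Hence $\Delta$ lies in the set $\{\gamma\in\R:\varphi^*(\gamma)\le D_{\rm KL}(P\|Q)\}$, and the definition \eqref{eq:inv_Legendre_dual} of the generalized inverse $\varphi^{*-1}$ immediately yields $\Delta\le\varphi^{*-1}(D_{\rm KL}(P\|Q))$, which is \eqref{eq:DPQ_variational_gen_-}.

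The derivation is essentially bookkeeping, and there is no real obstacle; the only points meriting care are that (i) the hypothesis controls only the \emph{left} exponential tail of $f(U)-\E_Q f(U)$, so we restrict to $\lambda>0$ and accordingly obtain a one-sided bound on $\E_Q[f]-\E_P[f]$ rather than on its absolute value, and (ii) the final step uses nothing about $\varphi^*$ beyond its defining supremum, so no monotonicity, convexity, or continuity of $\varphi^*$ is needed to pass to $\varphi^{*-1}$.
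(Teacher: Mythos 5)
Your proposal is correct and follows essentially the same route as the paper's proof: both apply the Donsker--Varadhan variational formula with the test function $g(u)=-\lambda\big(f(u)-\E_Q f(U)\big)$, bound the log-moment term by $\varphi(\lambda)$ via \eqref{eq:DPQ_var_cond-}, take the supremum over $\lambda\in(0,b)$ to obtain $\varphi^*\big(\E_Q[f(U)]-\E_P[f(U)]\big)\le D_{\rm KL}(P\|Q)$, and conclude using the definition \eqref{eq:inv_Legendre_dual} of the generalized inverse. Your added remarks on the degenerate case $D_{\rm KL}(P\|Q)=\infty$ and on needing no structural properties of $\varphi^*$ are accurate but do not change the argument.
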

	\begin{proof}[Proof of Lemma~\ref{lm:DPQ_variational_gen}]
		The Donsker-Varadhan theorem states that
		\begin{align}
			D_{\rm KL}(P \| Q) = \sup_{g:\sU\rightarrow\R} \E_P[g(U)] - \log \E_Q[e^{g(U)}] ,
		\end{align}
		which implies that
		\begin{align}
			D_{\rm KL}(P \| Q) &\ge \sup_{0<\lambda<b} \lambda(\E_Q[f(U)] - \E_P[f(U)]) - \log \E_Q[e^{-\lambda(f(U)-\E_Q f(U))}] \\
			&\ge \sup_{0<\lambda<b}\lambda(\E_Q[f(U)] - \E_P[f(U)]) - \varphi(\lambda) \\
			&= \varphi^*(\E_Q[f(U)] - \E_P[f(U)]) .
		\end{align}
		Consequently, from the definition in \eqref{eq:inv_Legendre_dual},
		\begin{align}
			\E_Q[f(U)] - \E_P[f(U)] \le \varphi^{*-1}(D_{\rm KL}(P \| Q)) ,
		\end{align}
		which proves \eqref{eq:DPQ_variational_gen_-}.
	\end{proof}

	\subsection{Series with growth rate $\log n$}\label{appd:lm_sum_log_n}
	The following lemma is a restatement of \cite[Lemma~6]{Haussler_Opper_MI95}.
	\begin{lemma}\label{lm:sum_log_n}
	Suppose $(a_1,a_2,\ldots)$ and $(b_1,b_2,\ldots)$ are two sequences of real numbers such that $a_n = \sum_{i=1}^n b_i$ for all $n$. 
	Then
	\begin{align}
	\lim_{n\rightarrow\infty}\frac{a_n}{\log n} = \lim_{n\rightarrow\infty}n b_n ,
	\end{align}
	whenever both limits exist.
	\end{lemma}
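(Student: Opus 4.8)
The plan is to deduce the result from the Stolz--Cesàro theorem. It suffices to prove the following one-sided implication: if $L \deq \lim_{n\to\infty} n b_n$ exists (as a finite number), then $\lim_{n\to\infty} a_n/\log n$ exists and equals $L$. This already yields the lemma, since whenever both limits in the statement exist they must coincide with $L$ (the case in which only $\lim_n a_n/\log n$ exists is vacuous, as the hypothesis requires both to exist).

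Concretely, I would apply Stolz--Cesàro with $x_n = a_n$ and $y_n = \log n$. The sequence $(y_n)$ is strictly increasing and diverges to $+\infty$, so it is enough to evaluate $\lim_{n\to\infty}(x_{n+1}-x_n)/(y_{n+1}-y_n)$. Since $a_{n+1}-a_n = b_{n+1}$ and $\log(n+1)-\log n = \log(1+1/n)$, this ratio equals $b_{n+1}/\log(1+1/n)$, which I would rewrite as
\[
\frac{b_{n+1}}{\log(1+1/n)} = (n+1)b_{n+1}\cdot\frac{n}{n+1}\cdot\frac{1}{n\log(1+1/n)} .
\]
As $n\to\infty$ the first factor tends to $L$ by assumption, the second to $1$, and the third to $1$ because $n\log(1+1/n)\to 1$; hence the ratio tends to $L$, and Stolz--Cesàro gives $a_n/\log n\to L$.

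If one prefers a self-contained argument not invoking Stolz--Cesàro, the same conclusion follows by a direct $\eps$--$N$ estimate: given $\eps>0$, choose $N$ with $|n b_n - L|\le\eps$ for all $n\ge N$; then for $n>N$ one has $a_n = a_N + \sum_{i=N+1}^n b_i$ with $(L-\eps)\sum_{i=N+1}^n \tfrac1i \le \sum_{i=N+1}^n b_i \le (L+\eps)\sum_{i=N+1}^n \tfrac1i$, and dividing by $\log n$ while using $\sum_{i=N+1}^n \tfrac1i = \log n + O(1)$ shows that $\limsup_n a_n/\log n$ and $\liminf_n a_n/\log n$ both lie within $\eps$ of $L$; letting $\eps\downarrow 0$ completes the proof. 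I do not anticipate any genuine obstacle here: the only points needing a word of care are verifying the hypotheses of Stolz--Cesàro ($\log n$ strictly increasing and unbounded) and the elementary limit $n\log(1+1/n)\to 1$, both entirely routine.
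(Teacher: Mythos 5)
Your proof is correct. Note that the paper itself does not prove this lemma at all: it is stated as a restatement of Lemma~6 of Haussler and Opper (1995), with the proof left to that reference. Your argument therefore supplies a self-contained elementary proof where the paper relies on a citation. The reduction you make is sound: since the lemma's hypothesis is that both limits exist, it suffices to show that the existence of $L=\lim_n n b_n$ (finite) forces $a_n/\log n \to L$, and your Stolz--Ces\`aro application does exactly that, with $y_n=\log n$ strictly increasing and unbounded and the difference quotient
\begin{align}
\frac{a_{n+1}-a_n}{\log(n+1)-\log n} = (n+1)b_{n+1}\cdot\frac{n}{n+1}\cdot\frac{1}{n\log(1+1/n)} \to L .
\end{align}
The backup $\eps$--$N$ argument via $\sum_{i=N+1}^n \tfrac1i = \log n + O(1)$ (with the fixed term $a_N/\log n \to 0$) is likewise valid and has the advantage of being entirely self-contained. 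The only cosmetic caveat is that you treat $L$ as finite; if one wished to allow the limits to be $\pm\infty$, the Stolz--Ces\`aro form for infinite limits (or the same $\eps$--$N$ estimate with a threshold $M$ in place of $L\pm\eps$) covers that case as well, and in the paper's applications the limits are finite anyway, so nothing is lost.
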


	\subsection{Convexity of $\mathcal W_p^p(P,Q)$}\label{appd:cvx_W_p^p}
	%	The following result is needed when we apply Lemma~\ref{lm:ub_f_div} and Proposition~\ref{prop:ub_E_div_multi} to $\mathcal W_2^2(P_{Y|X,Z^n},K_{Y|X,W})$ and $\mathcal W_2^2(P_{Y|X,Z^n},K_{Y|X,W,M})$ respectively.
	\begin{lemma}\label{lm:cvx_W_p^p}
		The $p$th power of the $p$-Wasserstein distance is jointly convex in its two arguments, i.e. $\mathcal W_p^p(P,Q)$ is convex in $(P,Q)$.
	\end{lemma}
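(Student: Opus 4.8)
The plan is to verify the two-point inequality defining joint convexity directly, exploiting the fact that both the marginal constraints and the transport cost are \emph{linear} in the coupling. Fix $\lambda\in[0,1]$ and two pairs of Borel probability measures $(P_0,Q_0)$, $(P_1,Q_1)$ on $\R^m$ with finite $p$th moments, and set $P_\lambda\deq\lambda P_0+(1-\lambda)P_1$ and $Q_\lambda\deq\lambda Q_0+(1-\lambda)Q_1$; the goal is to show $\mathcal W_p^p(P_\lambda,Q_\lambda)\le\lambda\,\mathcal W_p^p(P_0,Q_0)+(1-\lambda)\,\mathcal W_p^p(P_1,Q_1)$.

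First I would pick, for each $i\in\{0,1\}$ and an arbitrary $\eps>0$, a coupling $\pi_i\in\Pi(P_i,Q_i)$ with $\E_{(X,Y)\sim\pi_i}[\|X-Y\|^p]\le\mathcal W_p^p(P_i,Q_i)+\eps$, which exists by the definition of the infimum (one could instead invoke an exact optimal coupling via the usual tightness-plus-lower-semicontinuity compactness argument for optimal transport, but working with $\eps$-minimizers sidesteps that and suffices). I would then form the mixture $\pi_\lambda\deq\lambda\pi_0+(1-\lambda)\pi_1$, a Borel probability measure on $\R^m\times\R^m$, and check that $\pi_\lambda\in\Pi(P_\lambda,Q_\lambda)$: for every Borel $A\subseteq\R^m$ one has $\pi_\lambda(A\times\R^m)=\lambda\pi_0(A\times\R^m)+(1-\lambda)\pi_1(A\times\R^m)=\lambda P_0(A)+(1-\lambda)P_1(A)=P_\lambda(A)$, and symmetrically for the second marginal, so the marginal constraints are inherited because they depend linearly on the measure.

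Since the cost functional $\pi\mapsto\E_{(X,Y)\sim\pi}[\|X-Y\|^p]$ is also linear, the cost of $\pi_\lambda$ equals $\lambda$ times the cost of $\pi_0$ plus $(1-\lambda)$ times the cost of $\pi_1$, hence is at most $\lambda\,\mathcal W_p^p(P_0,Q_0)+(1-\lambda)\,\mathcal W_p^p(P_1,Q_1)+\eps$. Because $\mathcal W_p^p(P_\lambda,Q_\lambda)$ is the infimum of the cost over \emph{all} couplings in $\Pi(P_\lambda,Q_\lambda)$, it is bounded above by the cost of this particular $\pi_\lambda$; letting $\eps\downarrow0$ yields the claimed inequality. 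There is really no hard step here — the argument is the standard ``glue the transport plans'' trick — and the only point needing a little care is the existence or approximation of optimal couplings, which is precisely why I would route the proof through $\eps$-optimal couplings rather than presuming a minimizer.
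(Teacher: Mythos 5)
Your proof is correct and follows essentially the same argument as the paper: form the mixture of couplings for the two pairs and use linearity of the marginal constraints and of the cost functional. The only (harmless) difference is that you work with $\eps$-optimal couplings and let $\eps\downarrow 0$, whereas the paper directly takes optimal couplings, so your version is in fact slightly more careful on the existence point.
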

	\begin{proof}
		By definition,
		\begin{align}
			\mathcal W_p^p(P,Q) = \inf_{\Pi(P,Q)} \E_{(X,Y)\sim\Pi} [\|X-Y\|^p] .
		\end{align}
		For arbitrary $(P_1,Q_1)$, $(P_2,Q_2)$, and $\gamma\in[0,1]$, let $\Pi_1$ and $\Pi_2$ be the optimal couplings for $\mathcal W_p^p(P_1,Q_1)$ and $\mathcal W_p^p(P_2,Q_2)$ respectively. Then
		\begin{align}
			& \mathcal W_p^p(\gamma P_1 + (1-\gamma) P_2,\gamma Q_1 + (1-\gamma) Q_2) \\
			\le & \E_{(X,Y)\sim \gamma\Pi_1 + (1-\gamma)\Pi_2} [\|X-Y\|^p] \\
			= & \gamma \E_{(X,Y)\sim \Pi_1 } [\|X-Y\|^p] + (1-\gamma) \E_{(X,Y)\sim \Pi_2 } [\|X-Y\|^p] \\
			= & \gamma \mathcal W_p^p(P_1,Q_1) + (1-\gamma) \mathcal W_p^p(P_2,Q_2) ,
		\end{align}
		where the first inequality is because $\gamma\Pi_1 + (1-\gamma)\Pi_2$ is a coupling of $\gamma P_1 + (1-\gamma) P_2$ and $\gamma Q_1 + (1-\gamma) Q_2$.
		This shows the convexity of $\mathcal W_p^p(P,Q)$ in $(P,Q)$.
	\end{proof}
	
	%\section{Proof of Lemma~\ref{lm:ub_f_div}}\label{appd:pf_lm_ub_f_div}
	
	\subsection{Lipschitz continuity of multivariate function}\label{appd:Lip_multi}
	The following lemma states a sufficient condition for a multivariate function to be Lipschitz continuous \cite{Waker_Lip_f}.
	% For completeness, we replicate the proof here.
	\begin{lemma}\label{lm:Lip_multi}
		Suppose a function $f:\R^n \rightarrow \R$ is continuously differentiable everywhere in a convex set $\sX \subset \R^n$.
		If $c>0$ is such that $\|\nabla f(x)\| \le c$ for all $x\in \sX$, then $|f(y) - f(x)| \le c \|y-x\|$ for all $x,y\in \sX$.
	\end{lemma}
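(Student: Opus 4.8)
The plan is to reduce the multivariate statement to the one-dimensional fundamental theorem of calculus by restricting $f$ to the line segment joining $x$ and $y$, and then to control the resulting one-dimensional derivative by the gradient bound via Cauchy--Schwarz. Concretely, fix $x,y\in\sX$ and define the path $\gamma:[0,1]\to\R^n$ by $\gamma(t)=x+t(y-x)$. Because $\sX$ is convex, $\gamma(t)\in\sX$ for every $t\in[0,1]$, so $f$ is continuously differentiable along the entire segment.

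Next I would introduce $g(t):=f(\gamma(t))$. Since $f$ is $C^1$ on $\sX$ and $\gamma$ is affine (hence $C^\infty$), the composition $g$ is continuously differentiable on $[0,1]$, and the chain rule gives $g'(t)=\langle \nabla f(\gamma(t)),\,y-x\rangle$. Applying the fundamental theorem of calculus to $g$ yields
\begin{align}
f(y)-f(x) = g(1)-g(0) = \int_0^1 g'(t)\,\intd t = \int_0^1 \langle \nabla f(\gamma(t)),\, y-x\rangle\,\intd t .
\end{align}

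Finally I would take absolute values, pull the modulus inside the integral by the triangle inequality for integrals, and bound the integrand pointwise: for each $t\in[0,1]$, Cauchy--Schwarz gives $|\langle \nabla f(\gamma(t)),\,y-x\rangle| \le \|\nabla f(\gamma(t))\|\,\|y-x\| \le c\,\|y-x\|$, using the hypothesis $\|\nabla f(z)\|\le c$ for all $z\in\sX$ together with $\gamma(t)\in\sX$. Integrating this constant bound over $t\in[0,1]$ gives $|f(y)-f(x)|\le c\,\|y-x\|$, as claimed.

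There is no serious obstacle here; the argument is entirely routine. The only points requiring a little care are invoking convexity of $\sX$ to guarantee that the whole segment lies in the domain (so that both the chain rule and the gradient bound may be applied along $\gamma$), and noting that $C^1$-smoothness of $f$ on $\sX$ is exactly what is needed for $g$ to be continuously differentiable on the closed interval $[0,1]$, justifying the use of the fundamental theorem of calculus.
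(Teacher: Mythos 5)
Your proposal is correct and follows essentially the same route as the paper's argument: restrict $f$ to the segment $x+t(y-x)$ (valid by convexity), apply the fundamental theorem of calculus with the chain rule, and bound the integrand via Cauchy--Schwarz and the gradient bound. No gaps.
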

	\begin{comment}
	\begin{proof}
	For any $x,y\in S$, define $x(t) = x + t(y-x)$, $0 \le t \le 1$. Then
	\begin{align}
	|f(y)-f(x)| &= \Big|\int_0^1 \frac{\rm d}{{\rm d}t}f(x(t)){\rm d}t \Big| \\
	&= \Big| \int_0^1 (y-x)^\top \nabla f(x(t)) {\rm d}t \Big| \\
	&= \Big|(y-x)^\top \int_0^1 \nabla f(x(t)) {\rm d}t   \Big| \\
	&\le  \|y-x \| \int_0^1 \|\nabla f(x(t))\| {\rm d}t   \\
	&\le c\|y-x\| ,
	\end{align}
	which proves the claim.
	\end{proof}
	\end{comment}

	%\section{Proofs of several lemmas in main text}
	%\subsection{Proof of Lemma~\ref{lm:decouple_rlz}}\label{appd:pf_rlz_mi}

	\subsection{Proof of Lemma~\ref{lm:H_cont_miny_TV}}\label{appd:pf_H_cont_miny_TV}
	With the log loss, the generalized entropy of discrete $Y$ is the Shannon entropy. We have
	\begin{align}
		H(P) - H(Q) 
		&= \E_P[-\log{P(U)}] - \E_Q[-\log{Q(U)}] \\
		&\le \E_P[-\log{Q(U)}] - \E_Q[-\log{Q(U)}] \\
		&= \sum_{u\in\sU} (P(u)-Q(u)) (-\log{Q(u)}) \\
		%&\le \sum_{u\in\sU} |P(u)-Q(u)| \Big|\log\frac{1}{Q(u)}\Big| \\
		&\le (-\log\min_{u\in\sU}Q(u)) d_{\rm TV}(P,Q) ,
	\end{align}
	where the first inequality follows from the fact that $H(P)=\inf_{Q} \E_P[-\log Q(U)]$, and the last inequality follows from the fact that $-\log Q(u)\in [0,-\log \min_{u\in\sU}Q(u)]$ and the dual representation of the total variation distance.
	
	For the zero-one loss, the generalized entropy of discrete $Y$ is one minus the maximum probability. We have
	\begin{align}
		&(1- \max\nolimits_{u\in\sY}P(u)) - (1-\max\nolimits_{u\in\sU}Q(u)) \nonumber  \\
		=& \max\nolimits_{u\in\sY}Q(u)- \max\nolimits_{u\in\sY}P(u) \\
		\le&  Q(u_{\max}) - P(u_{\max}) \label{eq:pf_cont_R01_0} \\
		\le&  d_{\rm TV}(Q, P ) \label{eq:pf_cont_R01_1} 
	\end{align}
	where in \eqref{eq:pf_cont_R01_0}, $u_{\max} \deq \argmax_{u\in\sY}Q(u)$; \eqref{eq:pf_cont_R01_1} follows from the fact that $d_{\rm TV}(Q,P) = \sup\nolimits_{E\subset\sU} Q[E] - P[E]$ for any pair of distributions on $\sU$. The claim follows from the fact that the total variation distance is symmetric.

	\subsection{Proof of Lemma~\ref{lm:Var_cont}}\label{appd:pf_lm_Var_cont}
	First note that according to the definition of the $\mathcal W_2$ distance, $\E[U^2] = \mathcal W_2^2 (P_U,\delta_0)$ and $\E[V^2] = \mathcal W_2^2 (P_V,\delta_0)$, where $\delta_0$ denotes the point mass at $0$. Then
	\begin{align}
		\Var[U] - \Var[V] %&= \E[(U-\E[U])^2] - \E[(V-\E[V])^2] \label{eq:pf_ub_var_1} \\
		&= \E[U^2] - \E[V^2] + (\E[U] + \E[V])(\E[V] - \E[U]) \\
		%&\le |\E[U^2] - \E[V^2]| + |\E[U] + \E[V]| \cdot |\E[U] - \E[V]| \label{eq:Var_decomp} \\
		&\le ( \mathcal W_2^2 (P_U,\delta_0) -  \mathcal W_2^2 (P_V,\delta_0)) + |\E[U] + \E[V]| \mathcal W_1(P_U, P_V) \label{eq:Var_cont_pf1} \\
		&\le ( \mathcal W_2 (P_U,\delta_0) +  \mathcal W_2 (P_V,\delta_0))| \mathcal W_2 (P_U,\delta_0) -  \mathcal W_2 (P_V,\delta_0)| + \nonumber \\
		& \quad\,\,\, |\E[U] + \E[V]| \mathcal W_2(P_U, P_V) \label{eq:Var_cont_pf2} \\
		&\le (\sqrt{\E[U^2]} + \sqrt{\E[V^2]}) \mathcal W_2 (P_U,P_V) + |\E[U] + \E[V]| \mathcal W_2(P_U, P_V) \label{eq:Var_cont_pf3} \\
		&\le 2(\sqrt{\E[U^2]} + \sqrt{\E[V^2]}) \mathcal W_2 (P_U,P_V) \label{eq:Var_cont_pf4} 
	\end{align}
	where we have used the triangle inequality for the $\mathcal W_2$ distance and the fact that 
	\begin{align}
		|\E[U] - \E[V]| \le \mathcal W_1(P_U, P_V) \le \mathcal W_2(P_U,P_V) .
	\end{align}

	\section{MER upper bound for linear regression based on Theorem~\ref{thm:regression_genloss_MI}}\label{appd:MER2_lin_mi}
	To make use of Theorem~\ref{thm:regression_genloss_MI} for linear regression with the quadratic loss, let $Y=W^\top \phi(X) + V$ where $V\sim \mathcal N(0,\sigma^2)$, and assume $X$ is independent of $W$. In addition, let $W'$ be sampled from $P_{W|Z^n}$ independently of everything else. 
	Since $\psi^*(X,W') = W'^\top \phi(X)$, we have
	\begin{align}
		(Y - \psi^*(X,W'))^2 &= (Y - W'^\top \phi(X))^2 
		%&= (W^\top \phi(X) + V - W'^\top \phi(X))^2 \\
		= ((W-W')^\top \phi(X) + V)^2 .
	\end{align}
	Since $W'$ is a conditionally i.i.d.\ copy of $W$ given $(X,Z^n)$, it can be seen that the conditional distribution of $(W-W')^\top \phi(X)$ given $(X,Z^n)=(x,z^n)$ is Gaussian with zero mean and variance $2\phi(x)^\top C_{W|z^n} \phi(x)$.
	It follows that conditional on $(X,Z^n)=(x,z^n)$, $(Y - \psi^*(x,W'))^2 $ has the same distribution as $\big(2\phi(x)^\top C_{W|z^n} \phi(x) + \sigma^2\big) U^2$, where $U\sim \mathcal N(0,1)$.
	As a consequence of the fact that
	\begin{align}
		\log \E[e^{-\lambda( \sigma_\chi^2 U^2 - \E[\sigma_\chi^2 U^2])}] 
		= \lambda\sigma_\chi^2 - \frac{1}{2} \log(1+2\sigma_\chi^2\lambda)
		\le \sigma_\chi^4 \lambda^2 \deq \varphi(\lambda) \quad \text{for $\lambda > 0$},
	\end{align}
	the fact that $\varphi^{*-1}(\gamma) = 2\sigma_\chi^2 \sqrt{\gamma}$,
	the assumption that $\sup_{x,x^n} \phi(x)^\top C_{W|z^n} \phi(x) \le b$, 
	the fact that $I(W;Y|X,Z^n) = \E[\frac{1}{2} \log (1 + {\phi(X)^\top C_{W|Z^n}\phi(X)}/{\sigma^2})]$,
	and Theorem~\ref{thm:regression_genloss_MI}, we have
	\begin{align}
		{\rm MER}_{2} 
		&\le 2(2b+\sigma^2) \sqrt{\frac{1}{2} \log \Big(1+\frac{1}{\sigma^2} \E\big[\phi(X) C_{W|Z^n} \phi(X)\big]\Big)} \\
		&\le 2(2b+\sigma^2) \sqrt{\frac{1}{2} \log \Big(1+\frac{1}{\sigma^2} \E[\|\phi(X)\|^2] \E[{\rm tr} (C_{W|X^n, Y^n})] \Big)} .
	\end{align}

	%\section{Proof of Proposition~\ref{prop:reg_NN}}\label{appd:pf_prop_reg_NN}

	\section{Proof of Theorem~\ref{prop:ub_E_div_multi}}\label{appd:pf_prop_ub_E_div_multi}

	From the fact that $P_{Y|x,z^n}=\sum_{m'\in\sM}P_{M|X,Z^n}(m'|x,z^n)  \int_{\sW_{m'}} \!\!  P_{W|M,X,Z^n}({\rm d}w'|m',x,z^n) K_{Y|x,w',m'}$ and the convexity assumption of the statistical distance under consideration, the proof of the first inequality essentially follows the same steps of the proof of Lemma~\ref{lm:ub_f_div}.
	
	The second inequality is based on the first one, and can be shown as
	\begin{align}
		%& \quad \E\big[ D(K_{Y|X,W',M'} , K_{Y|X,W,M}) \big]  \\
		& \quad\,\, \E\big[ D(P_{Y|X,Z^n} , K_{Y|X,W,M}) \big]  \\
		&\le \E\big[ D(K_{Y|X,W',M'} , K_{Y|X,W,M}) \big]  \\
		&= \sum_{m\in\sM} P_M(m) \int_{\sW_m}\!\! P_{W|M}({\rm d}w|m)\int_{\sX\times\sZ^n} P_{X,Z^n|W,M}({\rm d}x,{\rm d}z^n|w,m) \, \cdot \nonumber \\ 
		& \quad\, \sum_{m'\in\sM} P_{M|X,Z^n}(m'|x,z^n) \int_{\sW_{m'}} \!\! P_{W|X,Z^n,M}({\rm d}w'|x,z^n,m') D(K_{Y|x,w',m'} , K_{Y|x,w,m}) \\
		&= S_1 + S_2
	\end{align}
	where the last step is to split the summation over $m'$ such that
	\begin{align}
		S_1 &= \sum_{m\in\sM} P_M(m) \int_{\sW_m}P_{W|M}({\rm d}w|m)\int_{\sX\times\sZ^{n}} P_{X,Z^n|W,M}({\rm d}x,{\rm d}z^n|w,m) \nonumber \\ 
		& \qquad\quad\,\, P_{M|X,Z^n}(m|x,z^n) \int_{\sW_m} \!\! P_{W|X,Z^n,M}({\rm d}w'|x,z^n,m) D(K_{Y|x,w',m} , K_{Y|x,w,m}) \\
		&\le \sum_{m\in\sM} P_M(m) \int_{\sW_m} \! P_{W|M}({\rm d}w|m)\int_{\sX\times\sZ^{n}} P_{X,Z^n|W,M}({\rm d}x,{\rm d}z^n|w,m)  \nonumber \\ 
		&\qquad\qquad\qquad\,\,  \int_{\sW_m} \! P_{W|X,Z^n,M}({\rm d}w'|x,z^n,m) D(K_{Y|x,w',m} , K_{Y|x,w,m}) \\
		&= \E[D(K_{Y|X,W',M} , K_{Y|X,W,M})]
	\end{align}
	and
	\begin{align}
		S_2 &= \sum_{m\in\sM} P_M(m) \int_{\sW_m}P_{W|M}({\rm d}w|m)\int_{\sX\times\sZ^{n}} P_{X,Z^n|W,M}({\rm d }x, {\rm d}z^n|w,m) \,\cdot \nonumber \\ 
		& \quad \sum_{m' \neq m} P_{M|X,Z^n}(m'|x,z^n) \int_{\sW_{m'}} \!\!\! P_{W|X,Z^n,M}({\rm d}w'|x,z^n,m') D(K_{Y|x,w',m'} , K_{Y|x,w,m}) \\
		&\le \Big( \max_{m,m'\in\sM, m\neq m'} \sup_{w\in\sW_m,w'\in\sW_{m'}} \sup_{x\in\sX} D(K_{Y|x,w',m'} , K_{Y|x,w,m}) \Big)  \nonumber \\
		& \quad\, \sum_{m\in\sM} P_M(m) \int_{\sW_m}P_{W|M}({\rm d}w|m) \int_{\sX,\sZ^n} P_{X,Z^n|W,M}({\rm d}x, {\rm d}z^n|w,m) 
		\sum_{m' \neq m} P_{M|X,Z^n}(m'|x,z^n) \\
		&= {\rm diam}(\mathbb M,D) \PP[M'\neq M]  \\
		&\le 2 {\rm diam}(\mathbb M,D) R_{\text{01}}(M|X,Z^n)
	\end{align}
	where the last step follows from Lemma~\ref{lm:posterior_error} applied to the zero-one loss.

	\section*{Acknowledgment}
	The authors would like to thank Yihong Wu for insightful comments on an early draft of this work; Lemma~\ref{lm:Var_cont} is given by him.
	The authors are also thankful to Max Welling and Auke Wiggers for discussions on different notions of uncertainties in Bayesian learning.
	%, and to Homer Walker for clarifying the conditions on the Lipschitz continuity of multivariate functions. 
	M.~Raginsky was supported in part by the NSF CAREER award CCF-1254041, in part by the Illinois Institute for Data Science and Dynamical Systems (iDS2), an NSF HDR TRIPODS institute, under award CCF-1934986, in part by DARPA under the LwLL (Learning with Less Labels) program, and in part by ARO MURI grant W911NF-15-1-0479 (Adaptive Exploitation of Non-Commutative Multimodal Information Structure).

	\bibliography{aolin}

\end{document}